\documentclass{article}

\usepackage[nonatbib,final]{neurips_2021}

\usepackage[american]{babel}
\usepackage[utf8]{inputenc} %
\usepackage[T1]{fontenc}    %
\usepackage{csquotes}
\usepackage{hyperref}       %
\usepackage{url}            %
\usepackage{booktabs}       %
\usepackage{amsfonts}       %
\usepackage{nicefrac}       %
\usepackage{microtype}      %
\usepackage{xcolor}         %
\usepackage{array}          %
\usepackage{tikz}
\usetikzlibrary{calc,positioning,decorations.pathreplacing,shapes}

\usepackage{amsmath}
\usepackage{amsthm}
\usepackage{amssymb}
\PassOptionsToPackage{normalem}{ulem}
\usepackage{ulem}

\usepackage[hide=true,setmargin=true,marginparwidth=1.2in]{marginalia}

\usepackage{enumitem}
\usepackage{xspace}
\usepackage[noabbrev, capitalise]{cleveref}
\usepackage{graphicx}
\usepackage{subcaption}

\usepackage[%
		minnames=1,maxnames=99,maxcitenames=2,
		style=numeric-comp,
		sorting=none,
		sortcites=false, %
		doi=true,
		url=true,
		uniquename=init,
		giveninits=true,
		backend=biber,
		hyperref=auto,
		natbib=true]{biblatex}

\DeclareFieldFormat{bibentrysetcount}{\mkbibparens{\mknumalph{#1}}}
\DeclareFieldFormat{labelnumberwidth}{\mkbibbrackets{#1}}

\renewbibmacro{in:}{%
	\ifentrytype{article}{}{\printtext{\bibstring{in}\intitlepunct}}}
\renewbibmacro{in:}{%
	\ifentrytype{inproceedings}{}{\printtext{\bibstring{in}\intitlepunct}}}

\DeclareNameAlias{sortname}{family-given}

\addbibresource{infinite.bib}

\theoremstyle{plain}
\newtheorem{thm}{\protect\theoremname}
  \theoremstyle{plain}
  \newtheorem{prop}[thm]{\protect\propositionname}
  \theoremstyle{plain}
  \newtheorem{cor}[thm]{\protect\corollaryname}
  \theoremstyle{remark}
  \newtheorem{rem}[thm]{\protect\remarkname}
  \theoremstyle{plain}
  \newtheorem{conjecture}[thm]{\protect\conjecturename}
  \theoremstyle{plain}
  \newtheorem{lem}[thm]{\protect\lemmaname}
  \theoremstyle{plain}

\makeatother

\usepackage{mathtools}

\usepackage{contour}
\usepackage[normalem]{ulem}

\contourlength{0.8pt}

\renewcommand{\underline}[1]{%
  \uline{\phantom{#1}}%
  \llap{\contour{white}{#1}}%
}

\definecolor{mylinenocolor}{rgb}{0.65,0.65,0.65}

\definecolor{OliveGreen}{rgb}{0,0.6,0}
\newcommand{\mufan}[1]{#1}

\usepackage{babel}
  \providecommand{\conjecturename}{Conjecture}
  \providecommand{\corollaryname}{Corollary}
  \providecommand{\lemmaname}{Lemma}
  \providecommand{\propositionname}{Proposition}
  \providecommand{\remarkname}{Remark}
\providecommand{\theoremname}{Theorem}

\global\long\def\p{\mathbf{P}}

\global\long\def\cov{\mathbf{Cov}}
\global\long\def\var{\mathbf{Var}}
\global\long\def\corr{\mathbf{Corr}}
\global\long\def\e{\mathbf{E}}
\global\long\def\one{\mathtt{1}}

\global\long\def\norm#1{\left\Vert #1\right\Vert }
\global\long\def\abs#1{\left|#1\right|}
\global\long\def\given#1{\left|#1\right.}

\global\long\def\bN{\mathbb{N}}

\global\long\def\bR{\mathbb{R}}
\global\long\def\bS{\mathbb{S}}

\global\long\def\cA{\mathcal{A}}

\global\long\def\cN{\mathcal{N}}

\global\long\def\diag{\text{diag}}

\global\long\def\half{\frac{1}{2}}

\global\long\def\al{\alpha}
\global\long\def\be{\beta}

\global\long\def\ep{\epsilon}

\global\long\def\th{\theta}

\global\long\def\la{\lambda}

\global\long\def\si{\sigma}

\global\long\def\vp{\varphi}
\global\long\def\vpp{\varphi_+}
\global\long\def\ch{\chi}

\global\long\def\dequal{\stackrel{d}{=}}
\global\long\def\defequal{\coloneqq}

\global\long\def\di{\partial}

\global\long\def\nin{{n_\text{in}}}
\global\long\def\nout{{n_\text{out}}}
\global\long\def\zout{z^{\text{out}}}
\global\long\def\Wout{W^{\text{out}}}
\global\long\def\opon{\left(1+O\left(\frac{1}{n}\right)\right)}

\newcommand{\defn}[1]{\emph{#1}}
\newcommand*\samethanks[1][\value{footnote}]{\footnotemark[#1]}

\title{The Future is Log-Gaussian: ResNets
and Their Infinite-Depth-and-Width Limit at Initialization}

\author{%
  Mufan (Bill)~Li\thanks{Equal contribution authors.} \\
  University of Toronto,\\
  Vector Institute \\
  \texttt{mufan.li@mail.utoronto.ca} 
   \And
   Mihai~Nica\samethanks \\
   University of Guelph,\\
   Vector Institute \\
  \texttt{nicam@uoguelph.ca} 
   \And
   Daniel M.~Roy \\
   University of Toronto,\\
   Vector Institute \\
  \texttt{droy@utstat.toronto.edu} 
}

\begin{document}

\maketitle

\begin{abstract}
Theoretical results show that neural networks can be approximated by Gaussian processes in the infinite-width limit. However, for fully connected networks, it has been previously shown that for any fixed network width, $n$, the Gaussian approximation gets worse as the network depth, $d$, increases. Given that modern networks are deep, this raises the question of how well modern architectures, like ResNets, are captured by the infinite-width limit. To provide a better approximation, we study ReLU ResNets in the infinite-depth-and-width limit, where \emph{both} depth and width tend to infinity as their ratio, $d/n$, remains constant. In contrast to the Gaussian infinite-width limit, we show theoretically that the network exhibits log-Gaussian behaviour at initialization in the infinite-depth-and-width limit, with parameters depending on the ratio $d/n$. Using Monte Carlo simulations, we demonstrate that even basic properties of standard ResNet architectures are poorly captured by the Gaussian limit, but remarkably well captured by our log-Gaussian limit. Moreover, our analysis reveals that ReLU ResNets at initialization are hypoactivated: fewer than half of the ReLUs are activated. Additionally, we calculate the interlayer correlations, which have the effect of exponentially increasing the variance of the network output. Based on our analysis, we introduce \emph{Balanced ResNets}, a simple architecture modification, which eliminates hypoactivation and interlayer correlations and is more amenable to theoretical analysis.
\end{abstract}

\section{Introduction}

The characterization of infinite-width dynamics of gradient descent (GD)
in terms of the so-called Neural Tangent Kernel (NTK) 
\citep{jacot2018neural,du2019gradient,allen2019convergence,zou2020gradient,chizat2019lazy,lee2019wide,yang2019scaling,yang2020tensor,arora2019exact,chen2021how}
represented a major breakthrough in our understanding of %
deep learning in the large-width regime.
Before the identification of infinite-width limits,
the theoretical study of deep learning had long been hindered by the apparent analytical intractability of gradient descent and variants acting on the nonconvex objectives used to train neural networks.
Despite this progress, evidence suggests that deep neural networks can outperform their infinite-width limits in practice \citep{arora2019harnessing}, particularly when the depth of the network is large.
These observations motivate the study of other approximations that may close the gap.

Several alternative limits have been proposed. Around the time of the discovery of the NTK limit,
mean-field limits were also characterized \citep{RVE18,cb18,SS18, MeiE7665}, and more recently have been linked with the NTK limit \citep{mei19}.
\citet{featurelearning} describe a family of infinite-width limits indexed by the scaling limits of initial weight variance, weight rescaling, and learning rates.
This family includes both the NTK and mean field limits. One motivation for studying these alternative limits is that they yield a notion of feature learning, which provably does not occur in the NTK limit \citep{featurelearning}.

Despite the variety of these limits, one common feature is that the \emph{depth} of the network (i.e., the number of its layers) is treated as a constant as the width of the network is allowed to grow.
Indeed, for fixed width, $n$, the Gaussian approximations at initialization \citep{neal1995bayesian,yang2017mean,lee2018deep,de2018gaussian,novak2018bayesian,yang2019tensor}
worsen as the depth, $d$, increases.
While real-world networks are fairly wide, their relative depth is not trivial.
\citet{hanin2019products} were the first to compute an infinite-depth-and-width limit for fully connected networks. While the training dynamics of this limit are still not completely understood, we now know that, in
the infinite-depth-and-width limit, the neural tangent kernel is random, and the derivative of the kernel is nonzero at initialization. This means training does not correspond to that of a linear model, like it does in the NTK limit
\citep{hanin2019finite,seleznova2020analyzing}.

\begin{figure}[t]
\centering
\includegraphics[width=0.95\linewidth]{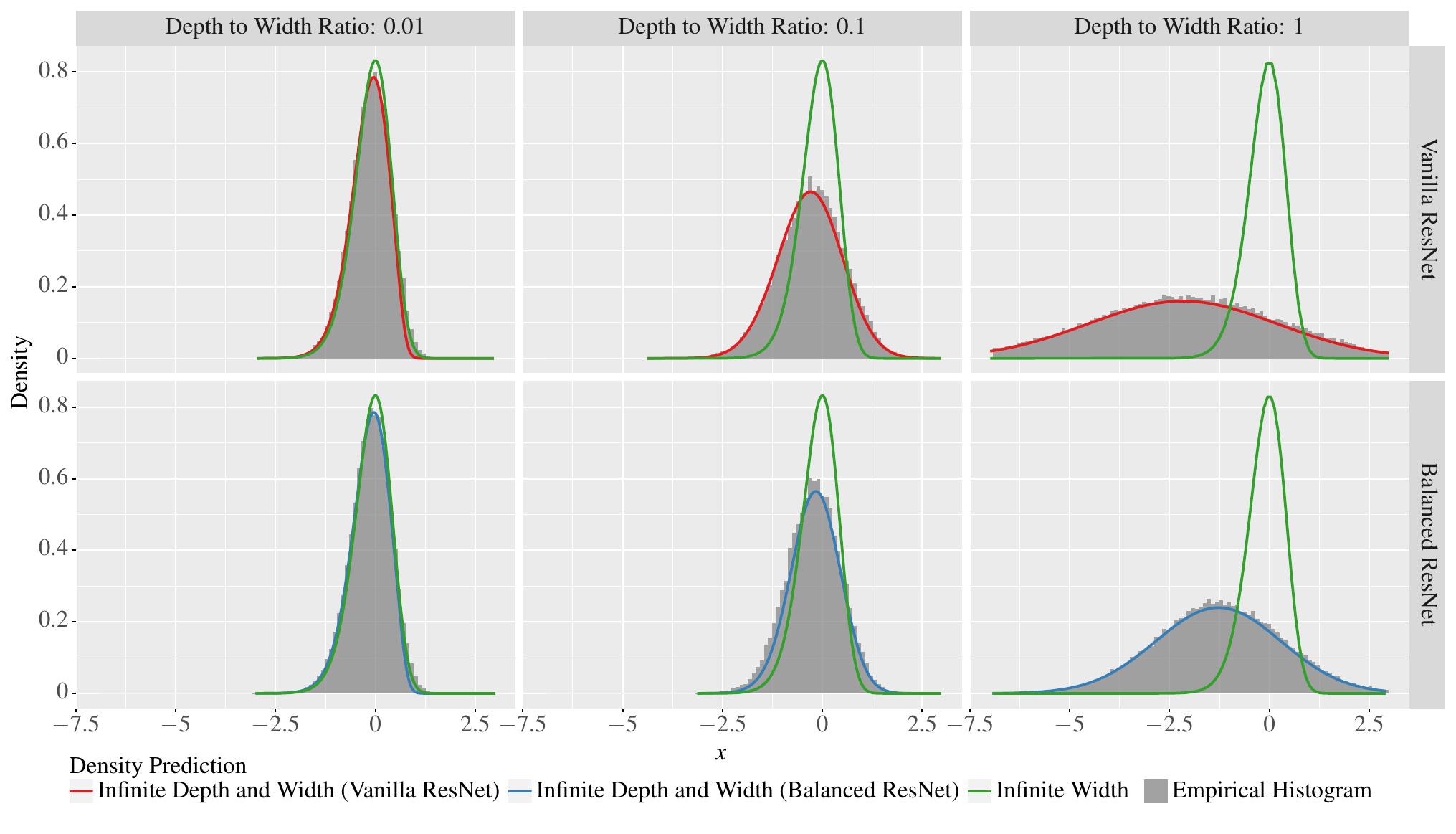}
\caption{ {\bf Probability density function of } $\ln \norm{ \zout}^2$ for six network configurations on initialization. All networks have $n=100, \nin=\nout=10, \alpha = \lambda = 1/\sqrt{2}$. Depth $d$ varies by column. Top row: Vanilla ReLU ResNets. Bottom Row: Balanced ResNets (\cref{subsec:balanced_resnets}), which randomize the non-linearities used at each neuron thereby exponentially reducing the variance. The theoretical curves shown are the result of \cref{prop:main} and \cref{prop:balanced_resnets_result}. These predictions converge to the infinite-width prediction when $d/n \to 0$.
The details of the simulations and plot here is in \cref{sec:experiment}.
}
\label{fig:density_merge}
\end{figure}

In addition to these theoretical corrections to the Gaussian limit, practitioners have begun to notice that even basic properties of standard neural networks do not line up with those predicted by the infinite-width limit. Perhaps the most basic is that the gradients of the network early in training are not Gaussian, but instead are approximately log-Gaussian \citep{lognormalgrad}. 
In fact, one should see log-Gaussian behaviour agrees with the theoretical predictions of \citet{hanin2019products}, although there has yet to be a careful empirical comparison made between the precise predictions coming from infinite-depth-and-width models and real world networks.

In practice, however, fully connected networks are not often used without architectural modifications. In particular, residual connections, ushered into widespread use after the description of the ResNet architecture \citep{he2016deep}, produced very deep architectures that were practically useful with optimization techniques available at the time. 
\mufan{Initialization schemes for ResNets have been studied in the infinite-width limit \citep{yang2017mean,hayou2021stable} or with modifications \citep{zhang2018fixup,blumenfeld2020beyond,arpit2019initialize}.}

\mufan{In this work, we consider the infinite-depth-and-width limit of fully connected architectures with residual connections (``Vanilla ResNets'') and standard initializations. Here, the analysis is complicated by the effect of skip connections,} 
which introduce interlayer correlations that have a non-negligible effect in this limit. 
\mufan{Surprisingly, we observe a counter-intuitive but fundamental phenomenon, whereby these skip connection cause the network to be \textbf{hypoactivated}, meaning that less than half of the neurons are activated on initialization. This fact undermines key assumptions that underlie other infinite-depth-and-width limit studies of architectures without residual connections or with nonstandard modifications, such as post-activation residual connections.}

\mufan{Hypoactivation is a roadblock that all theoretical research into standard ResNet architectures must contend with---it is an unavoidable property of the standard architecture and the root of many technical difficulties.} 
\mufan{In order to sidestep this roadblock, we introduce a conjecture that bounds the size of hypoactivation and effect on interlayer correlation. The conjecture is inspired by empirical evidence from Monte Carlo simulations, as well as several simplified analyses that ignore certain technical difficulties. The conjecture introduces what we believe to be the minimal assumption necessary to allow rigorous theoretical work to proceed. It also defines an important open problem in the study of limits of residual architectures.}

\mufan{To demonstrate the utility of the conjecture, we 
show that it leads to precise predictions.
In particular, we prove a limit theorem characterizing the exact marginal distribution of the output at initialization in the infinite-depth-and-width limit, up to order $O(dn^{-2})$.} 
Our limit result shows that ResNets have
\defn{log-Gaussian behaviour}
on initialization, and like fully connected networks \citep{hanin2019products}, the behaviour is determined by the depth-to-width aspect ratio. This corroborates recent empirical observations about deep ResNets \citep{lognormalgrad}.
Since real world networks are finite, the question of how well this approximates finite behaviour is of paramount importance. Based on Monte Carlo simulations, we find excellent agreement between our predictions and finite networks (see \cref{fig:density_merge}). 
Moreover, for very deep networks (e.g. $d/n=1$) the infinite-depth-and-width prediction is extremely different than the infinite-width prediction. More surprisingly, however, is that even at comparatively small depth-to-width ratio (e.g. $d/n = 0.1$) the two limits are already significantly different. 
Furthermore, we also observe that the effects due to hypoactivation and interlayer correlation are non-negligible; these effects are precisely the difference between Vanilla ResNets and so-called ``Balanced Resnets'' in \cref{fig:density_merge}.
Perhaps most importantly, we observe that real network outputs exhibit exponentially larger variance than predicted by infinite-width limits. This type of variance at initialization is known to cause exploding and vanishing gradients and other types of training failures \cite{hanin2018start,hanin2018neural}.
Our result also implies that the output neurons of the network are not independent as predicted by the Gaussian infinite-width limit. See \cref{fig:out_corr}.

In order to maintain the same skip connections from layer to layer, but render the activation patterns completely independent from layer to layer on initialization,
we introduce the Balanced ResNet architecture,
where the sign of each neuron's activation function is randomized. We demonstrate that this exponentially decreases the variance of the network output on initialization. Moreover, this independence between neurons also makes the model more amenable to  theoretical analysis and opens the door to future understanding of network behaviour. Although it is beyond the scope of this paper, a small preliminary empirical study (\cref{sec:experiment}) suggests that standard training regimes are not negatively affected by replacing standard ResNet architectures with Balanced ones.

We summarize our main contributions as follows: 
\begin{itemize}[leftmargin=1em,itemsep=2pt,parsep=2pt]    %
    \item \mufan{We identify and characterize a fundamental property of ResNets that we call \textbf{hypoactivation}: less than half of the ReLU neurons are activated. Based on empirical evidence, we formulate a precise minimal conjecture bounding the effect of hypoactivation that 
    permits us to make precise, rigorous estimates for other properties of  ResNets.} 
    \item We prove a limit theorem which shows that the output of ResNets on initialization exhibits {\bf log-Gaussian} behaviour with parameterization depending on the depth-to-width ratio $d/n$. %
    \item We provide {\bf empirical evidence} from Monte Carlo simulations showing our theory provides more accurate predictions for simple properties of finite networks
    compared to the predictions made by the Gaussian infinite-width limit.
     \item We introduce the \textbf{\defn{Balanced ResNet} architecture},
    which corrects the hypoactivation and variance due to layerwise correlation from Vanilla ResNets. We also prove that the output for this architecture is log-Gaussian on initialization with exponentially lower variance. This simple modification can be applied to \emph{any} neural network that uses ReLU activations.
\end{itemize}
\nopagebreak

\nopagebreak
\section{Main Results}
\label{sec:main_results}

In terms of the notation in Table~1, a {\bf Vanilla ResNet} with fully connected first/last layers and $d$ hidden layers of width $n$ is defined by
\begin{equation}\label{eq:defn_resnet}
z^{0} \defequal \frac{1}{\sqrt{\nin}}W^0 x, \hspace{0.8em} z^{\ell} \defequal\al z^{\ell-1}+\la \sqrt{\frac{2}{n}} W^\ell {\vpp\left(z^{\ell-1}\right)}\text{ for }1\leq \ell\leq d, \hspace{0.8em}
\zout \defequal \frac{1}{\sqrt{n}}\Wout z^d \,.
\end{equation}
Note that factors of $\sqrt{2n^{-1}}$ in the hidden layer are equivalent to intializing according to the so-called He initialization \citep{he2015delving}. Other intializations correspond to changing the coefficient $\la$. This setup is similar to that of ``Stable ResNets'' \citep{hayou2021stable}, where the infinite-width limit is studied.

\begin{table}[t]\label{tab:notations}
\begin{tikzpicture}%
\node[draw,anchor=north east,inner sep=0,outer sep=0,inner sep=3pt] (tab) at (0,0) {
\small\centering
\begin{tabular}{>{\raggedright}p{0.11\textwidth}>{\raggedright}p{0.245\textwidth}>{\raggedright}p{0.145\textwidth}>{\raggedright}p{0.355\textwidth}}
\textbf{Notation} & \textbf{Description} & \textbf{Notation} & \textbf{Description}\tabularnewline[2pt]
$\nin \in \bN$ & Input dimension & $\nout \in \bN$ & Output dimension \tabularnewline
$n \in \bN $ & Hidden layer width & $d \in \bN$ & Number of hidden layers (depth) \tabularnewline
$\vpp(\cdot)$ & ReLU function\\ \ \ $\vpp(x) = \max\{x,0\}$ &
$\vp_{-}(\cdot)$ & ``Domain Flipped'' ReLU\\ \ \ $\vp_-(x) = \max\{-x,0\}$\tabularnewline
$\alpha \in \bR$ & Skip connection coefficient  & $\lambda \in \bR^+$ & Feed-forward coefficient \tabularnewline
$x \in \bR^\nin$ & Input  & $W^0 \in \bR^{\nin \times n}$ & Weight matrix at layer 0
\tabularnewline
$\zout \in \bR^\nout$ & Network output & \rlap{\smash{$\Wout \in \bR^{n \times \nout}$}} & Weight matrix at final layer.
\tabularnewline[-1.2em]
$z^\ell \in \bR^n$ & Neurons (pre-activation) \\ \ \ for layer $1\leq \ell \leq d$
  &  $W^\ell \in \bR^{n \times n}$ & Weight matrix at layer $1 \leq \ell \leq d$\\ \ \ all weights initialized i.i.d.
$\sim \cN(0,1)$
\end{tabular}
};
\node[draw,align=right,anchor=north east,fill=white,outer sep=0] (tabcap) at (0,0) {\scalebox{0.85}{Table 1: Notation}};
\end{tikzpicture}
\end{table}
In the infinite-depth-and-width limit,
the intuition that half of the ReLU units are active (i.e. nonzero) because of symmetry is surprisingly not correct.
We find that the following quantities play an important role. We define the {\bf average hypoactivation (of layer $\ell$)}
and the {\bf total hypoactivation of the network} by %
\begin{equation} \label{eq:hypoactivation}
    h_\ell \defequal \e\left[\norm{\vpp\left(\hat{z}^\ell\right)}^2\right] - \half \,,
    \qquad
    \textstyle
    h_\text{total} \defequal \sum_{\ell=1}^d h_\ell \,,
\end{equation}
where $\hat{z}^\ell \defequal z^\ell/\norm{z^\ell}$ and $\e$ means expectation over the choice of random network weights on initialization.
The average hypoactivation is a measure of how many ReLU neurons are activated in layer $\ell$; $h_\ell = 0$ indicates roughly half of the neurons are active.
Counter-intuitively, we observe that in a vanilla ResNet, $h_\ell$ is \emph{negative} and $|h_\ell| = O(1/n)$,
indicating slightly less than half the neurons are active.\footnote{
For a quantity $f=f(n,d)$ whose dependence on width and depth may be implicit,
we use the notation $f=O(d^a/n^b)$ to mean that, for all choice of constants $\al, \la, r_{-},r_{+} > 0$, there exists a constant
$C >0$
such that $\abs{f(n,d)}\leq C {d^a}/{n^b}$ for all $d,n$ where $r_{-} < {d}/{n} < r_{+}$.
This notation will allow us to state precise limit theorems when $d,n\to\infty$ with the ratio ${d}/{n}$ converging to a constant.
}
After compounding over $d$ layers, the total hypoactivation is of order $h_{\text{total}} = O(d/n) = O(1)$ in the infinite-depth-and-width limit. As we will see, this effect has a non-trivial contribution.

At the same time, we also find the covariance between the activations of various layers does not vanish in the infinite-depth-and-width limit.
This motivates the definition of the {\bf total interlayer covariance correction}
\begin{equation}
    I_\text{total}
    \defequal \textstyle{\sum_{1 \le \ell \neq \ell^{\prime} \le d}}\,
    \cov\left( 2\norm{\vpp\big(\hat{z}^\ell\big)}^2, 2\|\vpp\big(\hat{z}^{\ell^\prime}\big)\|^2 \right).
\end{equation}
As with the hypoactivation, skip connections cause this term to be non-trivial.
We formulate Conjecture \ref{conj:hypoactivation},
which contains a precise encapsulation of the behaviour of $\hat{z}^\ell$ we observe.

{\bf Conjecture \ref{conj:hypoactivation} (Informal).}
In expectation, the layers $\hat{z}^\ell$ can be approximated by uniform random variables from the sphere up to a relative error of $O(1/n)$.

The conjecture is well supported by Monte-Carlo simulations (see \cref{fig:hypo_conj}).
We provide a more detailed discussion and a precise statement of \cref{conj:hypoactivation} in \cref{sec:hypo_and_corr}.
Assuming the conjecture holds, we prove a limit theorem about the distribution of $\zout$. Informally, this says that $\zout$ is {\bf approximately a log-Gaussian scalar} times an independent Gaussian vector
\begin{equation}
    \zout \approx \frac{\norm{x}}{\sqrt{\nin}} \left(\al^2 + \la^2\right) ^{\frac{d}{2}}  \exp\left( \half \cN\left(-\frac{\beta}{2} + 2c h_\text{total},\beta + c^2I_\text{total} \right)\right)  \vec{Z},
\end{equation}
where $\vec{Z}$ has iid $\cN(0,1)$ entries,
and $\beta$ and $c$ are defined by
\begin{equation} \label{eq:beta_c}
\beta \defequal \frac{2}{n} + \frac{d}{n}\cdot \frac{5\la^4 + 4\al^2 \la^2}{(\al^2+\la^2)^2}, \hspace{1em} c \defequal \frac{\la^2}{\al^2 +\la^2}.
\end{equation}
The precise statement, including asymptotic error bounds, is as follows.

\begin{thm} \label{prop:main}
For any choice of hyperparameters $\nin,\nout,n,d,\alpha,\lambda$, and every input $x$, the output $\zout$ at initialization has a marginal distribution which can be written in the form
\begin{equation}\label{eq:main}
    \zout \dequal \frac{\norm{x}}{\sqrt{\nin}} \left(\al^2 + \la^2\right) ^{\frac{d}{2}} \exp\left(\half G \right) \vec{Z},
\end{equation}
where $\vec{Z}\in \bR^{\nout}$ is a Gaussian random vector with iid $\cN(0,1)$ entries, $G=G(n,d,\al,\la)$ is a random variable which is independent of $\vec{Z}$ and whose distribution does not depend on $\nin,\nout$ or $x$.

Consider the limit where \emph{both} the network depth $d \to \infty$ and hidden layer width $n \to \infty$ in such a way that the ratio $d/n$ converges to a non-zero constant. In this limit, assuming that Conjecture \ref{conj:hypoactivation} holds, then the random variable $G \in \bR$ has the following asymptotic behaviour:
\begin{align}
\label{eq:E}
    \e\left[G\right]
    =
        -\frac{\beta}{2} + 2c h_\text{total} + O\left( \frac{d}{n^{2}}\right) , \hspace{1em}
    \var\left[G\right]
    =
        \beta + c^2 I_{\text{total}}  + O\left( \frac{d}{n^{2}}\right),
\end{align}
where $\beta$ and $c$ are as in \eqref{eq:beta_c}, and moreover $G$ converges in distribution to a Gaussian random variable with mean and variance given by \eqref{eq:E}  in this limit.

\end{thm}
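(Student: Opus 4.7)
The proof proceeds in two stages: first I would establish the exact distributional representation \eqref{eq:main}, then analyze the scalar random variable $G$ in the prescribed limit. For the representation, I would condition on the hidden-layer output $z^d$: since $\zout = n^{-1/2}\Wout z^d$ and $\Wout$ has i.i.d.\ $\cN(0,1)$ entries independent of $z^d$, the conditional law of $\zout$ given $z^d$ is $\cN(0, n^{-1}\norm{z^d}^2 I_\nout)$, so $\zout \dequal \frac{\norm{z^d}}{\sqrt n}\vec Z$ with $\vec Z\sim\cN(0,I_\nout)$ independent of the hidden weights. Spherical symmetry of $z^0 = \nin^{-1/2}W^0 x$ together with scale-covariance of the hidden recursion imply the law of $\nin\norm{z^d}^2/\norm x^2$ depends only on $n,d,\al,\la$, so setting
\begin{equation*}
    G \defequal \log\!\Bigl(\nin\norm{z^d}^2\big/\bigl[n\norm x^2(\al^2+\la^2)^d\bigr]\Bigr)
\end{equation*}
produces \eqref{eq:main} with $G$ independent of $\vec Z$ and of $\nin,\nout,x$.

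To study $G$ I would reduce to a one-dimensional recursion for $r_\ell\defequal\norm{z^\ell}^2$. Writing $q_\ell\defequal\norm{\vpp(\hat z^\ell)}^2$ and conditioning on $z^{\ell-1}$, the Gaussianity of $W^\ell\vpp(z^{\ell-1})$ gives conditionally independent $\chi_\ell^2\sim\chi^2(n)$ and $Y_\ell\sim\cN(0,1)$ for which
\begin{equation*}
    \frac{r_\ell}{r_{\ell-1}} = \al^2 + 2\la^2 q_{\ell-1} + \tfrac{2\la^2 q_{\ell-1}}{n}(\chi_\ell^2 - n) + 2\al\la\sqrt{\tfrac{2q_{\ell-1}}{n}}\,Y_\ell.
\end{equation*}
Together with $r_0 = (\norm x^2/\nin)\chi_0^2$ for an independent $\chi_0^2\sim\chi^2(n)$, telescoping the logarithm gives $G = \sum_{\ell=1}^d T_\ell + \log(\chi_0^2/n) - d\log(\al^2+\la^2)$ where $T_\ell\defequal\log(r_\ell/r_{\ell-1})$.

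For the mean and variance I would write $T_\ell = \log\si_{\ell-1} + \log(1+A_\ell)$, with $\si_{\ell-1}\defequal\al^2+2\la^2 q_{\ell-1}$ and $A_\ell$ the conditionally mean-zero, $O_p(n^{-1/2})$ noise extracted from the recursion. Taylor-expanding $\log(1+A_\ell)$ in $A_\ell$ and $\log\si_{\ell-1}$ around $q_{\ell-1}=\tfrac12$, Conjecture~\ref{conj:hypoactivation} supplies $\e[q_{\ell-1}]=\tfrac12+h_{\ell-1}$ and (from the uniform-on-sphere reference) $\var[q_{\ell-1}] = 3/(4n) + O(n^{-2})$. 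Summing the per-layer mean $\log\si_{\ell-1} - \tfrac12\var[A_\ell|z^{\ell-1}]$ and the initial $\log(\chi_0^2/n)\approx -1/n$ assembles into $-\beta/2+2c h_\text{total}+O(d/n^2)$. For the variance the diagonal terms $\var[T_\ell]\approx\var[\log\si_{\ell-1}]+\var[A_\ell]$ combine via $4c^2\cdot 3/(4n) = 3\la^4/[n(\al^2+\la^2)^2]$ added to $\var[A_\ell]=(4\al^2\la^2+2\la^4)/[n(\al^2+\la^2)^2]$ to reproduce the per-layer $\beta$-coefficient, while off-diagonal covariances collapse at leading order to $4c^2\cov(q_{\ell-1},q_{\ell'-1})$ (the $A_\ell$ terms being martingale-orthogonal), summing over $\ell\neq\ell'$ to $c^2 I_\text{total}$.

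For the CLT I would decompose $G = M_d + B_d$ where $M_d\defequal\sum_\ell(T_\ell - \e[T_\ell|\cF_{\ell-1}])$ is a martingale for the filtration $\cF_\ell$ generated by $W^0,\ldots,W^\ell$, and $B_d$ collects its compensator together with $\log(\chi_0^2/n)$. A standard martingale CLT delivers asymptotic normality of $M_d$ with the variance identified above, while $B_d$, a nonlinear functional of the $(q_\ell)$ path, is asymptotically Gaussian through a separate CLT for the approximately Markov-chain evolution of $\hat z^\ell$ underpinned by Conjecture~\ref{conj:hypoactivation}. The hardest step will be uniform control of the Taylor remainders across all $d$ layers: the per-layer remainder is $O(n^{-3/2})$, but promoting this to a global $O(d/n^2)$ bound when $d \asymp n$ requires high-probability lower bounds on $q_{\ell-1}$ (so $\log\si_{\ell-1}$ is well-defined and bounded) and sufficiently strong $L^p$ control of the conjectural $O(1/n)$ error between $\hat z^\ell$ and a uniform direction, so that the Lindeberg condition holds uniformly over all layers.
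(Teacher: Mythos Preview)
Your representation \eqref{eq:main} and the telescoping/Taylor-expansion strategy for $G$ are essentially the paper's: the paper also writes $G=\log(\norm{z^0}^2/n)+\sum_\ell \log X_\ell$ with $X_\ell=\|\al e_1+\la\sqrt{2/n}\,\|\vpp(\hat z^{\ell})\|\,g^\ell\|^2$, expands $\log X_\ell$ around $\log\e[X_\ell]$, and obtains the same per-layer mean and variance contributions you compute (your split of the per-layer variance into a $3\la^4$ piece from $\var[q]$ and a $2\la^4+4\al^2\la^2$ piece from the noise recombines to the paper's $(5\la^4+4\al^2\la^2)/n$). One small slip: $\chi_\ell^2=\|g^\ell\|^2$ and $Y_\ell=\langle \hat z^{\ell-1},g^\ell\rangle$ are \emph{not} conditionally independent (after rotating $\hat z^{\ell-1}$ to $e_1$, $Y_\ell=g_1$ is a summand of $\chi_\ell^2$), though this does not affect the leading-order moments.

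The real difference is the CLT. The paper applies a single CLT for weakly dependent triangular arrays (Neumann) directly to $\sum_\ell \log X_\ell$: the chain $(\hat z^\ell)$ is Markov, the covariance condition is verified via the exponentially decaying bound $\bar J_2(\th_k)-\bar J_2(\pi-\th_k)=O(\al^k)$, and moment/Lindeberg control comes from the non-central $\chi^2$ law of $X_\ell$. Your martingale-plus-compensator route instead requires \emph{joint} asymptotic normality of $(M_d,B_d)$, which does not follow from two separate marginal CLTs: both pieces have order-one variance, and $B_d$ (a functional of the $q_\ell$'s) is driven by the same Gaussians $g^\ell$ that generate the martingale increments $A_\ell$, so asymptotic independence is not automatic. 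You would either need stable convergence of $M_d$ together with a Cram\'er--Wold argument, or---more simply---collapse back to a single weakly-dependent CLT on the full sum, which is exactly what the paper does. Your closing concern about uniform remainder control is apt; the paper addresses it (somewhat loosely) by working on the high-probability event $\{|X_\ell-1|\le n^{-1/2+\epsilon}\}$ and invoking that $\log X_\ell$ has all moments since $X_\ell$ is conditionally non-central $\chi^2$.
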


The main ideas of the proof of \cref{prop:main} is given in \cref{sec:proof_idea} and the detailed proof is given in \cref{sec:supp_proofs}. We also provide more explicit formulas for $h_\text{total}, I_\text{total}$ below.

\begin{prop} \label{prop:hypo_and_cov}
Assume \cref{conj:hypoactivation} is true.
Then in the same infinite-depth-and-width limit
as \cref{prop:main},
the total hypoactivation $h_\text{total}$ and total interlayer covariance $I_\text{total}$ obey
\begin{equation}\label{eq:h_I_formulas}
    h_\text{total} = C_{\alpha,\lambda}\frac{d}{n} + O\left(\frac{d}{n^2}\right), \hspace{1em}
    I_\text{total} = \sum_{1\leq \ell\neq \ell^{\prime}\leq d} \frac{\bar{J}_{2}(\th_{|\ell^\prime - \ell|})-\bar{J}_{2}(\pi-\th_{|\ell^\prime - \ell|})}{n}+ O\left( \frac{d}{n^{2}}\right).
\end{equation}
Here $C_{\alpha,\lambda}$ is a constant depending on $\alpha,\lambda$ and
\begin{equation*}
\bar{J}_{2}(\th)
\defequal J_{2}\left(\th\right) / {\pi}
= 3\sin(\th)\cos(\th)/\pi+\left(1-\th/\pi\right)\left(1+2\cos^{2}\th\right),
\end{equation*}
where
 $J_2(\theta)$
first appeared in \citep{cho2009kernel}, and $\theta_k$ is such that $\cos(\th_k) = \al^k/(\al^2+\la^2)^{k/2}$ .
\end{prop}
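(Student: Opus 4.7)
The plan is to apply \cref{conj:hypoactivation} to replace each $\hat{z}^\ell$ (and each joint pair $(\hat{z}^\ell,\hat{z}^{\ell'})$) by a uniform distribution on the sphere with the appropriate angular correlation, then compute leading-order moments. The first step is to pin down that correlation. Conditioning on $z^{\ell-1}$, the feed-forward contribution $\lambda\sqrt{2/n}\, W^\ell \vpp(z^{\ell-1})$ has zero conditional mean, so iterating gives $\e[z^\ell \mid z^{\ell-k}] = \alpha^k z^{\ell-k}$; combined with the variance recursion $\e[\|z^\ell\|^2 \mid z^{\ell-1}] = \alpha^2\|z^{\ell-1}\|^2 + 2\lambda^2 \|\vpp(z^{\ell-1})\|^2/n$ and the half-activation prediction from \cref{conj:hypoactivation}, the expected norm grows like $(\alpha^2+\lambda^2)^{\ell}\|z^0\|^2$, so the cosine of the angle between $\hat{z}^\ell$ and $\hat{z}^{\ell-k}$ concentrates at $\alpha^k/(\alpha^2+\lambda^2)^{k/2} = \cos\theta_k$.

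For the hypoactivation, observe that if $\hat{z}^\ell$ were exactly uniform on $S^{n-1}$ the symmetry $\hat{z}^\ell_i \dequal -\hat{z}^\ell_i$ would force $\e[\|\vpp(\hat{z}^\ell)\|^2] = \tfrac12$ and hence $h_\ell = 0$. The entire value of $h_\ell$ must therefore come from the $O(1/n)$ density correction supplied by \cref{conj:hypoactivation}. Treating this correction to first order yields $h_\ell = C_{\alpha,\lambda}/n + O(1/n^2)$ with a constant $C_{\alpha,\lambda}$ that is independent of $\ell$ for bulk layers (boundary effects from the first few layers are absorbed into the error), and summing over $\ell$ produces $h_\text{total} = C_{\alpha,\lambda}(d/n) + O(d/n^2)$.

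For the interlayer covariance, I would realize a uniform pair on $S^{n-1}$ with angle $\theta = \theta_{|\ell'-\ell|}$ as $u = X/\|X\|,\, v = Y/\|Y\|$ for jointly Gaussian $(X,Y)$ with $\e[X_iY_i]=\cos\theta$ and $\e[X_iY_j]=0$ for $i\neq j$. Setting $A_\ell \defequal \|\vpp(u)\|^2 - \tfrac12$ and Taylor-expanding $\|X\|^{-2},\|Y\|^{-2}$ around their means $1/n$, the leading contribution to $\e[A_\ell A_{\ell'}]$ reduces to the single-coordinate expectation $n^{-1}\,\e\bigl[(\vpp(X_1)^2 - \tfrac12 X_1^2)(\vpp(Y_1)^2 - \tfrac12 Y_1^2)\bigr]$, with all off-diagonal $(i,j)$ terms contributing zero by coordinate independence in the Gaussian representation. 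Using the Cho--Saul arc-cosine identities $\e[\vpp(X)^2\vpp(Y)^2] = \bar{J}_2(\theta)/2$ and $\e[\vpp(X)^2\vp_-(Y)^2] = \bar{J}_2(\pi-\theta)/2$ together with $X^2 = \vpp(X)^2 + \vp_-(X)^2$, this single-coordinate quantity evaluates to $(\bar{J}_2(\theta)-\bar{J}_2(\pi-\theta))/4$. Multiplying by four (from the factors of two in the definition of $I_\text{total}$) then recovers $\cov(2\|\vpp(\hat z^\ell)\|^2,2\|\vpp(\hat z^{\ell'})\|^2) = (\bar J_2(\theta_{|\ell'-\ell|}) - \bar J_2(\pi-\theta_{|\ell'-\ell|}))/n + O(1/n^2)$.

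The main obstacle is summing over all $\binom{d}{2}$ layer pairs while preserving the sharp error $O(d/n^2)$, rather than the crude $O(d^2/n^2)$ that a uniform per-pair bound would give. The key is that $\cos\theta_k = \alpha^k/(\alpha^2+\lambda^2)^{k/2}$ decays geometrically in $k$, driving $\theta_k \to \pi/2$ exponentially fast, and $\bar J_2(\theta) - \bar J_2(\pi-\theta)$ vanishes linearly as $\theta \to \pi/2$, so only $O(d)$ ``effective'' pairs (those with $|\ell'-\ell|$ of order $O(1)$) contribute substantially. A concentration argument leveraging this geometric decay must then show that the per-pair $O(1/n^2)$ residual does not compound across distant layer pairs, yielding the claimed total error of $O(d/n^2)$.
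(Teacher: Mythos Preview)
Your proposal is correct and reaches the same formulas, but your route to the covariance expression differs from the paper's in a way worth noting. The paper (Lemma~\ref{lm:cov}) works directly with the model $(u,\cos\theta\,u+\sin\theta\,g/\sqrt{n})$ that \cref{conj:hypoactivation} specifies, expands $\norm{\cdot}^2$ coordinatewise, and computes the diagonal ($i=j$) and off-diagonal ($i\neq j$) contributions separately: the off-diagonal terms are handled by conditioning on $u_j$ and using $u_i\mid u_j \dequal \sqrt{1-u_j^2}\,\tilde u_i$ with $\tilde u$ uniform on $\bS^{n-2}$, and both pieces contribute nontrivially before being combined through the algebraic identity $2\bar J_2(\theta)-2\e[Z^2\vpp^2(\cos\theta\,Z+\sin\theta\,W)]=\bar J_2(\theta)-\bar J_2(\pi-\theta)$. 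You instead pass to a Gaussian-ratio representation $(X/\norm{X},Y/\norm{Y})$ and center each summand as $\vpp(X_i)^2-\tfrac12 X_i^2$, which has mean zero; this kills the off-diagonal terms outright and leaves only a single-coordinate Cho--Saul computation. Your argument is cleaner algebraically and sidesteps the final identity; the trade-off is that $(X/\norm{X},Y/\norm{Y})$ is not literally the pair in \cref{conj:hypoactivation}, so you owe one extra line arguing that the two models agree to the required $O(1/n)$ order (alternatively, your centering trick also works directly on the conjecture's model, since $\e[u_i|u_i|\mid u_j]=0$ by the conditional symmetry of the sphere). For $h_{\text{total}}$ and for the $O(d/n^2)$ error control on $I_{\text{total}}$ via the geometric decay of $\cos\theta_k$, your reasoning matches the paper's (cf.\ the remark following the corollary to Lemma~\ref{lm:cov}).
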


\begin{figure}[t]
\centering
\begin{subfigure}[b]{0.495\textwidth}
\centering
\includegraphics[width=\textwidth]{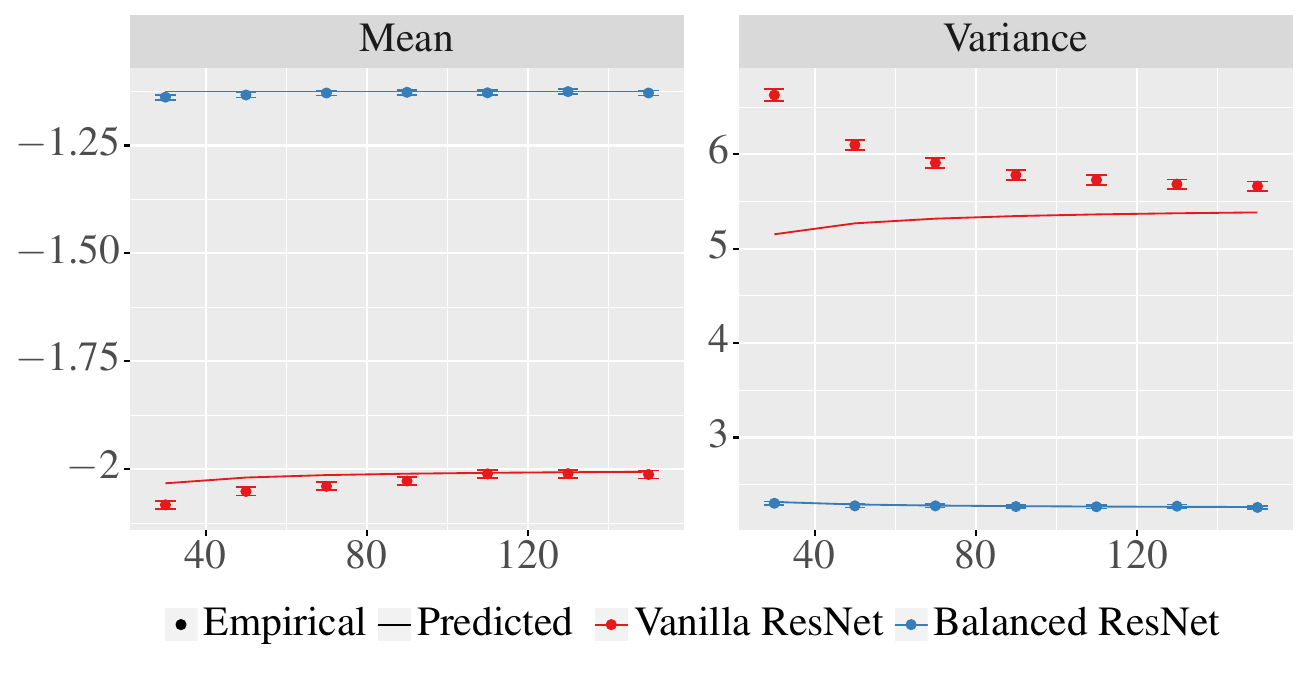}
\caption{
$\e$ and $\var$ of
$G$ from \eqref{eq:main}
and prediction.
}
\label{subfig:mean_var_fixed_ratio}
\end{subfigure}
\begin{subfigure}[b]{0.495\textwidth}
\centering
\includegraphics[width=\textwidth]{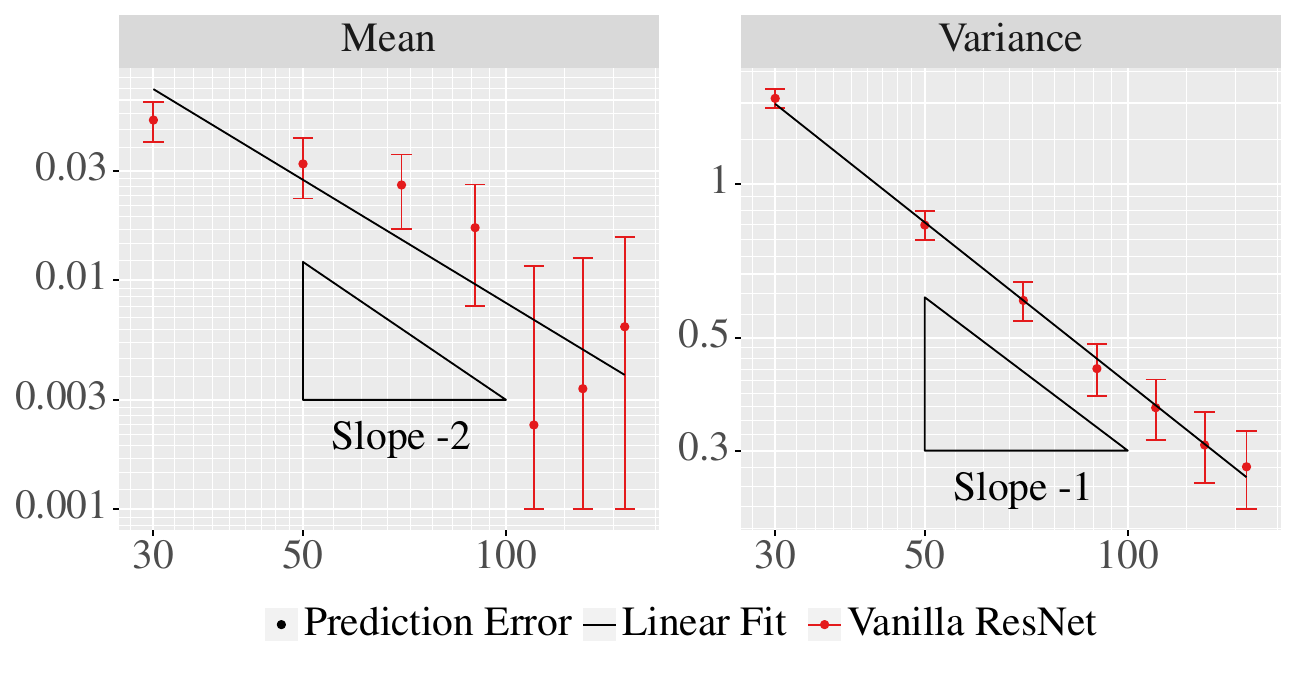}
\caption{
Prediction error for $\e$ and $\var$ of
$G$ from \eqref{eq:main}.
}
\label{subfig:mean_var_est_fixed_ratio}
\end{subfigure}
\caption{Empirical mean and variance and infinite-depth-and-width prediction for the random variable $G(n,d,\alpha,\la)$ from \eqref{eq:main} compared to the results of Theorems \ref{prop:main} \& \ref{prop:balanced_resnets_result} . For these simulations, $\alpha = \lambda = 1/\sqrt{2}, d/n=1$ are fixed and width $n$ is varied on the $x$-axis. The error bars indicate 95\% confidence interval (CI) of the Monte-Carlo simulation, with truncation at $1\mathrm{e}{-3}$ for plotting on log-scale. The balanced ResNet predictions fall within the Monte-Carlo CI and so are not plotted on log scale. Since $d/n$ is fixed, Figure \ref{subfig:mean_var_est_fixed_ratio} indicates the error in the asymptotic prediction of variance \eqref{eq:E} is $O(n^{-1})=O(d n^{-2})$ as claimed. On the other hand, the error in mean \eqref{eq:E} is $O(n^{-2})=O(d n^{-3})$; this is one order smaller than the statement proven in \cref{prop:main}. A possible explanation is that the sub-leading error term in our approximation for $G$ is mean zero.
}
\label{fig:est_fixed_ratio}
\end{figure}

\begin{rem}
The Gaussian infinite-width limit predicts that the marginals of $\zout$ should have the form of \eqref{eq:main} with $G$ being identically zero.
As $\zout$ depends exponentially on $G$, the infinite-depth-and-width limit predicts the variance is exponentially larger than the infinite-width limit. See \cref{sec:consequences} for a detailed discussion and \cref{fig:out_corr} for verification against finite networks.
\end{rem}

Using Monte Carlo simulations, we estimate the constant $C_{\alpha, \lambda}$. The result of \cref{prop:main} is then compared against finite networks in \cref{fig:est_fixed_ratio}. Full proof can be found \cref{sec:supp_proofs}.

The idea of using a scaling parameter $\alpha = 1/\sqrt{2}$ in the skip connections has been noted in empirical papers \citep{dhariwal2021diffusion} and also studied under simplified assumptions on the number of activation in each layer \citep{balduzzi2017shattered}. Our result shows that by choosing $\al^2+\la^2=1$, the prefactor in \eqref{eq:main} does not grow with depth thereby enabling deeper networks to be trained.

In the case that $\al=0$, $\la=1$, the architecture reduces to a fully connected network. In this case, log-Gaussian behaviour of the network was discovered in \citet{hanin2019products}. The fully connected case is simpler because the direction vectors $\hat{z}^\ell$ are always uniformly distributed on the unit sphere and are independent from layer to layer. This means $h_\ell = 0$ and $\th_n = 0$ which greatly simplifies the result of \cref{prop:main}.

Furthermore, the proof easily extends to the case where the coefficients $\al_\ell,\la_\ell$ vary from layer to layer; see \cref{sec:layer_dependent} for the general statement.
\citet{hayou2021stable} and \citet{hanin2018start} have studied ResNets where $\al_\ell=1$ in every layer, but $\la_\ell$ is allowed to vary.
The prefactor of our result in this case becomes $\prod_{\ell=1}^d \left(\smash{1^2 + \la_\ell^2}\right) \approx \exp \left( \sum_{\ell=1}^d \la_\ell^2 \right)$, which is akin to the behaviour found in those papers. Additionally, our result precisely quantifies the log-Gaussian behaviour and its dependence on the sequence $\la_i$ through the parameters $\be$ and $c$ in the infinite-depth-and-width limit.

\subsection{Log-Gaussian Behaviour of Balanced ResNets}
\label{subsec:balanced_resnets}

Given a collection of \emph{iid uniformly random signs}, $s^\ell_{i} \in \{+,-\},  {1\leq \ell \leq d, 1\leq i\leq n}$, a {\bf balanced ResNet} is defined much like a Vanilla ResNet, except that a random sign is applied preactivation: 
\begin{equation*}
z^{\ell} \defequal \al z^{\ell-1}+\la \sqrt{\frac{2}{n}} W^\ell {\vp_{s^\ell} \left(z^{\ell-1}\right)}\text{ for }1\leq \ell\leq d,\\
\end{equation*}
where at each layer the vector function $\vp_{s^\ell}:\bR^n \to \bR^n$ applies either $\vpp$ or $\vp_-$ to the entries according to the random signs $s^\ell$. More precisely, the $i$-th component is
$$\vp_{s^\ell}(z)_i \defequal
\begin{cases}
			\vpp(z_i) = \max(z_i, 0), & \text{ if } s^\ell_i = + \,, \\
            \vp_-(z_i) = \max(-z_i, 0), & \text{ if } s^\ell_i = - \,.
		 \end{cases}$$
An equivalent definition is the entrywise multiplication $\vp_{s^\ell}(z) = \vpp(s^\ell \odot z)$.  Note that the random signs $s_i^\ell$ are \emph{not} trainable parameters; they are frozen on initialization. 
\mufan{This same symmetrization was first exploited by \citet{allen2019learning,bai2019beyond} to study a quadratic approximation of the network.}
We now present a corresponding limiting theorem for Balanced ResNets.

\begin{thm}\label{prop:balanced_resnets_result}
For a balanced ResNet, the same result as \cref{prop:main} given in \eqref{eq:main} still holds, but with the mean and variance of $G$ given simply by
\begin{equation}\label{eq:balancedResNets}
\e\left[G\right] = -\frac{\beta}{2} +  O\left( \frac{d}{n^2}\right),\hspace{1em} \var\left[G\right] = \beta + O\left( \frac{d}{n^2}\right).
\end{equation}
\end{thm}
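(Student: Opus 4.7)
The plan is to adapt the proof of \cref{prop:main}, exploiting the structural simplification provided by the random signs $s^\ell$: they eliminate both hypoactivation and interlayer correlation \emph{exactly}, not merely asymptotically. The distributional identity \eqref{eq:main} itself carries over unchanged, since it depends only on the output layer: because $\zout = \frac{1}{\sqrt{n}}\Wout z^d$ with $\Wout$ having i.i.d.\ standard Gaussian entries, conditional on $z^d$ the vector $\zout$ is isotropic Gaussian with per-coordinate variance $\norm{z^d}^2/n$, and writing $\norm{z^d}^2 = \frac{\norm{x}^2}{\nin}(\al^2+\la^2)^{d}\exp(G)$ defines $G$ and produces the identity regardless of what happens in the hidden layers.

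The key observation is that each sign $s^\ell_i$ is independent of $z^{\ell-1}$ and of all earlier randomness. Since $\vp_{s^\ell}(z^{\ell-1})_i^2 = (z^{\ell-1}_i)^2\, \mathbf{1}\{s^\ell_i z^{\ell-1}_i > 0\}$, integrating out $s^\ell$ gives $\e\bigl[\norm{\vp_{s^\ell}(z^{\ell-1})}^2 \bigm| z^{\ell-1}\bigr] = \tfrac12 \norm{z^{\ell-1}}^2$ \emph{exactly}. Consequently, the analogue of the average hypoactivation vanishes identically, so $h_\text{total} = 0$. Similarly, using the independence of $s^\ell$ from $s^{\ell'}$ for $\ell \neq \ell'$ combined with the tower property of conditional expectation shows $\e[A_\ell A_{\ell'}] = \tfrac14$ where $A_\ell = \norm{\vp_{s^\ell}(\hat z^{\ell-1})}^2$, hence $I_\text{total} = 0$ identically. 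Crucially, unlike \cref{prop:main}, neither identity requires \cref{conj:hypoactivation}.

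Next I would repeat the log-increment analysis used in the proof of \cref{prop:main}: decompose $G = \sum_{\ell=1}^d (G_\ell - G_{\ell-1})$ with $G_\ell - G_{\ell-1} = \log\bigl(\norm{z^\ell}^2 / ((\al^2+\la^2)\norm{z^{\ell-1}}^2)\bigr)$, expand each log-increment to second order in its relative fluctuation, and take conditional expectations layer by layer. The leading $\beta$ contribution to both $\e[G]$ and $\var[G]$ arises from the $\chi^2$ fluctuations of $\norm{W^\ell \vp_{s^\ell}(z^{\ell-1})}^2$ and the cross term $\langle z^{\ell-1}, W^\ell \vp_{s^\ell}(z^{\ell-1})\rangle$; these are unchanged at leading order because the relevant moments of $\norm{\vp_{s^\ell}(\hat z^{\ell-1})}^2$ agree with the Vanilla ReLU moments to leading order in $1/n$. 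The $2c\,h_\text{total}$ and $c^2 I_\text{total}$ corrections to mean and variance both drop out by the observations in the previous paragraph, yielding \eqref{eq:balancedResNets}. The Gaussian CLT for $G$ follows from the same martingale-CLT argument as in \cref{prop:main}, applied to the log-increment sequence.

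The main obstacle is the bookkeeping required to verify that the $O(d/n^2)$ error bounds carry over. However, replacing $\vpp$ by $\vp_{s^\ell}$ only alters the distribution of the layer activation magnitudes, and in fact simplifies their moment structure (the mean becomes exactly $1/2$, and several subleading fluctuations are averaged out by $s^\ell$). Consequently the moment estimates used in the appendix for \cref{prop:main} apply with only superficial modifications, and the resulting error terms are at least as small as in the Vanilla case.
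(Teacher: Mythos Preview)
Your proposal is correct and takes essentially the same route as the paper's (one-sentence) proof: rerun the argument for \cref{prop:main}, using \eqref{eq:balanced_resnets_nice} to see that the random signs force $h_{\text{total}}=0$ and $I_{\text{total}}=0$ exactly, so the hypoactivation and interlayer-covariance corrections drop out and \cref{conj:hypoactivation} is no longer required. The only minor discrepancy is that the paper invokes Neumann's CLT for weakly dependent triangular arrays rather than a martingale CLT, but in the Balanced case the log-increments are uncorrelated and either version suffices.
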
 Balanced ResNets are constructed so that the activation of each neuron is independent of all others due to the random signs $s^\ell$. This eliminates the hypoactivation and variance terms which complicated the analysis of the Vanilla ResNet and necessitated Conjecture \ref{conj:hypoactivation}. Instead, for Balanced ResNets it is straightforward to compute that for any fixed $z,w\in \bR^n$ we have
\begin{equation}\label{eq:balanced_resnets_nice}
\e\left[\norm{\vp_{s^\ell}(z)}^2\right] = \frac{\norm{z}^2}{2},
\var\left[\norm{\vp_{s^\ell}(z)}^2\right] = \sum_{i=1}^n \frac{z_i^4}{4},
\cov\left[ \norm{\vp_{s^\ell}(z)}^2, \norm{\vp_{s^{\ell^\prime}}(w)}^2 \right] = 0.
\end{equation}
Even though the layers $\hat{z}^{\ell}, \hat{z}^{\ell^\prime}$ are correlated, because the activation functions $\vp_{s^\ell}$ and $\vp_{s^{\ell^\prime}}$ are set to be independent on initialization, the correlation between layers does not induce a correlation on which neurons are activated from layer to layer. This explains why there is no hypoactivation and interlayer correlation correction in \cref{prop:balanced_resnets_result} as there is in \cref{prop:main}.

\section{Consequences of Theorems \ref{prop:main} \& \ref{prop:balanced_resnets_result} and Comparison to Infinite-Width Limit}
\subsection{Vanishing and Exploding Norms}\label{sec:consequences}
By the basic fact $\e[\exp(\cN(\mu,\si^2))]=\exp(\mu+\half\si^2)$ it follows from Theorems \ref{prop:main} \& \ref{prop:balanced_resnets_result} that, when the inputs $x$ has $\norm{x}=\sqrt{\nin}$, the mean size scale of any neuron $\zout_i$ is approximately
\begin{align*}
\e\left[ (\zout_i)^2 \right]
&\approx
\begin{cases}
    (\al^2 +\la^2)^{d} \exp\left( 2 c h_\text{total} + \half c^2 I_\text{total} \right) \,, & \text{ for Vanilla ResNets}, \\
    (\al^2 +\la^2)^{d} \,, & \text{ for Balanced ResNets}.
\end{cases}
\end{align*}
(Note that the terms with $\be$ cancel out!) When $\alpha^2 + \lambda^2 = 1$, this is constant for Balanced ResNets. In contrast, Vanilla ResNets have a complicated dependence on the network depth $d$ and width $n$ due to the \emph{hypoactivation} and \emph{correlations} terms. This means the behaviour is $ \exp( C d/n)$, which is somewhat surprising. A more serious issue is the variance which Theorems \ref{prop:main} \& \ref{prop:balanced_resnets_result} predict to be
\begin{align*}
\var\left[ (\zout_i)^2 \right]
&\approx
\begin{cases}
    (\al^2 +\la^2)^{2 d}\left( 3\exp\left( \beta + c^2 I_\text{total} \right) -1 \right) \exp\left( 4c h_\text{total} + c^2 I_\text{total} \right) \,, & \text{ for Vanilla}, \\
    (\al^2 +\la^2)^{2 d}\left( 3\exp\left( \be \right) -1 \right) \,, & \text{ for Balanced}.
\end{cases}
\end{align*}
Since $\beta \approx C d/n$, the term $\exp(\beta)$ represents exponentially larger variance for deep nets compared to shallow ones of the same width. In contrast, the variance predicted by the infinite width limit does not grow with depth for this model when $\al^2+\la^2=1$. This effect means the relative sizes of different network outputs can be widely disparate. Unlike problems with the mean, this issue is harder to resolve. For example, normalization methods that divide all neurons by a constant does nothing to address the large relative disparity between two points. Techniques like batch normalization will be skewed by large outliers. This kind of variance is known to obstruct training \cite{hanin2018start}.
The input-output derivative $\di_{x_i}\zout$ has the same type of behaviour as $\zout$ itself; a simple proof is given in \cref{sec:supp_proofs}. It is expected that the gradient with respect to the weights $\di_{W^\ell_{ij}} \zout$ will also have the same qualitative behaviour \citep{lognormalgrad} although more investigation is needed to understand this theoretically. Exponentially large variance for gradients is a manifestation of the vanishing-and-exploding gradient problem \citep{hanin2018neural}.

Balanced ResNets suffer from this variance problem less because the interlayer correlation term is zero. Since this variance reduction happens at the exponential scale in, the difference can be significant; for networks with $d/n=1$, the contribution is a factor of $\approx e^{5.5} \approx 250$ for Vanilla ResNets vs. $\approx e^{2.5} \approx 10$ for Balanced ResNets. See \cref{fig:out_corr} for a comparison of these theoretically predicted properties vs experiments with finite networks.

\subsection{Correlated Output Neurons}
Since the same random variable $G$ multiplies the entire vector $\zout$, the individual neurons in the output layer are \emph{not} independent. For example, Theorem \ref{prop:main} and \ref{prop:balanced_resnets_result} predict that the squared entries have strictly positive correlation given by
$\corr\left(\left(\zout_i\right)^2, \left(\zout_j\right)^2\right) = (\exp(\si^2)-1)/(3\exp(\si^2)-1)$ for any two neurons $i\neq j$ where $\si^2 = \var(G)$. This tends to $1/3$ as $d/n$ grows. The effect of correlated output neurons persists for Balanced ResNet but is reduced again due to the lower variance. This is very different from the infinite-width limit, which predicts that individual neurons should be independent Gaussians. This prediction of the theorem matches finite networks closely; see \cref{fig:out_corr}.

\begin{figure}[t]
\centering
\hspace{8em} \includegraphics[width=0.95\linewidth]{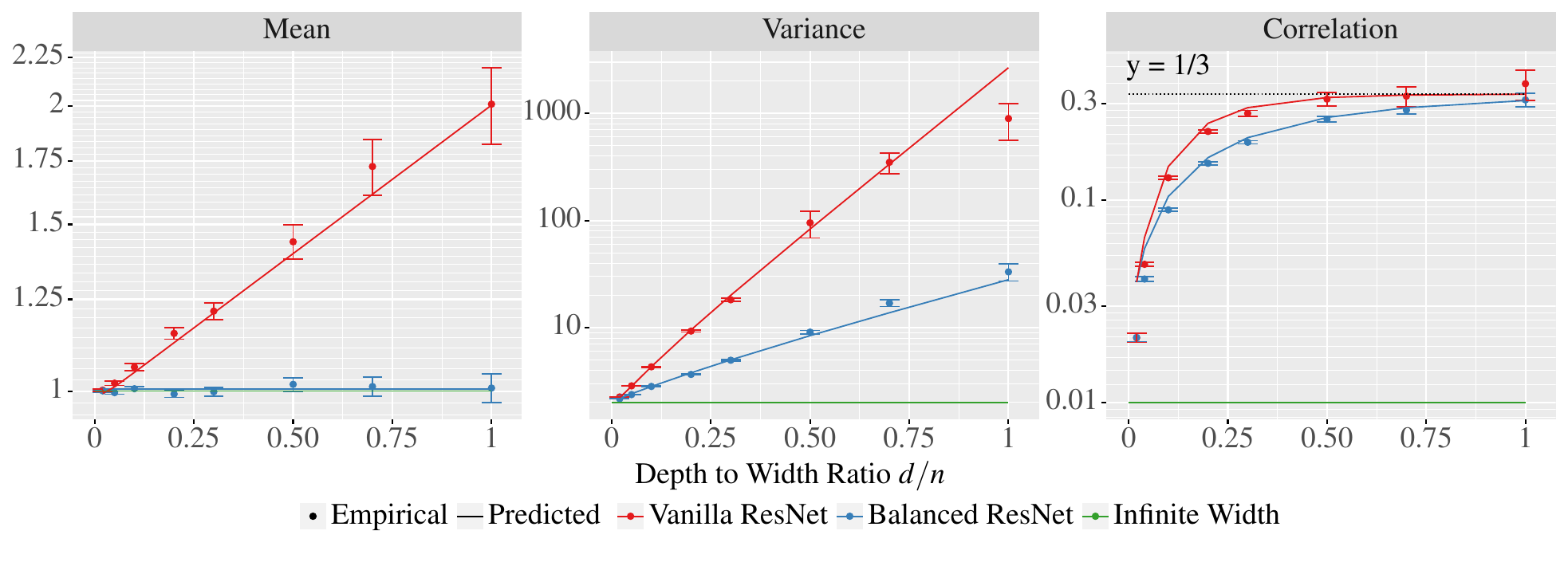}
\caption{
Behaviour of the mean and variance of the typical output $(\zout_i)^2$ and the correlation between $(\zout_i)^2, (\zout_j)^2$ for two different output neurons.
Here $n = 200$, $\alpha = \lambda = 1/\sqrt{2}$ and $d/n$ varies on the $x$-axis.
The infinite width prediction for correlation is zero, but due to plotting on log-scale, we replace zero by $0.01$ for display.
}
\label{fig:out_corr}
\end{figure}

\section{Conjecture \ref{conj:hypoactivation}: Hypoactivation and Layerwise Correlations}\label{sec:hypo_and_corr}
Vanilla ResNets have a subtle asymmetry in the architecture due to the skip connections.
Unlike fully connected networks, the distribution of $\hat{z}^\ell \defequal z^\ell/\norm{z^\ell}$ for $\ell \geq 1$ is \emph{not} exactly uniformly distributed on the unit sphere.
Informally speaking, $\hat{z}^\ell$ can be thought of as a random walk whose variance at each step is proportional to $\norm{\vpp(\hat{z}^\ell)}$. \emph{More} randomness is injected when $\norm{\vpp(\hat{z}^\ell)}$ is \emph{large} and less when it is small.
The net effect is that the walk moves slower when $\norm{\vpp(\hat{z}^\ell)}$ is small, thereby spending more time in those locations.
Consequently, $\|\vpp(\hat{z}^\ell) \|$ is biased toward smaller values.

The size of this effect is limited by entropy; most of the unit sphere $\bS^{n-1}$ has $\norm{\vpp(u)}^2  \approx 1/2$ in the sense that for any $\ep >0$, the measure of the set  $\big\{u:\abs{ \smash{\norm{\vpp(u)}^2 - \half}} > 1/{n^{\half-\epsilon}} \big\}$ vanishes as $n\to\infty$. 

\mufan{To quantify the effect of the bias, we can think of the evolution of $\hat{z}^\ell$, $\ell = 1,2,\ldots $ as a random walk that takes different step sizes at a different points on the sphere. It is reasonable to expect that the  behaviour of this processes will be similar to that of a time changed Brownian motion, which is slowed down at the points where $\hat{z}^\ell$ takes smaller steps. (Proving this comparison precisely is technically difficult since the parameter $n$ simultaneously plays both the role dimension and the step size of the walk.)} Based on \mufan{this heurstic comparison to time changed Brownian motion} and on \mufan{ extensive }Monte Carlo simulations we conjecture that the expected size of the hypoactivation effect is only $O(1/n)$ in expectation; Conjecture \ref{conj:hypoactivation} contains a precise statement.

\begin{figure}[t]
\centering
\begin{tikzpicture}[x=0.25\textwidth,y=0.25\textwidth, %
  nplot/.style={anchor=north west}, %
  ncap/.style={anchor=south west}, %
  caption/.style={align=left,font=\normalsize,scale=0.75}, %
  ]

\node[ncap] (figacap) at (0.57,0) [caption] {
(a)
 };
\node[nplot] (figa) at (0,0)  {{\includegraphics[width=0.25\textwidth]{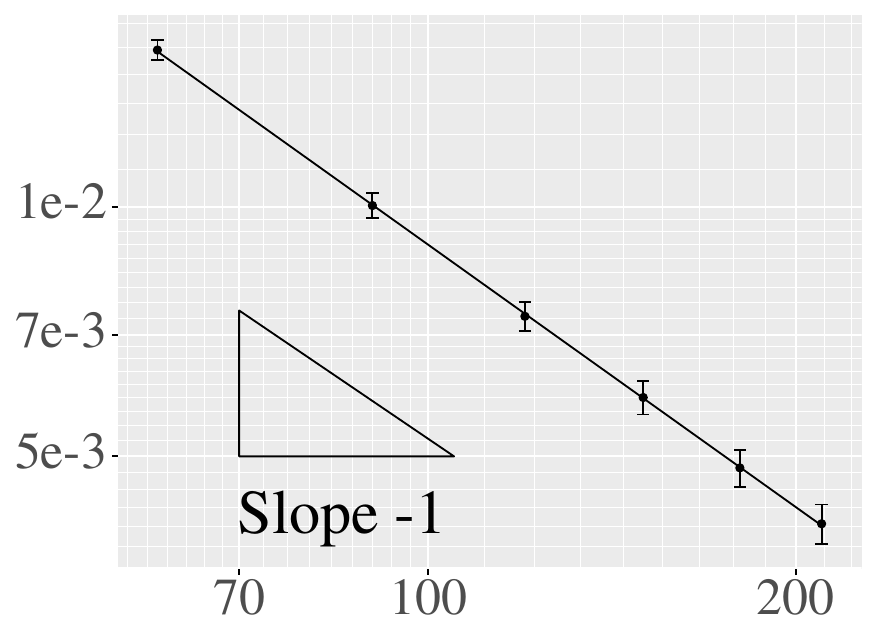}}};  %

\node[ncap] (figbcap) at (1.57,0) [caption] {
(b)
 };
\node[nplot] (figb) at (1,0) {{\includegraphics[width=0.25\textwidth]{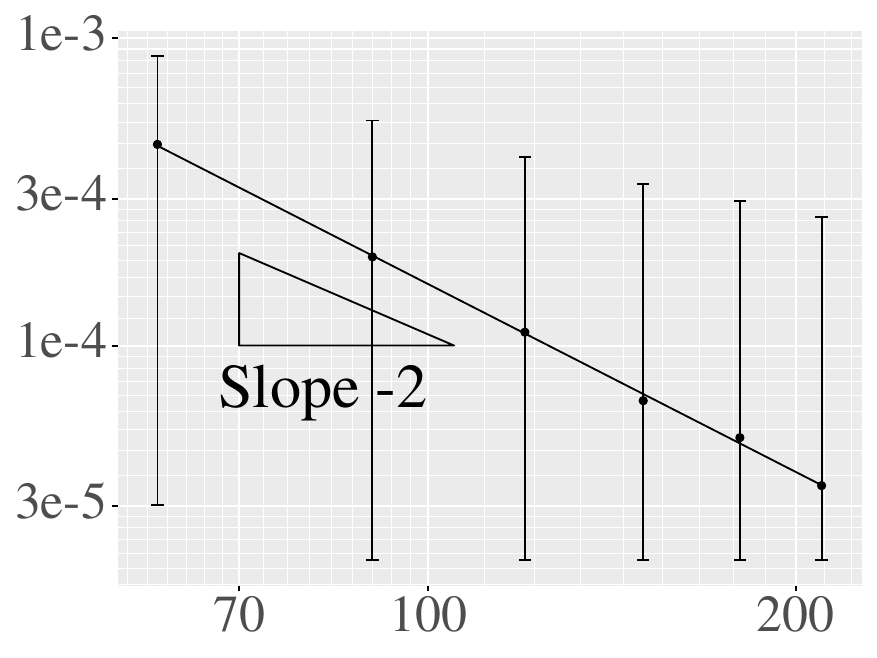}}};

\node[ncap] (figccap) at (2.57,0) [caption] {
(c)
 };
\node[nplot] (figc) at (2,0) {{\includegraphics[width=0.25\textwidth]{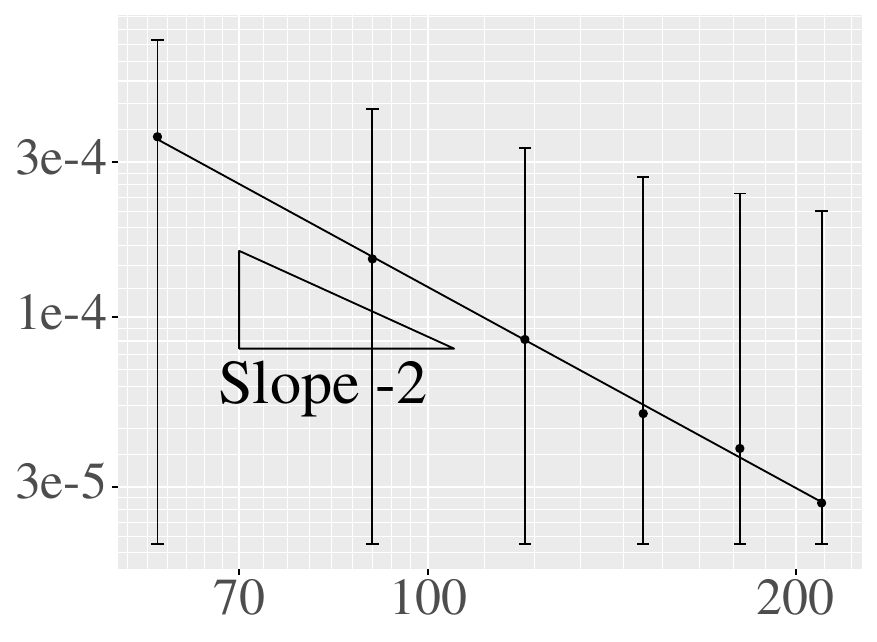}}};

\node[ncap] (figdcap) at (3.57,0) [caption] {
(d)
 };
\node[nplot] (figd) at (3,0) {{\includegraphics[width=0.25\textwidth]{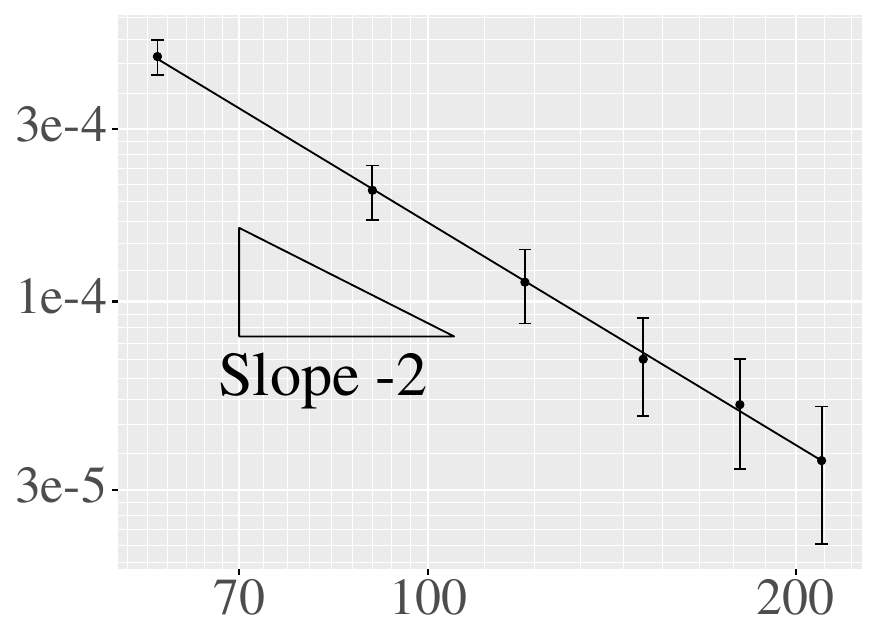}}};

\end{tikzpicture}

\caption{
Monte Carlo evidence for Conjecture \ref{conj:hypoactivation}. $d=200$, $\alpha = \lambda = 1/\sqrt{2}$, and $n$ varies on the $x$-axis.
The plots show the quantities (a) $\left|
\mathbb{E} \| \varphi_+( \hat{z}^\ell ) \|^2
-
\mathbb{E} \| \varphi_+(u) \|^2
\right|$, (b) $\left|
\var \| \varphi_+( \hat{z}^\ell ) \|^2
-
\var \| \varphi_+(u) \|^2
\right|$, (c) $\left|\cov \left( \| \varphi_+( \hat{z}^\ell ) \|^2,
\varphi_+( \hat{z}^{\ell-1} ) \|^2 \right) - C(\theta_1)\right|$, (d) $\left|\cov \left( \| \varphi_+( \hat{z}^\ell ) \|^2,
\varphi_+( \hat{z}^{\ell-2} ) \|^2 \right) - C(\th_2)\right|$. $C(\th_k)$ is the theoretical covariance formula from the term where $\ell^\prime - \ell = k$ in \eqref{eq:h_I_formulas}.
Note also that the absolute error is expected to be $O(n^{-2})$ when the theoretical quantity is $O(n^{-1})$.
Figures
(a) and (b)
verify the conjecture in \eqref{eq:E_conj} and \eqref{eq:var_conj},
and Figures
(c) and (d)
verify \eqref{eq:cov_conj} when $k=1$ and $k=2$ respectively.
For display, we clip the bottom edge of the CI to $2\mathrm{e}{-5}$; otherwise the error bar would go down to $-\infty$ on the log scale.
}
\label{fig:hypo_conj}
\end{figure}
Even if each layer $\hat{z}^\ell$ is marginally close to the uniform distribution on the unit sphere, the directions $\hat{z}^{\ell}$ and $\hat{z}^{\ell+1}$ are not independent because of the skip connections in the network. As above, the exact behaviour is complicated due to fluctuations in the exact number of neurons which are activated in each layer. However, using the idea that $\norm{\vpp(\hat{z}^\ell)}^2  = \half(1+o(1))$, we construct the following approximation.  From \eqref{eq:zl}, we have the approximation ${z^{\ell+1}}/{\norm{z^\ell}} = \al \hat{z}^\ell + \la {g^{\ell+1}}/{\sqrt{n}}\left( 1 + o(1) \right)$
We observe that the norm of RHS is concentrated around $\sqrt{\al^2 + \la^2}$ as $n\to\infty$, so normalizing this to get $\hat{z}^{\ell+1}$ we have
 \begin{equation*}
    \hat{z}^{\ell+1} = \left(\frac{\al}{\sqrt{\al^2+\la^2}} \hat{z}^\ell + \frac{\la}{\sqrt{\al^2 +\la^2}} \frac{g^{\ell+1}}{\sqrt{n}}\right) \left( 1 + o(1) \right).
\end{equation*}
Iterating this gives the same relationship for $\hat{z}^{\ell+k}$ where the first coefficient becomes ${\al^k}/\sqrt{\al^2+\la^2}^k$. As before, based on Monte Carlo simulations, we conjecture that the size of the error is $O(1/n)$ in expectation. We formalize this as a precise statement in \cref{conj:hypoactivation} below.

\begin{conjecture} \label{conj:hypoactivation}
The distribution of the unit vector $\hat{z}^\ell = z^\ell/\norm{z^\ell}$ is approximately uniformly distributed from the unit sphere $u \in \mathbb{S}^{n-1}$
in the precise sense that the following asymptotics hold
\begin{align}
    \label{eq:E_conj} \e\left[ \norm{\vpp(\hat{z}^\ell)}^2 \right] &= \e\left[ \norm{\vpp(u)}^2 \right]\left(1 + O\left(\frac{1}{n}\right)\right) \,, \\
    \label{eq:var_conj}\var\left[ \norm{\vpp(\hat{z}^\ell)}^2 \right] &= \var\left[ \norm{\vpp(u)}^2 \right]\left(1 + O\left(\frac{1}{n}\right)\right) \,, 
\end{align}
\mufan{where the constants in the big $O(\cdot)$ notation are uniform in $\ell$. }
Moreover, for two layers $\ell, \ell^\prime$,
which are $k\geq 1$ layers apart
$\abs{\ell^\prime - \ell} = k$,
the \emph{joint} distribution of
$\hat{z}^\ell, \hat{z}^{\ell^\prime}$
is approximately equal to the joint distribution of
$u,\cos(\th_k)u + {\sin(\th_k)}g/{\sqrt{n}}$
where $g$ is a Gaussian vector with iid $\cN(0,1)$ entries which is independent of
$u$ and $\th$ is such that
$\cos(\th_k)= \al^k/(\al^2+\la^2)^{k/2}$ %
in the sense that the following asymptotics hold
\begin{equation}
    \label{eq:cov_conj} \cov\left[ \norm{\vpp(\hat{z}^\ell)}^2, \norm{\vpp(\hat{z}^{\ell^\prime})}^2  \right] =
    \cov\left[ \norm{\vpp(u)}^2, \norm{\vpp(\cos(\th_k)u + \frac{\sin(\th_k)}{\sqrt{n}}g )}^2  \right]\left(1 + O\left(\frac{1}{n}\right)\right) \,, 
\end{equation}
\mufan{where the constant in the big $O(\cdot)$ notation is uniform in $\ell,\ell^\prime$. }
\end{conjecture}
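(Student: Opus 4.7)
The plan is to use Gaussian conditioning to set up a Markov-chain-on-the-sphere structure for $\hat z^\ell$, identify an idealized reference chain whose stationary distribution is exactly uniform on the sphere, and then couple the two chains to control the discrepancy via a Taylor expansion. Conditional on $\cF^{\ell-1} \defequal \sigma(W^0,\ldots,W^{\ell-1})$, the vector $W^\ell\vpp(z^{\ell-1})$ is centered Gaussian with covariance $\|\vpp(z^{\ell-1})\|^2 I_n$, so writing $\rho^\ell \defequal 2\|\vpp(\hat z^\ell)\|^2$ and pulling out an independent standard Gaussian $g^\ell\in\bR^n$, we get the key identity $z^\ell/\|z^{\ell-1}\| = \al\,\hat z^{\ell-1} + \la\sqrt{\rho^{\ell-1}/n}\, g^\ell$. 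If $\rho^{\ell-1}$ were exactly $1$, this would be a rotation-invariant Markov recursion on $\bS^{n-1}$ with uniform stationary distribution, so the whole conjecture reduces to controlling the $O(1/\sqrt n)$-size fluctuations of $\rho^{\ell-1}$ around $1$.

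The next step is to introduce a reference process $u^0,u^1,\ldots$ driven by the \emph{same} Gaussians $g^\ell$, defined by $u^\ell \defequal (\al u^{\ell-1} + \la g^\ell/\sqrt n)/\|\cdot\|$. A standard rotation-invariance argument shows that if $u^0$ is uniform on $\bS^{n-1}$ then so is $u^\ell$ for every $\ell$. Under this coupling, the difference $\hat z^\ell - u^\ell$ is driven only by $\sqrt{\rho^{\ell-1}}-1$. The target moments of $\|\vpp(u^\ell)\|^2$ can be computed in closed form using spherical symmetry (giving exactly $1/2$ and the $\bar J_2$ formulas of \cref{prop:hypo_and_cov}), and what remains is to estimate the error $\e[\|\vpp(\hat z^\ell)\|^2 - \|\vpp(u^\ell)\|^2]$ through the coupling.

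The subtle point is that the naive Lipschitz bound $|\|\vpp(\hat z^\ell)\|^2 - \|\vpp(u^\ell)\|^2| \le (\|\hat z^\ell\|+\|u^\ell\|)\|\hat z^\ell-u^\ell\| = O(1/\sqrt n)$ is only half as sharp as required. To recover $O(1/n)$, I would Taylor-expand $F(v) \defequal \|\vpp(v)\|^2/\|v\|^2$ to second order in the perturbation, evaluating the conditional expectations by Stein's lemma (Gaussian integration by parts) in $g^\ell$. Because $g^\ell \stackrel{d}{=} -g^\ell$ and, by the inductive hypothesis, $\hat z^{\ell-1}$ is mean-zero up to $O(1/n)$, the linear-order perturbation contributes only at order $O(1/n)$, and the quadratic-order term gives the claimed $O(1/n)$ correction. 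The same strategy, iterated $k$ times, handles the joint statement \eqref{eq:cov_conj}: the recursion gives $z^{\ell+k}/\|z^\ell\| = \al^k\hat z^\ell + (\text{independent noise of norm}\approx\sqrt{(\al^2+\la^2)^k-\al^{2k}})$, which after normalization yields exactly the conjectured joint distribution with $\cos(\th_k) = \al^k/(\al^2+\la^2)^{k/2}$, plus an $O(1/n)$ residual controlled by the same coupling.

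The main obstacle, I expect, is propagating the per-step $O(1/n)$ error through $d = \Theta(n)$ layers without incurring a $d/n = \Theta(1)$ blowup that would destroy the conclusion. This requires showing that the sphere-valued Markov chain governing $\hat z^\ell$ (and in particular $\rho^\ell$) contracts toward its invariant state at $1$ at least as fast as the one-step errors accumulate, which ultimately reduces to a spectral-gap or uniform-ergodicity estimate for a non-reversible chain on $\bS^{n-1}$ whose transition kernel is distorted by ReLU and by the skip-connection coefficient $\al$. Such sharp contraction estimates are notoriously delicate in high dimension, and I believe this step, rather than the one-step calculations sketched above, is the genuine obstruction to upgrading \cref{conj:hypoactivation} from a Monte-Carlo-supported conjecture to a theorem.
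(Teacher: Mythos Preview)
The statement you are attempting to prove is labeled a \emph{Conjecture} in the paper, and the paper does not supply a proof. The authors offer only (i) a heuristic ``random walk on the sphere'' argument explaining why $\|\vpp(\hat z^\ell)\|^2$ should be biased slightly below $1/2$ and why the bias should be of order $1/n$ on entropy grounds, and (ii) Monte Carlo evidence (their Figure~4) that the three asymptotics \eqref{eq:E_conj}--\eqref{eq:cov_conj} hold with the claimed rates. All of the paper's rigorous results (Theorems~\ref{prop:main} and~\ref{prop:hypo_and_cov}) are conditional on the conjecture. So there is no paper proof to compare against; your proposal is an attempt to upgrade the conjecture to a theorem, which the authors themselves do not attempt.

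On the merits of your strategy: the Markov-chain coupling to the rotation-invariant reference process $u^\ell$ is exactly the right framing, and the observation that a naive Lipschitz bound on $\|\vpp(\hat z^\ell)\|^2 - \|\vpp(u^\ell)\|^2$ only gives $O(n^{-1/2})$ is the first genuine obstacle. Your proposed fix---a second-order Taylor expansion of $v\mapsto\|\vpp(v)\|^2/\|v\|^2$ combined with Stein's lemma so that the first-order term averages to $O(1/n)$---is plausible for a \emph{single} step, though the phrase ``$\hat z^{\ell-1}$ is mean-zero up to $O(1/n)$'' needs to be made precise (it is a unit vector; presumably you mean that the relevant odd functionals have expectation $O(1/n)$, which is itself part of what must be established inductively).

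You have correctly located the real difficulty: the conjecture is used in the paper with $\ell$ ranging up to $d=\Theta(n)$, so the $O(1/n)$ constant must be uniform in $\ell$. Propagating a per-step $O(1/n)$ error over $\Theta(n)$ steps without blowup requires a quantitative contraction estimate for the $\hat z^\ell$ chain toward its (non-uniform) stationary law, and this chain is non-reversible, lives on $\bS^{n-1}$ with $n\to\infty$, and has a transition kernel distorted by the ReLU through the random scalar $\rho^{\ell-1}=2\|\vpp(\hat z^{\ell-1})\|^2$. Establishing a spectral gap or Wasserstein contraction rate that beats the error accumulation is, as you say, the heart of the matter, and nothing in the paper addresses it. Your proposal is therefore an honest outline of a program rather than a proof, with the decisive step (uniform-in-$\ell$ control via ergodicity) left open---which matches the paper's own assessment that this is a conjecture.
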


\mufan{See \cref{fig:hypo_conj} for Monte Carlo simulations empirically verifying the conjecture for a fixed depth $d$, and see \cref{fig:hypo_vs_layer} for verifying the uniformity in layers $\ell$. 
In particular, we observe that in \cref{subfig:hypo_vs_layer}, we can see the effect of hypoactivation converges rapidly to an equilibrium as the layer $\ell$ increases. 
In fact, we can further verify in \cref{subfig:hypo_ar1} that hypoactivation appears to be autoregressive, which implies the convergence is exponentially fast. 
This motivated the uniformity in layers in \cref{conj:hypoactivation}. 
}

\begin{figure}[t]
\centering
\begin{subfigure}[b]{0.55\textwidth}
\centering
\includegraphics[width=\textwidth]{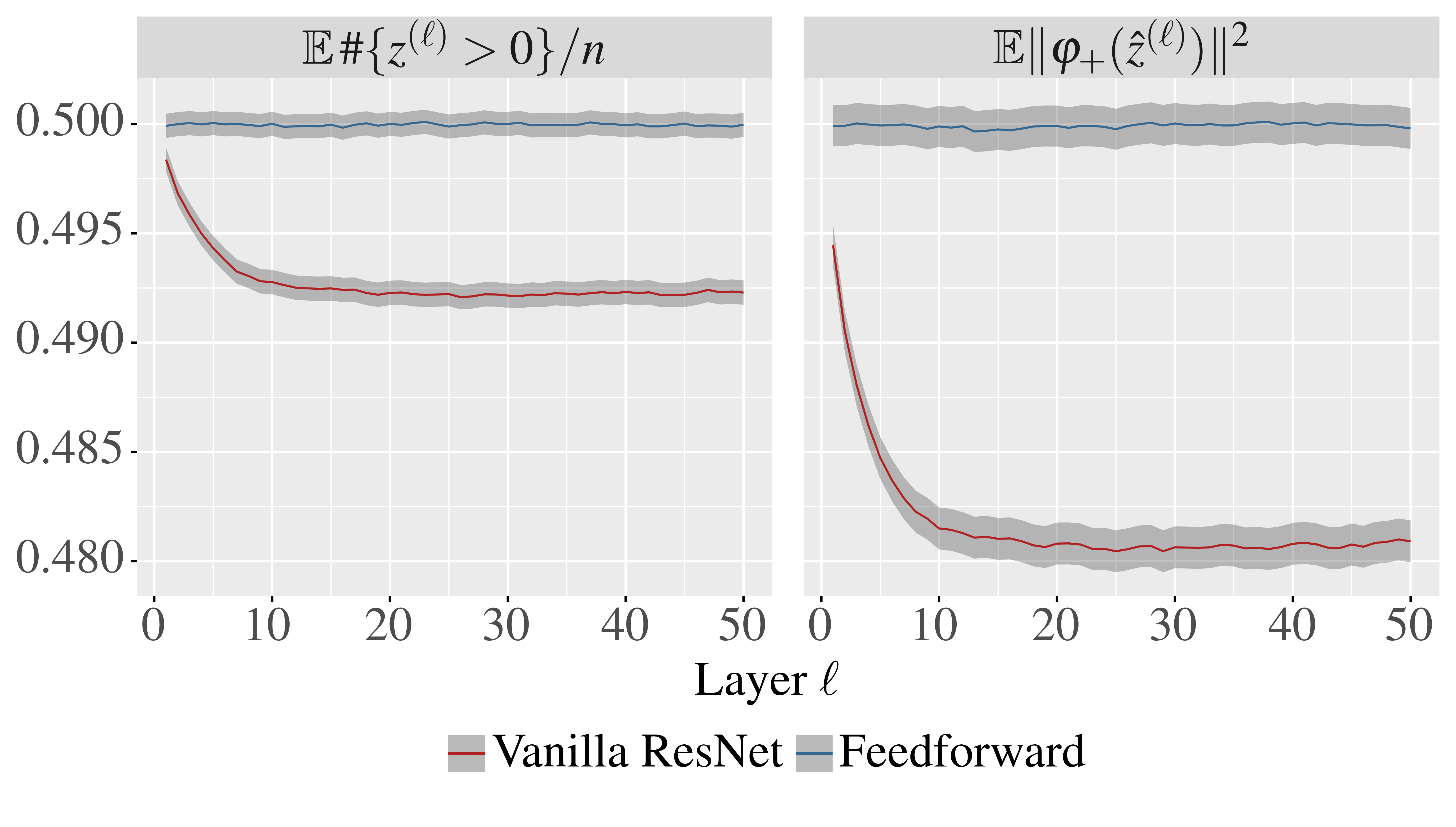}
\caption{
Hypoactivation as a function of layer $\ell$
}
\label{subfig:hypo_vs_layer}
\end{subfigure}
\hfill
\begin{subfigure}[b]{0.35\textwidth}
\centering
\includegraphics[width=\textwidth]{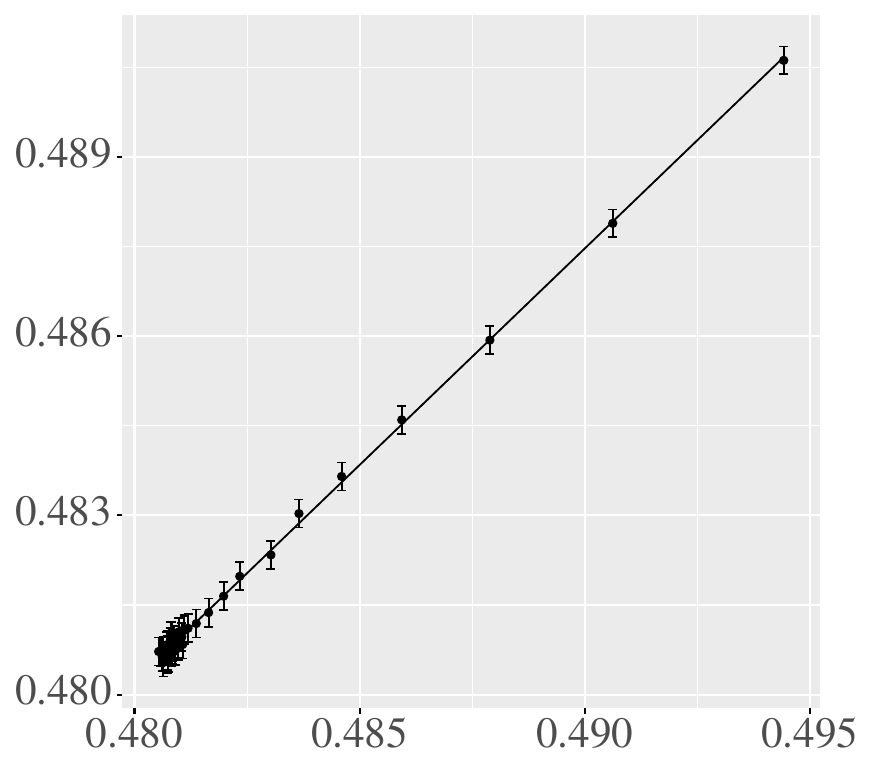}
\caption{
Autoregressive behaviour
$\mathbb{E} \| \varphi_+(\hat{z}^\ell) \|^2$ vs
$\mathbb{E} \| \varphi_+(\hat{z}^{\ell+1}) \|^2$
}
\label{subfig:hypo_ar1}
\end{subfigure}
\caption{
Monte-Carlo simulation for the behaviour of the unit vector $\hat{z}^\ell$ as a function of layer $0\leq \ell \leq d$. Here $n=d=50, \alpha = \lambda = \sqrt{2}^{-1}$.
\cref{subfig:hypo_vs_layer} shows the mean fraction of neurons which are activated, $\e\left[\#\{i:z^\ell_i>0\}\right]/n$ and the norm of the ReLU
$\e\left[ \| \vpp(\hat{z}^\ell) \|^2 \right]$.
The hypoactivation $h_\ell$ is how far this is from $\half$. At layer $\ell=0$, $\hat{z}^\ell$ is uniformly distributed from a unit sphere, but approaches a different steady-state as we go deeper into the network.
\cref{subfig:hypo_ar1} shows evidence that the process
$\e\left[ \| \vpp(\hat{z}^\ell) \|^2 \right]$ seems to be a linear function of the previous layer $\e\left[ \| \vpp(\hat{z}^{\ell-1}) \|^2 \right]$.
\cref{fig:hypo_conj} illustrates the dependence as $n$ varies. }
\label{fig:hypo_vs_layer}
\end{figure}

\section{Proof Ideas for Theorems \ref{prop:main} \& \ref{prop:balanced_resnets_result}}
\label{sec:proof_idea}
A key element of the proof is the following property of Gaussian random matrices. If $W$ which has iid $\cN(0,1)$ entries, then for any vector $x$, we have
\begin{equation} \label{eq:weightmatrix}
    Wx \dequal \norm{x} g,
\end{equation}
where $g$ is a vector whose entries are iid $\cN(0,1)$ random variables. Because of the fully connected first and last layer of the network, \eqref{eq:weightmatrix} implies that
\begin{equation} \label{eq:z0}
    z^0 \dequal \frac{\norm{x}}{\sqrt{\nin}} g, \hspace{1em}
    \zout \dequal \frac{\norm{z^d}}{\sqrt{n}} g^\prime.
\end{equation}
Hence
 $G \defequal  \ln\left( (\norm{z^d}^2/n) \cdot (\norm{x}^2/\nin)^{-1} \cdot \left(\al^2+\la^2\right)^{-d} \right)$ only depends on $n,d,\alpha,\lambda$. (Equivalently, $G$ has the distribution of $\ln(\norm{z^d}^2/n \cdot \left(\al^2+\la^2\right)^{-d})$ when $z^0 = g$.) With this definition, \eqref{eq:z0} also shows $\zout$ is proportional to $\exp(G/2)$, establishing the first part of \cref{prop:main}.

 From this construction, the essence of the proof is to understand the distribution of $\norm{z^d}$ when $z^0 = g$.  To understand $\norm{z^d}$, we look at the ratios $\norm{z^{\ell+1}}/{\norm{z^\ell}}$ layer by layer. By using the homogeneity property of ReLU $\vpp(\abs{c}x) = \abs{c}\vpp(x)$, we can divide $z^{\ell+1}$ from \eqref{eq:defn_resnet} by $\norm{z^\ell}$ to obtain
\begin{equation}\label{eq:zl}
    \frac{z^{\ell+1}}{\norm{z^\ell}} = \al \hat{z}^\ell + \la \sqrt{\frac{2}{n}} W^{\ell+1} \vpp(\hat{z}^\ell) \dequal \al \hat{z}^\ell + \la \sqrt{\frac{2}{n}} \norm{\vpp(\hat{z}^\ell)} g^{\ell+1} ,
\end{equation}
where $g^\ell$ are iid Gaussian vectors with iid $\cN(0,1)$ entries by application of \eqref{eq:weightmatrix}. Hence
\begin{equation*}
    \frac{\norm{z^{\ell+1}}}{\norm{z^\ell}} \dequal \norm{ \al \hat{z}^\ell + \la \sqrt{\frac{2}{n}} \norm{\vpp(\hat{z}^\ell)} g^{\ell+1} } \dequal \norm{ \al \vec{e}_1 + \la \sqrt{\frac{2}{n}} \norm{\vpp(\hat{z}^\ell)} g^{\ell+1} }.
\end{equation*}
The last equality follows by applying an orthogonal transformation $O$ such that $O\hat{z}^\ell = \vec{e}_1 = (1,0,0\ldots,0)^T$ inside the norm, and observing that Gaussian random vectors are invariant under orthogonal transformations $Og^\ell \dequal g^\ell$.
Hence we have the telescoping product for $\norm{z^d}$:
\begin{equation} \label{eq:zd}
 \norm{z^d}= \norm{z^0} \prod_{\ell = 0}^{d-1} \frac{\norm{z^{\ell+1}}^2}{\norm{z^\ell}} \dequal \norm{z^0} \prod_{\ell = 0}^{d-1} \norm{\al \vec{e}_1 + \la \sqrt{\frac{2}{n}}\norm{\vpp(\hat{z}^\ell)} g^{\ell+1} }.
\end{equation}
This shows that $\norm{z^d}$ is a product of $d$ random variables which are dependent on each other only through the terms $\norm{\vpp(\hat{z}^\ell)}$. (Note that $\norm{z^0}$ is independent of $\hat{z}^0$ since $z^0$ is Gaussian.) Since $\norm{\vpp(\hat{z}^\ell)}^2 \approx 1/2$ with typical fluctuations on the scale $1/\sqrt{n}$, therefore the dependence between terms of \eqref{eq:zd} is small. 

Taking the $\ln$ of \eqref{eq:zd} exhibits 
$\ln(\norm{z^d}^2/n)$ as a sum of these weakly correlated random variables. 
\mufan{Here we note that various tail estimates for the same or related quantities have been developed \citep{allen2019convergence,buchanan2021deep}, however these estimates are not precise enough to pinpoint the exact limiting distribution.
In contrast, we are able to derive the exact limiting distribution via a Central Limit Theorem (CLT) for weakly correlated sums \citep{neumann}.}
The proof of \cref{prop:main} is completed by computing the mean, variance and covariance of terms using \cref{conj:hypoactivation}. 
For \cref{prop:balanced_resnets_result}, the final calculation is simplified by \eqref{eq:balanced_resnets_nice} which shows the terms are uncorrelated.
The detailed proof is given in \cref{sec:supp_proofs}.

\section*{Acknowledgement}

We would like to thank 
Blair Bilodeau, 
Gintare Karolina Dziugaite, 
Mahdi Haghifam, 
Yani A. Ioannou, 
James Lucas, 
Jeffrey Negrea, 
Mengye Ren, 
and Ekansh Sharma 
for helpful discussions and draft feedback. 
\mufan{We would also like to thank the anonymous NeurIPS reviewers for insightful feedback. In particular, one identified numerous relations to existing work, and another helped us identify the uniformity requirement in \cref{conj:hypoactivation}}.
ML is supported by Ontario Graduate Scholarship 
and the Vector Institute. 
MN is supported by an NSERC Discovery Grant.
DMR is supported in part by an NSERC Discovery Grant, Ontario Early Researcher Award, and a stipend provided by the Charles Simonyi Endowment.

\printbibliography

\newpage
\appendix
\section{Appendix}

\subsection{Layer dependent coefficients}\label{sec:layer_dependent}
We stated our main result with fixed $\al, \la$, but the result easily extends to the case where $\al,\la$ vary from layer to layer. This allows comparison between our result and the infinite width limits \cite{hayou2021stable, hanin2018start}, where they have $\al=1$ and allow $\la_i$ to varying layer by layer. The statement in that setting is modified as follows.

\begin{prop}
Suppose $\al_i,\la_i$ are sequences such that $\la_i$ is uniformly bounded away from $0$. Define the network by
\begin{equation}
z^{0} = \sqrt{\frac{1}{\nin}}W^0 x, \hspace{1em} z^{\ell} =\al_\ell z^{\ell-1}+\la_\ell \sqrt{\frac{2}{n}} W^\ell {\vpp\left(z^{\ell-1}\right)}\text{ for }1\leq \ell\leq d, \hspace{1em}
\zout = \sqrt{\frac{1}{n}}\Wout z^d
\end{equation}

Consider the limit where \emph{both} the network depth $d \to \infty$ and hidden layer width $n \to \infty$ in such a way that the ratio $\frac{d}{n}$ converges to a constant. In this limit, the distribution of the network output $\zout$ for a given input $x$ is given by
\begin{equation}
    \zout \dequal \frac{\norm{x}}{\sqrt{\nin}} \prod_{i=1}^\ell \sqrt{\al^2_i + \la^2_i } \exp\left(\half G\right) \vec{Z}
\end{equation}
where $\vec{Z}=(Z_1,\ldots,Z_{\nout})\in \bR^{\nout}$ is a Gaussian random vector with iid $\cN(0,1)$ entries and, assuming that Conjecture \ref{conj:hypoactivation} holds, then the random variable $G \in \bR$ converges to a Gaussian random variable in this limit and satisfies
\begin{align} \label{eq:layer-dependent}
 \e\left[G\right] & = -\frac{\beta}{2} +  \sum_{\ell = 1}^d \frac{2\la_\ell^{2}}{\al_\ell^2 + \la_\ell^2} h_\ell + O\left( \frac{d}{n^{2}}\right)\\
\var\left[G\right] & = \beta +\sum_{\stackrel{\ell, \ell^{\prime}=1}{\ell \neq \ell^\prime}}^{d} \frac{\la_\ell^4}{(\al_\ell^2+\la_\ell^2)^2} \frac{\bar{J}_{2}(\th_{\ell,\ell^\prime})-\bar{J}_{2}(\pi-\th_{\ell,\ell^\prime})}{n}+ O\left( \frac{d}{n^{2}}\right),
\end{align}
where $\th_{\ell,\ell^\prime}$ is such that
$\cos\left(\th_{\ell,\ell^\prime}\right) = \prod_{i=\ell}^{\ell^\prime-1} \frac{\al^2_i}{\sqrt{\al^2_i+\la^2_i}}$ and
$\beta = \frac{2}{n}+\frac{1}{n}\sum_{\ell=1}^d \frac{5\la_\ell^4 + 4\al_\ell^2 \la_\ell^2}{(\al_\ell^2+\la_\ell^2)^2}$. 

The corresponding limit theorem for balanced ResNets also holds; the hypoactivation and layer-wise covariance term in \eqref{eq:layer-dependent} vanish.
\end{prop}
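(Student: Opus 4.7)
The plan is to adapt the proof strategy of \cref{prop:main} step by step to allow the coefficients $\al_\ell,\la_\ell$ to depend on the layer index. First I would re-derive the distributional reduction to a scalar. By \eqref{eq:weightmatrix} applied to the fully connected first and last layers, we still have $z^0 \dequal \norm{x}/\sqrt{\nin}\, g$ and $\zout \dequal \norm{z^d}/\sqrt{n}\, g^\prime$ with $g,g^\prime$ independent standard Gaussian vectors, so $\zout$ has the product form claimed, with
\begin{equation*}
G \dequal \ln\!\left(\norm{z^d}^2/n\right) - \sum_{\ell=1}^d \ln\!\left(\al_\ell^2+\la_\ell^2\right)
\end{equation*}
when $z^0 = g$. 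Everything that follows is a CLT-type analysis of this scalar.

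Next I would reproduce the telescoping identity of \eqref{eq:zd}, which here becomes
\begin{equation*}
\norm{z^d}^2 \dequal \norm{z^0}^2 \prod_{\ell=0}^{d-1} \bigl\|\al_{\ell+1}\vec{e}_1 + \la_{\ell+1}\sqrt{\tfrac{2}{n}}\,\norm{\vpp(\hat{z}^\ell)}\, g^{\ell+1}\bigr\|^2.
\end{equation*}
Taking logarithms writes $G$ as a sum of $d$ terms $X_\ell$, each a function of $(\hat{z}^{\ell-1}, g^\ell)$. Because $\la_i$ is bounded uniformly away from $0$, a Taylor expansion in $1/\sqrt{n}$ around $\norm{\vpp(\hat z^{\ell-1})}^2 = 1/2$ and around the mean magnitude of $\|\al_\ell \vec{e}_1 + \la_\ell\sqrt{2/n}\,g^\ell/\sqrt{2}\|$ can be carried out uniformly in $\ell$. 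The leading mean contribution of $X_\ell$ decomposes into the deterministic piece $\ln(\al_\ell^2+\la_\ell^2)$ (absorbed into the prefactor), a second-order Gaussian correction contributing $-\be/2$ after summation with the stated layer-dependent $\be$, and a hypoactivation correction $\frac{2\la_\ell^2}{\al_\ell^2+\la_\ell^2} h_\ell$ arising from the perturbation of $\norm{\vpp(\hat z^{\ell-1})}^2$ around $1/2$ as quantified by \cref{conj:hypoactivation}.

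For the variance, I would compute $\cov(X_\ell, X_{\ell^\prime})$ for $\ell \neq \ell^\prime$. Since the $g^\ell$ are independent across layers, the only source of cross-covariance is through $\hat{z}^{\ell-1}$ and $\hat{z}^{\ell^\prime-1}$. Iterating the approximation $\hat{z}^{\ell+1} \approx (\al_{\ell+1}\hat{z}^\ell + \la_{\ell+1}g^{\ell+1}/\sqrt{n})/\sqrt{\al_{\ell+1}^2+\la_{\ell+1}^2}$ shows that the two direction vectors are jointly approximated by the pair $(u,\cos(\th_{\ell,\ell^\prime})u + \sin(\th_{\ell,\ell^\prime})g/\sqrt{n})$ with $\cos(\th_{\ell,\ell^\prime}) = \prod_{i=\ell}^{\ell^\prime-1}\al_i/\sqrt{\al_i^2+\la_i^2}$, i.e.\ the telescoping product of per-layer cosines, exactly as in the generalization of \cref{conj:hypoactivation} to layer-dependent coefficients. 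Substituting this joint approximation into the arc-cosine kernel computation of \citep{cho2009kernel} produces the $\bar J_2$ expression attached to the claimed layer-dependent prefactor. The single-layer variances together contribute the $\be$ term, again from the $g^\ell$ fluctuation.

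Finally I would invoke the CLT for weakly dependent sums \citep{neumann} used for \cref{prop:main}. Its hypotheses carry over: the uniform lower bound on $\la_i$ prevents any layer from degenerating, and because $\cos(\th_{\ell,\ell^\prime})$ decays geometrically in $|\ell-\ell^\prime|$ (each factor $\al_i/\sqrt{\al_i^2+\la_i^2}$ is uniformly bounded below $1$), the cross-covariances decay fast enough for weak dependence. The main obstacle I anticipate is bookkeeping: one must verify that every $O(d/n^2)$ error estimate from the proof of \cref{prop:main} remains $O(d/n^2)$ with constants depending only on the uniform bounds on the sequences $\al_\ell,\la_\ell$ and not on their individual values, which is where the hypothesis that $\la_i$ is uniformly bounded away from $0$ is crucial. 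The balanced-ResNet analogue follows immediately: \eqref{eq:balanced_resnets_nice} kills both the hypoactivation sum and all cross-covariances layer by layer, regardless of the coefficient sequence, leaving only the $-\be/2$ mean and $\be$ variance.
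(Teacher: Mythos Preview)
Your proposal is correct and follows exactly the approach the paper intends: the paper does not give a separate proof of this proposition but simply asserts that the argument for \cref{prop:main} ``easily extends'' to layer-dependent coefficients, and your sketch is precisely that extension carried out in detail. Your identification of where the uniform lower bound on $\la_i$ is used (uniform Taylor-expansion control and geometric decay of the interlayer cosines) is the only additional point needed beyond the constant-coefficient proof, and you have flagged it correctly.
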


The behaviour is a complicated function of the sequences $\al_i,\la_i$. It would be interesting to use these theoretical results to guide the choice of parameters $\al_i,\la_i$ and investigate how this effects training behaviour.

\section{Proof of main results}
\label{sec:supp_proofs}

To simplify the exposition of the proofs, we will assume without loss of generality that $\al^2 + \la^2 = 1$. The general case can be reduced to the case $\al^2 + \la^2 = 1$ by dividing by $\sqrt{\al^2+\la^2}$ in each layer and rescaling the parameters $\al,\la$ to $\frac{\al}{\sqrt{\al^2+\la^2}}$ and $\frac{\la}{\sqrt{\al^2+\la^2}}$.

By the argument of Section \ref{sec:proof_idea}, the proof reduces to showing that the random variable $G = \ln(\norm{z^0}^2/n)+\sum_{\ell=1}^d \ln(X_\ell)$ has the desired asymptotic behaviour where $X_\ell$ is defined to be
\begin{equation} \label{eq:Xell_defn}
    X_\ell = \norm{\al \vec{e}_1 + \la \sqrt{\frac{2}{n}} \norm{\vpp\left( \hat{z}^\ell \right)}g^\ell}^2 \\
    = \al^2 + \la^2 \frac{2}{n}  \norm{\vpp\left( \hat{z}^\ell \right)}^2 \norm{g^\ell}^2 + 2\al \la \sqrt{\frac{2}{n}} \norm{\vpp\left( \hat{z}^\ell \right)} g^\ell_1.
\end{equation}

\subsection{Mean Calculation}
\begin{lem} \label{lm:mean}
$\e[X_\ell] = 1 + 2\la^2 h_\ell$  where $h_\ell$ is the hypoactivation of layer $\ell$.
\end{lem}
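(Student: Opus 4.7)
The plan is a direct computation: expand the squared norm in the definition of $X_\ell$ from \eqref{eq:Xell_defn} and take expectations term by term, using the independence of $g^\ell$ from $\hat{z}^\ell$. Recall from the setup of \cref{sec:supp_proofs} that we work under the WLOG assumption $\al^2 + \la^2 = 1$, and recall from the derivation leading to \eqref{eq:zl} that $g^\ell$ arises from the fresh weight matrix $W^\ell$ (via the identity \eqref{eq:weightmatrix}), so $g^\ell$ is independent of $\hat{z}^\ell$.

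Concretely, I would first write
\[
X_\ell = \al^2 + \la^2 \tfrac{2}{n}\,\norm{\vpp(\hat{z}^\ell)}^2\,\norm{g^\ell}^2 + 2\al\la\,\sqrt{\tfrac{2}{n}}\,\norm{\vpp(\hat{z}^\ell)}\,g^\ell_1,
\]
as already provided by \eqref{eq:Xell_defn}. Then apply expectation to each summand. Using independence, $\ee{\norm{\vpp(\hat{z}^\ell)}^2\,\norm{g^\ell}^2} = \ee{\norm{\vpp(\hat{z}^\ell)}^2}\cdot\ee{\norm{g^\ell}^2}$, and since $g^\ell$ has $n$ iid $\cN(0,1)$ entries, $\ee{\norm{g^\ell}^2}=n$. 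The cross term vanishes because $\ee{g^\ell_1}=0$ and $g^\ell_1$ is independent of the other factor. Therefore
\[
\ee{X_\ell} = \al^2 + 2\la^2 \,\ee{\norm{\vpp(\hat{z}^\ell)}^2}.
\]

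Finally, substitute the definition $h_\ell = \ee{\norm{\vpp(\hat{z}^\ell)}^2} - \tfrac{1}{2}$ from \eqref{eq:hypoactivation} to get $\ee{X_\ell} = \al^2 + 2\la^2(h_\ell + \tfrac{1}{2}) = \al^2 + \la^2 + 2\la^2 h_\ell = 1 + 2\la^2 h_\ell$, using $\al^2+\la^2=1$. There is no real obstacle here; the lemma is essentially bookkeeping that isolates how the hypoactivation $h_\ell$ enters the mean of a single layerwise multiplicative factor. The only point requiring slight care is verifying the independence structure of $g^\ell$ and $\hat{z}^\ell$, which follows from the Gaussian rotation argument used to produce $g^\ell$ in \eqref{eq:zl}.
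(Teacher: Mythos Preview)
Your proof is correct and follows essentially the same approach as the paper: both take expectations term by term in the expansion \eqref{eq:Xell_defn}, use independence of $g^\ell$ from $\hat{z}^\ell$ together with $\e[\norm{g^\ell}^2]=n$ and $\e[g^\ell_1]=0$, and then substitute the definition of $h_\ell$ along with $\al^2+\la^2=1$. Your added remark about the source of the independence (via the fresh weight matrix and \eqref{eq:weightmatrix}) is a welcome clarification but does not change the argument.
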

\begin{proof}
Taking expectation of both sides of \eqref{eq:Xell_defn}, we have
\begin{align*}
 \e\left[X_\ell\right]    &= \al^2 + \la^2 \frac{2}{n}  \e\left[ \norm{\vpp\left( \hat{z}^\ell \right)}^2 \right] \e\left[ \norm{\vec{g}^\ell}^2\right] + 2\al \la \sqrt{\frac{2}{n}} \e\left[ \norm{\vpp\left( \hat{z}^\ell \right)}\right] \e \left[\vec{g}^\ell_1\right] \\
&= \al^2 + \la^2 \frac{2}{n} \left(  \half + h_\ell\right) n + 0 = 1 + 2\la^2 h_\ell,
\end{align*}
where we have used $\al^2+\la^2=1$ in the last line.
\end{proof}
\begin{cor}
Assume Conjecture \ref{conj:hypoactivation}. Then $h_\ell = O\left(\frac{1}{n}\right)$ and $\e[X_\ell] = 1+O(\frac{1}{n})$.
\end{cor}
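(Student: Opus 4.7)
The plan is to derive the corollary as a direct consequence of Lemma \ref{lm:mean} combined with the hypoactivation estimate \eqref{eq:E_conj} assumed in Conjecture \ref{conj:hypoactivation}. The only computational ingredient I need is the baseline value $\e[\norm{\vpp(u)}^2] = \half$ when $u$ is uniformly distributed on the sphere $\bS^{n-1}$. This is a one-line symmetry argument: since $u \dequal -u$, we have $\norm{\vpp(u)}^2 \dequal \norm{\vp_-(u)}^2$, and the pointwise identity $\norm{\vpp(u)}^2 + \norm{\vp_-(u)}^2 = \norm{u}^2 = 1$ forces the common expectation to be exactly $\half$.

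Given this baseline, the first assertion is immediate from \eqref{eq:E_conj}: one has $\e[\norm{\vpp(\hat{z}^\ell)}^2] = \half(1 + O(1/n)) = \half + O(1/n)$, so by the definition of hypoactivation in \eqref{eq:hypoactivation} we get $h_\ell = O(1/n)$. The second assertion is then a one-step substitution into Lemma \ref{lm:mean}: $\e[X_\ell] = 1 + 2\la^2 h_\ell = 1 + O(1/n)$.

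There is essentially no obstacle here; the corollary is bookkeeping, and all of the nontrivial content has already been deposited in the assumed Conjecture \ref{conj:hypoactivation}. The only point worth double-checking is that the $O(1/n)$ notation propagates correctly, specifically that multiplying the implicit constant of \eqref{eq:E_conj} by the factor $2\la^2$ from Lemma \ref{lm:mean} preserves uniformity in $d,n$ (with fixed ratio $d/n$) in the sense of the footnote following \eqref{eq:hypoactivation}. Since $\la$ is treated as a fixed hyperparameter in that convention, this is automatic, so no additional work is required.
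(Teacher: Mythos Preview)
Your proposal is correct and follows essentially the same route as the paper: use the baseline $\e[\norm{\vpp(u)}^2]=\tfrac{1}{2}$ together with \eqref{eq:E_conj} to get $h_\ell=O(1/n)$, then substitute into Lemma~\ref{lm:mean}. Your write-up is in fact a bit more explicit than the paper's one-line proof, since you spell out the symmetry argument for the baseline and the application of Lemma~\ref{lm:mean} for the second assertion.
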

\begin{proof} By the conjecture, $h_\ell = \e[\norm{\vpp(\hat{z}^\ell)}^2] - \half = \e[\norm{\vpp(\hat{z}^\ell)}^2]-\e[\norm{\vpp(u)}^2] = O(\frac{1}{n})$.\end{proof}
\subsection{Variance Calculation}
\begin{lem}\label{lm:var_generic}
Let $z \in \bR^{n}$ be any random vector and let $g \in \bR^n$ be an independent Gaussian random vector with iid $\cN(0,1)$ entries. Define
$$X= \norm{\al e_1 + \la \sqrt{\frac{2}{n}} \norm{\vpp(z)} g}^2 = \al^{2}+\la^{2}2\norm{\vpp(z)}^{2}\frac{\norm g^{2}}{n}+\frac{2\al\la}{\sqrt{n}}\norm{\sqrt{2}\vpp(z)}g_{1}.$$
Then
$$\var(X) = \la^{4}\frac{n+2}{n}\var\left[2\norm{\vpp(z)}^{2}\right]+\frac{2\la^{4}}{n}\e\left[2\norm{\vpp(z)}^{2}\right]^{2}+\frac{4\al^{2}\la^{2}}{n}\e\left[2\norm{\vpp(z)}^{2}\right].$$
\end{lem}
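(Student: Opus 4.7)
The plan is to reduce the computation to a straightforward variance-of-a-sum calculation by introducing the shorthand $R \defequal 2\norm{\vpp(z)}^{2}$ and writing $X = \al^{2} + A + B$, where
\[
A \defequal \la^{2} R \,\frac{\norm g^{2}}{n}, \qquad B \defequal \frac{2\al\la}{\sqrt{n}}\, \sqrt{R}\, g_{1}.
\]
The additive constant $\al^{2}$ disappears from the variance, and I would then expand $\var(X) = \var(A) + \var(B) + 2\cov(A,B)$. The crucial structural observation I would exploit throughout is that $z$ (and hence $R$) is independent of the Gaussian vector $g$, so all expectations over $g$ factor out of expectations involving $R$.

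For $\var(A)$, I would use the standard identity for products of independent random variables, $\var(UV) = \var(U)\,\e[V^{2}] + \e[U]^{2}\var(V)$, with $U = R$ and $V = \norm g^{2}/n$. Since $\norm g^{2} \sim \chi^{2}_{n}$, we have $\e[V] = 1$, $\var(V) = 2/n$, and $\e[V^{2}] = (n+2)/n$, producing exactly the first two terms of the claimed formula. For $\var(B)$, the factor $\sqrt{R}\,g_{1}$ has mean zero (because $\e[g_{1}] = 0$) and second moment $\e[R]\,\e[g_{1}^{2}] = \e[R]$, so $\var(B) = \tfrac{4\al^{2}\la^{2}}{n}\e[R]$, matching the third term.

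The only remaining step is to show $\cov(A,B) = 0$. Since $\e[B] = 0$ it suffices to compute $\e[AB]$, which factors as a constant times $\e[R^{3/2}]\,\e[\norm g^{2} g_{1}]$ by independence of $R$ and $g$. Expanding $\norm g^{2} g_{1} = g_{1}^{3} + g_{1}\sum_{i\geq 2} g_{i}^{2}$, every term is odd in some coordinate of $g$, so the expectation vanishes. Summing $\var(A) + \var(B)$ yields the stated formula. There is no real obstacle here — the only place one has to be careful is verifying the $\chi^{2}$ moments and that the cross-term truly vanishes by the oddness of $g_{1}$; the rest is bookkeeping.
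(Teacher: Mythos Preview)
Your proof is correct and follows essentially the same approach as the paper: both write $X-\al^2$ as the sum of the two terms you call $A$ and $B$, observe that the cross term vanishes because $\e[\norm{g}^2 g_1]=0$ (the paper phrases this as $\cov(\norm{g}^2,g_1)=0$), and then compute each piece using the $\chi^2_n$ moment $\e[\norm{g}^4]=n(n+2)$. The only cosmetic difference is that the paper computes $\e[(X-\al^2)^2]$ directly and then subtracts $(\e[X]-\al^2)^2$, whereas you invoke the product-variance identity $\var(UV)=\var(U)\e[V^2]+\e[U]^2\var(V)$ for independent $U,V$; this is the same calculation reorganized, not a different argument.
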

\begin{proof}
We will use the decomposition $\var\left(X\right)=\e\left[(X-\al^{2})^{2}\right]+\left(\e[X]-\al^{2}\right)^{2}$ and compute the the two terms individually.

In the second term, since $\e[g_1]=0$ and $\e[\norm{g}^2]=n$, we have $\e[X]-\al^{2}=\la^{2}\e\left[2\norm{\vpp(z)}^{2}\right]$.

To compute $\e[(X-\al^{2})^2]$, notice that $X-\al^{2}=\la^{2}2\norm{\vpp(z)}^{2}\cdot\frac{1}{n}\norm g^{2}+2\al\la\frac{1}{\sqrt{n}}\norm{\sqrt{2}\vpp(z)}g_{1}$
is a sum of two terms. The two terms are uncorrelated since $\cov(\norm{g}^2,g_1) = 0$. Hence, the expectation of the
cross term in $\left(X-\al^{2}\right)^2$ is zero and we can compute
\begin{align*}
\e\left[(X-\al^{2})^{2}\right] & =\la^{4}\e\left[4\norm{\vpp(z)}^{4}\right]\e\left[\frac{\norm{ g}^{4}}{n^{2}}\right]+\frac{4\al^{2}\la^{2}}{n}\e\left[2\norm{\vpp(z)}^{2}\right]\e\left[g_{1}^{2}\right]\\
 & =\la^{4}\e\left[4\norm{\vpp(z)}^{4}\right]\frac{n+2}{n}+\frac{4\al^{2}\la^{2}}{n}\e\left[2\norm{\vpp(z)}^{2}\right].
\end{align*}
We have used the fact about $\ch^{2}_n$ random variables that $\e\left[\norm g^{4}\right]=n(n+2)$. Finally then:
\begin{align*}
\var\left(X\right) & =\e\left[\left(X-\al^{2}\right)^{2}\right]-\la^{4}\e\left[2\norm{\vpp(z)}^{2}\right]^{2}\\
 & =\la^{4}\e\left[4\norm{\vpp(z)}^{4}\right]\frac{n+2}{n}-\la^{4}\e\left[2\norm{\vpp(z)}^{2}\right]^{2}+\frac{4\al^{2}\la^{2}}{n}\e\left[2\norm{\vpp(z)}^{2}\right]\\
 & =\la^{4}\frac{n+2}{n}\var\left[2\norm{\vpp(z)}^{2}\right]+\frac{2\la^{4}}{n}\e\left[2\norm{\vpp(z)}^{2}\right]^{2}+\frac{4\al^{2}\la^{2}}{n}\e\left[2\norm{\vpp(z)}^{2}\right]
\end{align*}
\end{proof}
\begin{lem}\label{lm:var_uniform}
If  $u \in \bS^{n-1}$ is a random vector which is distributed \emph{uniformly} from the unit sphere, then
$$\e\left[2\norm{\vpp(u)}^{2}\right]=1, \hspace{1em} \var\left[2\norm{\vpp(u)}^{2}\right]=\frac{3}{n+2}.$$
\end{lem}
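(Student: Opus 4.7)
The plan is to exploit the pointwise identity $2\vpp(x)^2 = x^2\bigl(1+\sgn(x)\bigr)$, which holds for all $x \in \bR$ (with any convention for $\sgn(0)$, since both sides vanish there). Summing over coordinates and using $\|u\|=1$ gives the clean decomposition
\begin{equation*}
2\|\vpp(u)\|^2 = \|u\|^2 + \sum_{i=1}^n u_i^2 \sgn(u_i) = 1 + \sum_{i=1}^n u_i^2 \sgn(u_i),
\end{equation*}
which separates the quantity of interest into a deterministic piece plus a mean-zero fluctuation. The mean and variance computations then reduce to understanding the single fluctuation sum $F \defequal \sum_i u_i^2 \sgn(u_i)$.

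For the mean, each summand $u_i^2 \sgn(u_i)$ has expectation zero by symmetry: the uniform measure on $\bS^{n-1}$ is invariant under the reflection negating only the $i$-th coordinate, and this reflection flips $\sgn(u_i)$ while preserving $u_i^2$. So $\e[F]=0$ and $\e[2\|\vpp(u)\|^2]=1$. For the variance, I expand
\begin{equation*}
\var\!\left(2\|\vpp(u)\|^2\right) = \e[F^2] = \sum_i \e[u_i^4] + \sum_{i \neq j} \e\bigl[u_i^2 u_j^2 \sgn(u_i)\sgn(u_j)\bigr].
\end{equation*}
The same reflection argument (applied just to the index $i$, leaving all $u_k$ for $k \neq i$ fixed) negates the cross term $u_i^2 u_j^2 \sgn(u_i)\sgn(u_j)$ without changing its distribution, so every off-diagonal term vanishes.

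The only ingredient left is the diagonal moment $\e[u_i^4]=3/\bigl(n(n+2)\bigr)$. The cleanest derivation uses the polar decomposition $u \dequal g/\|g\|$ with $g \sim \cN(0,I_n)$: since $g/\|g\|$ is independent of $\|g\|$, one has $\e[g_i^4] = \e[u_i^4]\,\e[\|g\|^4]$, and plugging in $\e[g_i^4]=3$ together with $\e[\|g\|^4]=n(n+2)$ (because $\|g\|^2 \sim \chi^2_n$) yields the claim. Summing the $n$ diagonal contributions gives $n \cdot 3/\bigl(n(n+2)\bigr) = 3/(n+2)$. There is no real obstacle here; the proof reduces to the reflection symmetry of the uniform measure on the sphere plus one elementary fourth-moment calculation.
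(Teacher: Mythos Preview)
Your proof is correct but takes a genuinely different route from the paper's. The paper transfers the entire computation to independent Gaussians via the equality in distribution $\vpp(u)\norm{g} \dequal \vpp(g)$ (a consequence of $u \dequal g/\norm{g}$ being independent of $\norm{g}$ together with positive homogeneity of $\vpp$), so that $\e\bigl[2\norm{\vpp(u)}^2\bigr] = \e\bigl[2\norm{\vpp(g)}^2\bigr]/\e\bigl[\norm{g}^2\bigr]$ and similarly for the fourth moment; it then exploits the independence of $g_i^2$ and $\one\{g_i>0\}$ to compute $\e\bigl[2\norm{\vpp(g)}^2\bigr]=n$ and $\e\bigl[4\norm{\vpp(g)}^4\bigr]=n^2+5n$, from which the variance follows. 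You instead stay on the sphere, using the pointwise identity $2\vpp(x)^2 = x^2(1+\sgn(x))$ and the coordinate-reflection invariance of the uniform measure to kill all sign-odd terms directly, reducing the whole problem to the single moment $\e[u_i^4]$. Your route is slightly more economical---the constraint $\norm{u}=1$ does most of the work for free, and only one Gaussian calculation is needed at the end---while the paper's approach sidesteps the coordinate dependence on the sphere entirely by working with iid Gaussians, a device it reuses in the subsequent covariance lemmas.
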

\begin{proof}
This can be calculated directly using properties of the unit sphere, but the proof is complicated by the fact that the entries $u_i,u_j$ are \emph{not} independent. Instead, there is an elementary proof using the following equality in distribution:
\[
\vpp(u)\norm g\dequal\vpp(g),
\]
where $g \in \bR^n$ has iid $\cN(0,1)$ entries and is independent of $u$. This follows because of the fact that $u\dequal\frac{g}{\norm g}$ is independent
of $\norm{g}$, so $u \norm{g} \dequal g$. By also using the fact that $\vp_{+}(\cdot)$ is a positive homogeneous function
$\vp_{+}(\abs cx)=\abs c\vp_{+}(x)$, we see by applying  $\vp_{+}$ to this equality in distribution that $\vp_{+}\left(u\right)\norm g\dequal\vpp(g)$ as desired.

Taking norm and expectation of this equality in distribution gives $\e\left[2\norm{\vpp(u)}^{2}\norm g^{2}\right]=\e\left[2\norm{\vpp(g)}^{2}\right]$.
Since $\norm g$ and $u$ are independent, we can factor and rearrange to obtain
\[
\e\left[2\norm{\vpp(u)}^{2}\right]=\frac{\e\left[2\norm{\vpp(g)}^{2}\right]}{\e\left[\norm g^{2}\right]}.
\]
Since the entries of $g$ are independent
of each other, it is easier to compute using $g$ and this identity instead of using $u$. Moreover, because Gaussian distribution are symmetrically
distributed, we have that $\left\{ g_{1}^{2},\ldots,g_{n}^{2}\right\} $ is independent
of,$\one\left\{ g_{1}>0\right\} ,\ldots,\one\left\{ g_{n}>0\right\} $
Hence:
\begin{equation*}
\e\left[2\norm{\vpp(g)}^{2}\right] =\e\left[2\sum_{i=1}^{n}g_{i}^{2}\one\left\{ g_{i}>0\right\} \right] =2\sum_{i=1}^{n}\e\left[g_{i}^{2}\right]\p\left[g_{i}>0\right] =2n\half=n
\end{equation*}
and so $\e\left[2\norm{\vpp(u)}^{2}\right]=\frac{\e\left[2\norm{\vpp(g)}^{2}\right]}{\e\left[\norm g^{2}\right]}=\frac{n}{n}=1$. Similarly, using $\e\left[4\norm{\vpp(u)}^{4}\right]=\frac{\e\left[4\norm{\vpp(g)}^{4}\right]}{\e\left[\norm g^{4}\right]}$ we now compute by looking at diagonal and off-diagonal terms as follows
\begin{align*}
\e\left[4\norm{\vpp(g)}^{4}\right] & =4\e\left[\sum_{i,j=1}^{n}g_{i}^{2}g_{j}^{2}\one\left\{ g_{i}>0\right\} \one\left\{ g_{j}>0\right\} \right]\\
 & =4\sum_{i=1}^{n}\e\left[g_{i}^{4}\right]\p\left[g_{i}>0\right]+4\sum_{\substack{i,j=1\\
i\neq j
}
}^{n}\e\left[g_{i}^{2}\right]\e\left[g_{j}^{2}\right]\p\left[g_{i}>0\right]\p\left[g_{j}>0\right]\\
 & =4n\cdot3\cdot\half+4n(n-1)\frac{1}{2}\cdot\frac{1}{2} =n^{2}+5n.
\end{align*}
Using $\e\left[\norm g^{4}\right]=n(n+2)$, we finally obtain $\e\left[4\norm{\vpp(u)}^{4}\right]=\frac{n(n+5)}{n(n+2)}$ from which the claimed variance formula follows.
\end{proof}
\begin{cor}
If $u$ is a uniform from the unit sphere $\bS^{n-1}$, and $X= \norm{\al e_1 + \la \sqrt{\frac{2}{n}} \norm{\vpp(u)} g}^2$ as in Lemma \ref{lm:var_generic}, then
$$\var(X) = \frac{5\la^4 + 4\al^2\la^2}{n}.$$
\end{cor}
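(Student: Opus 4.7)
The plan is to obtain this corollary as a direct specialization of Lemma~\ref{lm:var_generic} to the case where the input vector $z$ is uniformly distributed on the sphere, substituting the moments computed in Lemma~\ref{lm:var_uniform}. Since Lemma~\ref{lm:var_generic} applies to any random vector $z \in \bR^n$ independent of $g$, and a uniformly distributed $u \in \bS^{n-1}$ is certainly independent of the fresh Gaussian $g$, there is no hypothesis check to worry about.

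First I would recall the two ingredients supplied by Lemma~\ref{lm:var_uniform}: namely $\e[2\norm{\vpp(u)}^{2}] = 1$ and $\var[2\norm{\vpp(u)}^{2}] = 3/(n+2)$. I would then plug $z = u$ into the three-term expression for $\var(X)$ given by Lemma~\ref{lm:var_generic}, which yields
\begin{equation*}
\var(X) = \la^{4}\frac{n+2}{n}\cdot\frac{3}{n+2} + \frac{2\la^{4}}{n}\cdot 1^{2} + \frac{4\al^{2}\la^{2}}{n}\cdot 1.
\end{equation*}
The key algebraic observation is that the factor $n+2$ appearing in the first term cancels exactly against the denominator $n+2$ of the variance of $2\norm{\vpp(u)}^{2}$, which is precisely what produces a clean $O(1/n)$ answer rather than leaving an $O(1/n^{2})$ remainder. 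Collecting the $\la^{4}$ terms gives $3\la^{4}/n + 2\la^{4}/n = 5\la^{4}/n$, and combining with the mixed term $4\al^{2}\la^{2}/n$ yields the claimed formula.

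There is no real obstacle here; the content of the corollary is entirely contained in the two preceding lemmas, and the corollary merely records the value of $\var(X_\ell)$ in the baseline case where $\hat{z}^\ell$ is replaced by a uniform sample from the sphere. This is exactly the quantity that Conjecture~\ref{conj:hypoactivation} identifies (up to a multiplicative $(1+O(1/n))$ error) with $\var(X_\ell)$ in the actual network, and so the corollary furnishes the leading-order per-layer variance contribution that was heuristically announced as $(5\la^{4} + 4\al^{2}\la^{2})/(\al^{2}+\la^{2})^{2}$ times $1/n$ in the formula for $\beta$ in \eqref{eq:beta_c} (recalling the normalization $\al^{2}+\la^{2}=1$ adopted at the start of Section~\ref{sec:supp_proofs}).
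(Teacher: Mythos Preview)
Your proposal is correct and takes exactly the same approach as the paper: the paper's proof is simply ``Plug in the result of \cref{lm:var_uniform} into \cref{lm:var_generic},'' and you have spelled out that substitution and the ensuing cancellation in full.
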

\begin{proof}
Plug in the result of \cref{lm:var_uniform} into \cref{lm:var_generic}.
\end{proof}
\begin{lem} \label{lm:var}
Assuming Conjecture \ref{conj:hypoactivation} is true, and with $X_\ell$ defined as \eqref{eq:Xell_defn}, we have
$$\var\left(X_\ell\right)=\frac{5\la^4 + 4\al^2\la^2}{n}\left(1+O\left(\frac{1}{n}\right)\right)$$
\end{lem}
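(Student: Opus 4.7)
The plan is to apply \cref{lm:var_generic} with $z = \hat{z}^\ell$, and then reduce the two unknowns $\e\!\left[2\norm{\vpp(\hat z^\ell)}^2\right]$ and $\var\!\left[2\norm{\vpp(\hat z^\ell)}^2\right]$ to their uniform-sphere counterparts from \cref{lm:var_uniform} via \cref{conj:hypoactivation}. First I would verify the hypothesis of \cref{lm:var_generic}: in \eqref{eq:Xell_defn}, the Gaussian vector $g^\ell$ arises from the fresh weight matrix $W^\ell$ (applied via \eqref{eq:weightmatrix} as in the derivation of \eqref{eq:zl}), so $g^\ell$ is independent of $\hat z^\ell$, which only depends on $W^0,\dots,W^{\ell-1}$. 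Hence the lemma applies.

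Next I would plug in the estimates from \cref{conj:hypoactivation} together with \cref{lm:var_uniform}, namely
\begin{equation*}
\e\!\left[2\norm{\vpp(\hat z^\ell)}^2\right] = 1 + O\!\left(\tfrac{1}{n}\right), \qquad
\var\!\left[2\norm{\vpp(\hat z^\ell)}^2\right] = \tfrac{3}{n+2}\left(1 + O\!\left(\tfrac{1}{n}\right)\right).
\end{equation*}
Substituting into the formula of \cref{lm:var_generic} yields three contributions: the first term becomes $\lambda^4 \tfrac{n+2}{n}\cdot \tfrac{3}{n+2}(1+O(1/n)) = \tfrac{3\lambda^4}{n}(1+O(1/n))$, the second term becomes $\tfrac{2\lambda^4}{n}(1+O(1/n))$, and the third becomes $\tfrac{4\alpha^2\lambda^2}{n}(1+O(1/n))$. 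Summing gives $(5\lambda^4 + 4\alpha^2\lambda^2)/n \cdot (1+O(1/n))$, as claimed.

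There is no real analytic obstacle here — all the work is front-loaded into \cref{lm:var_generic}, \cref{lm:var_uniform}, and \cref{conj:hypoactivation}. The only thing that requires some care is bookkeeping: one must check that the $O(1/n)$ relative error from the conjecture on the \emph{variance} of $2\norm{\vpp(\hat z^\ell)}^2$, which is itself only $O(1/n)$, produces an \emph{absolute} error of $O(1/n^2)$ in the first term of \cref{lm:var_generic}, while the relative $O(1/n)$ error on the mean produces an absolute error of $O(1/n^2)$ in the other two terms. Collecting these gives the stated $(1+O(1/n))$ multiplicative error and completes the proof.
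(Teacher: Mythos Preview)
Your proposal is correct and follows essentially the same approach as the paper: invoke \cref{conj:hypoactivation} to replace the mean and variance of $2\norm{\vpp(\hat z^\ell)}^2$ by their uniform-sphere counterparts from \cref{lm:var_uniform}, then plug into the formula of \cref{lm:var_generic}. Your write-up is in fact more explicit than the paper's, carrying out the three-term substitution and the $O(1/n^2)$ error bookkeeping that the paper leaves implicit.
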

\begin{proof}
By the conjecture, we have that $\e\left[ 2\norm{\vpp(\hat{z}^\ell)}^2\right] = \e\left[ 2\norm{\vpp(u)}^2\right]\left(1 + O\left(\frac{1}{n}\right) \right)$
and $\var\left[ 2\norm{\vpp(\hat{z}^\ell)}^2\right] = \var\left[ 2\norm{\vpp(u)}^2\right]\left(1 + O\left(\frac{1}{n}\right) \right)$.
Hence we can compute $\var(X_\ell)$ up to a factor of $\left(1+O\left(\frac{1}{n}\right)\right)$ from Lemma \ref{lm:var_generic} by plugging in the result of Lemma \ref{lm:var_uniform}. 
\end{proof}

\subsection{Uniform distribution on spheres and Gaussian random variables}\label{sec:Gauss_approx} In this section we develop some approximations which are used in the next section. Let $u \in \bS^{n-1} \subset \bR^n$ be a uniform random variable from the unit sphere. Let $Z \sim \cN(0,1) \in \bR $ be a standard Gaussian. The results in this section concern the error rate in the well known approximation for the marginal distribution of the components $u_i$, namely $\sqrt{n}u_i \approx Z$.

\begin{lem}
Let $f:\bR \to \bR$ be any bounded function. Then the marginal distribution of any coordinate $u_i$ satisfies
$$\e\left[f\left(\sqrt{n}u_i\right) \right] = \e\left[f\left(Z\right) \right]+O\left(\frac{1}{n}\right)$$
\end{lem}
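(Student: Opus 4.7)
The plan is to compare the explicit density of $\sqrt{n}u_i$ against the standard Gaussian density $\phi(y) = (2\pi)^{-1/2}e^{-y^2/2}$, using the bound $|\e[f(\sqrt{n}u_i)] - \e[f(Z)]| \le \|f\|_\infty \int |p_n - \phi|\,dy$ which holds for any bounded $f$. It therefore suffices to show that the total variation distance between $\sqrt{n}u_i$ and $Z$ is $O(1/n)$. Classically, the marginal of $u_i$ is Beta-type, so after rescaling
\[
p_n(y) \defequal \frac{\Gamma(n/2)}{\sqrt{n\pi}\,\Gamma((n-1)/2)}\Bigl(1-\frac{y^2}{n}\Bigr)^{(n-3)/2}\mathbf{1}_{|y|\le\sqrt{n}}.
\]
Applying the Stirling asymptotic $\Gamma(z+\tfrac{1}{2})/\Gamma(z) = \sqrt{z}(1+O(1/z))$ at $z=(n-1)/2$ shows that the prefactor equals $(2\pi)^{-1/2}(1+O(1/n))$. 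Writing $(1-y^2/n)^{(n-3)/2} = e^{-y^2/2}\,e^{\rho(y)}$ with $\rho(y) \defequal \frac{n-3}{2}\ln(1-y^2/n) + \frac{y^2}{2}$ then gives $p_n(y)/\phi(y) = (1+O(1/n))\,e^{\rho(y)}$ on the support of $p_n$.

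I would split the integral $\int |p_n - \phi|\,dy$ at the threshold $M \defequal 2\sqrt{\log n}$. On the inner region $|y|\le M$, Taylor expanding the logarithm gives $\rho(y) = \frac{3y^2}{2n} - \frac{y^4}{4n} + O(y^4/n^2) + O(y^6/n^2)$, and since $|y|\le M$ this means $|\rho(y)| = O((\log n)^2/n) = o(1)$ uniformly. Hence $|e^{\rho(y)}-1|\le C|\rho(y)|$, and
\[
\int_{|y|\le M}|p_n(y) - \phi(y)|\,dy \le C\int \phi(y)|\rho(y)|\,dy + O(1/n) = O(1/n),
\]
because the Gaussian moments $\int \phi(y)y^{2k}\,dy$ are finite for every $k$.

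On the outer region $|y|>M$, both densities have negligible mass. The Gaussian tail gives $\int_{|y|>M}\phi(y)\,dy = O(e^{-M^2/2}) = O(1/n^2)$ immediately. For $p_n$, the elementary bound $(1-y^2/n)^{(n-3)/2}\le e^{-(n-3)y^2/(2n)}\le e^{-y^2/4}$ (valid once $n\ge 6$) yields $\int_{|y|>M}p_n(y)\,dy = O(e^{-M^2/4}) = O(1/n)$. Adding the contributions of the two regions gives $\int|p_n-\phi|\,dy = O(1/n)$, which proves the claim.

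The main obstacle is that $\rho(y)\to -\infty$ as $|y|\to\sqrt{n}$, so the pointwise linearization $e^{\rho}\approx 1+\rho$ breaks down globally. The truncation at $M = 2\sqrt{\log n}$ sidesteps this: on the inner region $\rho$ is small enough to linearize and the polynomial tails in $y$ integrate against $\phi$ to give $O(1/n)$, while on the outer region sub-Gaussian concentration of both densities beats any polynomial rate. The same Taylor-expansion bookkeeping will also be what underlies the higher-order approximations needed later, since the leading correction term $\tfrac{3y^2}{2n} - \tfrac{y^4}{4n}$ has an explicit form that could be used to sharpen $O(1/n)$ to an explicit first-order correction if desired.
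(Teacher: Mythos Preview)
Your argument is correct. You reduce to a total variation bound exactly as the paper does, and your density splitting at $M=2\sqrt{\log n}$ with a Taylor expansion of $\rho$ on the inner region and sub-Gaussian tails on the outer region is a clean, self-contained derivation of the $O(1/n)$ rate. The paper, by contrast, does not carry out this computation at all: it simply invokes Theorem~2 of Stam (1982), which provides the explicit total variation bound $2\bigl(\sqrt{1+3/(n-3)}-1\bigr)=O(1/n)$ directly from Stirling's formula, and the lemma follows in one line. So the two proofs agree in strategy (TV distance via Stirling) but differ in that the paper outsources the analytic work to a reference, while you redo it by hand. What your version buys is self-containment and, as you note, the explicit leading correction $\tfrac{3y^2}{2n}-\tfrac{y^4}{4n}$, which is exactly the kind of information one would need to push the approximation to the next order; the paper's citation gives a slightly sharper explicit constant but no such structural information.
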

\begin{proof}
This is a direct corollary to Theorem 2 of \cite{stam1982limit}, which uses Stirling's formula to show that the total variation distance between the random variables $\sqrt{n}u_i$ and $Z$ is at most $2\left(\sqrt{1+\frac{3}{n-3}} -1\right)=O(1/n)$.
\end{proof}

\begin{lem}
 For $p\in \bN$, the $2p$-th moment of the marginal distribution of any coordinate $u_i$ satisfies
$$\e\left[\left(\sqrt{n}u_i\right)^{2p}\right] = \e\left[Z^{2p}\right]\cdot \left(1\left(1+\frac{2}{n}\right)\cdots\left(1+\frac{2p-2}{n}\right)\right)^{-1} = \e\left[Z^{2p}\right]+O\left(\frac{1}{n}\right)$$ 
\end{lem}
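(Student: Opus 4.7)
The plan is to use the standard stochastic representation $u \stackrel{d}{=} g/\norm{g}$, where $g \in \bR^n$ has iid $\cN(0,1)$ entries, together with the well-known fact that $\norm{g}$ and $u = g/\norm{g}$ are independent. This independence is exactly the same trick that was used in the proof of \cref{lm:var_uniform}, so it fits the style of the paper.

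Starting from $g_i = \norm{g} \cdot u_i$ and raising to the $2p$-th power, independence gives the factorization
\begin{equation*}
\e\left[g_i^{2p}\right] = \e\left[\norm{g}^{2p}\right] \cdot \e\left[u_i^{2p}\right],
\end{equation*}
so after rearranging and multiplying both sides by $n^p$ we have
\begin{equation*}
\e\left[(\sqrt{n}\, u_i)^{2p}\right] = \frac{n^p\, \e[Z^{2p}]}{\e[\norm{g}^{2p}]}.
\end{equation*}
Here I have used that $g_i \sim Z$ so $\e[g_i^{2p}] = \e[Z^{2p}]$. The next step is to invoke the standard moment formula for $\norm{g}^2 \sim \chi^2_n$, namely $\e[\norm{g}^{2p}] = n(n+2)(n+4)\cdots(n+2p-2)$, which follows by iterated integration by parts (or, equivalently, from the Gamma function representation of chi-squared moments). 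Dividing numerator and denominator by $n^p$ turns the denominator into the product $1 \cdot (1+2/n)(1+4/n)\cdots(1+(2p-2)/n)$, yielding the first claimed equality.

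For the second equality, I would simply expand this finite product: since $p$ is fixed and each factor is $1 + O(1/n)$, the product is $1 + O(1/n)$, so its reciprocal is also $1 + O(1/n)$. Multiplying by the constant $\e[Z^{2p}] = (2p-1)!!$ gives $\e[Z^{2p}] + O(1/n)$ as required.

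There is no real obstacle here — the only mild subtlety is knowing (or rederiving) the chi-squared moment formula, but the paper already invokes $\e[\norm{g}^4] = n(n+2)$ in the proof of \cref{lm:var_uniform}, so the general pattern is implicitly established. The proof is a one-line application of the Gaussian/sphere decomposition plus a moment computation.
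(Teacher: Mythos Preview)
Your proposal is correct and follows essentially the same approach as the paper: both use the decomposition $g \dequal \norm{g}\,u$ with $\norm{g}$ independent of $u$, factor the $2p$-th moment, and invoke the $\chi^2_n$ moment formula $\e[\norm{g}^{2p}] = n(n+2)\cdots(n+2p-2)$. You even spell out the $O(1/n)$ step for the second equality a bit more explicitly than the paper does.
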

\begin{proof}
As in the proof of \cref{lm:var_generic}, we use the equality in distribution $\norm{g} u \dequal g$ where $g \in \bR^N$ is a vector whose components are iid $\cN(0,1)$ independent of $u$. From this it follows that
$$\e\left[ \norm{g}^{2p} u_i^{2p} \right] = \e\left[g_i^{2p}\right] = \e\left[Z^{2p}\right]$$
The result then follows by using the independence of $\norm{g}$ and $u$, and the formula for the $p$-th moment of $\chi^2_n$ random variable, namely $\norm{g}^{2p} = n(n+2)\cdots(n+2p-2)$.
\end{proof}
\begin{cor}
Let $f:\bR \to \bR$ be any function that satisfies $\abs{f(x)}\leq Ax^{2p} + B$ for some constants $A,B>0$ and exponent $p \in\bN$. Then
$$\e\left[f\left(\sqrt{n}u_i\right) \right] = \e\left[f\left(Z\right) \right]+O\left(\frac{1}{n}\right)$$
\end{cor}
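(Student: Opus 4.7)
The plan is a truncation argument that combines the two previous lemmas. Given $f$ with $|f(x)|\le A x^{2p}+B$, I would pick a cutoff $L>0$ and split
\begin{equation*}
f(x) = f_L(x) + r_L(x), \qquad f_L(x) \defequal f(x)\,\mathbf{1}_{|x|\le L}, \qquad r_L(x) \defequal f(x)\,\mathbf{1}_{|x|>L}.
\end{equation*}
The truncation $f_L$ is bounded with $\|f_L\|_\infty \le A L^{2p}+B$, so the first lemma of this subsection (bounded test functions) applied to $f_L$ gives $\big|\e[f_L(\sqrt{n}u_i)] - \e[f_L(Z)]\big| = O((L^{2p}+1)/n)$.

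For the tail piece $r_L$, the polynomial bound $|r_L(x)| \le (Ax^{2p}+B)\,\mathbf{1}_{|x|>L}$ combined with the elementary Markov-type bound $|x|^{2p}\mathbf{1}_{|x|>L}\le |x|^{2q}/L^{2(q-p)}$ for any integer $q>p$ yields
\begin{equation*}
\e\big[|r_L(X)|\big] \le \frac{A\,\e[X^{2q}]}{L^{2(q-p)}} + \frac{B\,\e[X^{2q}]}{L^{2q}}.
\end{equation*}
The second (moment) lemma of this subsection ensures that $\e[(\sqrt{n}u_i)^{2q}]$ and $\e[Z^{2q}]$ are both uniformly bounded in $n$, so the tail contribution is $O(L^{-2(q-p)})$ for each of the two distributions. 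The triangle inequality then gives a total error of $O((L^{2p}+1)/n) + O(L^{-2(q-p)})$.

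The final step, and the main technical issue, is to balance the two contributions by choosing $q > p$ sufficiently large and $L$ of an appropriate polynomial order in $n$ so that both terms are $O(1/n)$ with a constant depending only on $A$, $B$, and $p$. Conceptually the argument is propagating the $O(1/n)$ rate from two complementary regimes --- uniformly bounded test functions (first lemma) and pure even polynomials (second lemma) --- to the intermediate class of polynomially-bounded test functions in the corollary. The obstacle is purely the optimization of the cutoff; no new probabilistic input is required beyond the two previous lemmas.
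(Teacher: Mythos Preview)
Your truncation strategy has a genuine gap at the balancing step. With the two error terms you isolate, namely $O((L^{2p}+1)/n)$ from applying the bounded-function lemma to $f_L$ and $O(L^{-2(q-p)})$ from the moment bound on the tail, equating them forces $L^{2q}$ to be of order $n$, and hence each term is of order $n^{p/q-1}$. For any \emph{fixed} integer $q>p$ this exponent is strictly larger than $-1$, so the method yields only $O(n^{-1+\epsilon})$ for every $\epsilon>0$, never $O(1/n)$. Letting $q$ grow with $n$ does not rescue the argument either, because the implicit constant in the tail bound carries the factor $\e[Z^{2q}]=(2q-1)!!$, which grows super-exponentially. In short, total variation (which is what produces the $\|f_L\|_\infty/n$ bound on the truncated piece) combined with moment matching on the tail cannot be balanced to reach the sharp rate; the ``purely optimization'' obstacle you flag is in fact an obstruction.

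The paper proceeds differently: there is no truncation. It writes the difference of expectations directly as $\int f(x)\bigl(\rho_{\sqrt{n}u_i}(x)-\rho_Z(x)\bigr)\,dx$ and dominates the integrand pointwise by $(Ax^{2p}+B)\,\bigl|\rho_{\sqrt{n}u_i}(x)-\rho_Z(x)\bigr|$, then invokes the two lemmas at the level of the density difference. The quantity this targets is the weighted $L^1$ difference $\int x^{2p}\,\bigl|\rho_{\sqrt{n}u_i}-\rho_Z\bigr|$, which is exactly the missing ingredient in your approach and which cannot be recovered by splicing a total-variation bound and a moment bound through a cutoff.
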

\begin{proof}
The proof is immediate writing the difference in expectation as an integral and then comparing $\intop f(x) \left(\rho_{\sqrt{n}u_i}(x) - \rho_Z(x)\right) dx$ to $\intop \left(Ax^{2p}+B\right) \left(\rho_{\sqrt{n}u_i}(x) - \rho_Z(x)\right) dx$ by the results of the previous two lemmas.
\end{proof}

\subsection{Pairwise covariances}
Define the function $\bar{J}_2 :\bR \to \bR$ by
\begin{equation}\label{eq:J2_def}
\bar{J}_2(\th) \defequal 2\e\left[ \vpp^2(Z) \vpp^2\left(\cos(\th)Z+\sin(\th)W\right) \right],
\end{equation}
where $Z\in \bR,W\in \bR$ are iid $\cN(0,1)$ random variables. In \citet{cho2009kernel} they find an explicit formula for this, namely:
\[
\bar{J}_{2}(\th)=\frac{J_{2}\left(\th\right)}{\pi}=\frac{3\sin(\th)\cos(\th)+\left(\pi-\th\right)\left(1+2\cos^{2}\th\right)}{\pi}
\]

\begin{lem}\label{lm:cov}
Let $u\in\bS^{n-1}$ be a uniform random vectors from the unit sphere and let $g \in \bR^n$ be a Gaussian vector with iid $\cN(0,1)$ entries which is independent of $u$. Then
\begin{equation} \label{eq:cov_theory}
\cov\left(2\norm{\vpp\left(u\right)}^{2},2\norm{\vpp\left(\cos(\th) u+\frac{\sin(\th)}{\sqrt{n}} g\right)}^{2}\right) =  \frac{\bar{J}_2(\theta) - \bar{J}_2(\pi - \theta)}{n}\left(1+O\left(\frac{1}{n}\right)\right).
\end{equation}
\end{lem}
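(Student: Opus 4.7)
The plan is to algebraically reduce the covariance to a single sum over coordinates using the identity $2\vpp^2(x) = x^2 + x|x|$, and then exploit two layers of sign-flip symmetry to isolate the leading behaviour. Writing $T_w \defequal \sum_{i=1}^n w_i|w_i|$, I would obtain $2\norm{\vpp(u)}^2 = 1 + T_u$ (since $\norm{u}^2=1$) and $2\norm{\vpp(v)}^2 = \norm{v}^2 + T_v$. Under the joint reflection $(u,g)\dequal(-u,-g)$, the statistic $T_u$ is odd and flips sign while the quadratic form $\norm{v}^2$ is invariant, so $\e[T_u] = 0 = \e[T_v]$ and $\e[T_u \norm{v}^2] = 0$. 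Consequently $\cov\left(2\norm{\vpp(u)}^2,\, 2\norm{\vpp(v)}^2\right) = \cov(T_u, T_v) = \e[T_u T_v]$.

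Next I would expand $\e[T_u T_v] = \sum_{i,j}\e[u_i|u_i|v_j|v_j|]$ and use coordinate exchangeability (simultaneously permuting $u$ and $g$) to collapse the sum to $n\e[u_1|u_1|v_1|v_1|] + n(n-1)\e[u_1|u_1|v_2|v_2|]$. The crucial step is to show the off-diagonal term vanishes \emph{exactly}: in the Gaussian representation $u = g^{(0)}/\norm{g^{(0)}}$ with $g^{(0)}$ iid $\cN(0,1)$ independent of $g$, the single-coordinate flip $(g^{(0)}_2, g_2) \mapsto (-g^{(0)}_2, -g_2)$ preserves the joint law of these independent Gaussians, leaves $\norm{g^{(0)}}$, $u_1$, $g_1$, $v_1$ invariant, and sends $u_2 \to -u_2$, $v_2 \to -v_2$. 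Hence $u_1|u_1|$ is invariant while $v_2|v_2|$ flips sign, forcing $\e[u_1|u_1|v_2|v_2|] = 0$ for every $n$.

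For the surviving diagonal I would rewrite $\e[u_1|u_1|v_1|v_1|] = n^{-2}\e[F(\sqrt{n}u_1, g_1)]$ with $F(x,y) \defequal x|x|(\cos\theta x + \sin\theta y)|\cos\theta x + \sin\theta y|$, which has polynomial growth. Since $g_1$ is independent of $u$, conditioning on $g_1$ and applying the marginal sphere-to-Gaussian approximation of \cref{sec:Gauss_approx} to $x \mapsto F(x, g_1)$ gives $\e[F(\sqrt{n}u_1, g_1)] = \e[F(Z,W)] + O(1/n)$ for $Z, W$ iid $\cN(0,1)$. To identify this constant I would use $x|x| = \vpp^2(x) - \vp_-^2(x)$, set $\tilde Z \defequal \cos\theta Z + \sin\theta W$, and expand $F(Z,W) = (\vpp^2(Z) - \vp_-^2(Z))(\vpp^2(\tilde Z) - \vp_-^2(\tilde Z))$; the $(Z,W) \to (-Z,-W)$ symmetry collapses four cross-terms to two, and using $\vp_-^2(\tilde Z) = \vpp^2(-\tilde Z)$ together with $-\tilde Z \dequal \cos(\pi-\theta)Z + \sin(\pi-\theta) W'$ (via $W \to -W$) identifies the survivors with $\bar J_2(\theta)/2$ and $\bar J_2(\pi-\theta)/2$, yielding $\e[F(Z,W)] = \bar J_2(\theta) - \bar J_2(\pi-\theta)$. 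Combining, $\e[T_u T_v] = n^{-1}(\bar J_2(\theta) - \bar J_2(\pi-\theta))(1 + O(1/n))$, as claimed.

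The hard part, were the single-coordinate symmetry unavailable, would be controlling the joint sphere-vs-Gaussian correction across all $n(n-1)$ off-diagonal pairs: the individual correction is small but multiplied by $n^2$ could easily aggregate to the same $1/n$ order as the diagonal signal. The $\bZ_2$ flip on one coordinate circumvents this entirely, reducing the whole problem to a straightforward application of the \emph{marginal} Gaussian approximation at the $n$ diagonal coordinates.
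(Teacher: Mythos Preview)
Your proof is correct and takes a genuinely different route from the paper's. The paper works directly with $\vpp^{2}$ throughout: it expands the covariance as a double sum over coordinate pairs, splits into diagonal and off-diagonal contributions, and computes each \emph{approximately}. The off-diagonal terms are handled by conditioning on $u_j$ and using the conditional law $u_i \mid u_j \dequal \sqrt{1-u_j^2}\,\tilde u_i$ with $\tilde u \in \bS^{n-2}$; their leading contribution is \emph{nonzero} and must be combined with the diagonal contribution via the algebraic identity $2\bar J_2(\th) - 2\e[Z^2\vpp^2(\cos\th\,Z + \sin\th\,W)] = \bar J_2(\th) - \bar J_2(\pi-\th)$ at the very end. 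Your decomposition $2\vpp^2(x) = x^2 + x|x|$ effectively performs this cancellation \emph{before} any approximation: it reduces the whole covariance to $\e[T_u T_v]$, and then the single-coordinate $\bZ_2$ flip in the Gaussian realisation $u = g^{(0)}/\norm{g^{(0)}}$ kills all off-diagonal terms exactly. This avoids the conditional-law argument, avoids tracking $O(1/n)$ errors across $n(n-1)$ cross terms, and leaves the identification with $\bar J_2(\th) - \bar J_2(\pi-\th)$ as a purely Gaussian computation on a single coordinate. The one point worth making explicit is that the $O(1/n)$ constant in the sphere-to-Gaussian corollary applied to $x \mapsto F(x,g_1)$ depends polynomially on $g_1$, so that after integrating over $g_1$ the error remains $O(1/n)$; the paper's proof is equally informal on the analogous point.
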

\begin{proof}

By expanding the norms into sums, $\norm{x}^{2} = \sum_{i=1}^n x_i^2$, we can compute the covariance by summing over all \emph{pairs} of coordinates $u_i,\cos(\th)u_j+\sin(\th)n^{-\half}g_j$. There are two types of terms to consider. (Note: we use the notation $\vpp^2(x)=(\vpp(x))^2$.)

\underline{Diagonal Terms}: $\cov\left(\vpp^2(u_i),\vpp^2(\cos(\th)u_i +\sin(\th)n^{-\half}g_i)\right)$ for $1\leq i \leq n$
We first use the positive homogeneity of $\vpp$ to extract a factor of $\sqrt{n}$ from both terms
$$ \cov\left( \vpp^2(u_i),\vpp^2(\cos(\th) u_i + \frac{\sin(\th)}{\sqrt{n}} g_i)\right) = \frac{1}{n^2} \cov\left( \vpp^2(\sqrt{n}u_i),\vpp^2(\cos(\th) \sqrt{n}u_i + \sin(\th) g_i)\right) $$
We now use the approximation of $\sqrt{n}u_i$ by $Z \sim \cN(0,1)$ as in Section \ref{sec:Gauss_approx} to obtain
\begin{align*}
& \frac{1}{n^2} \cov\left( \vpp^2(\sqrt{n}u_i),\vpp^2(\cos(\th) \sqrt{n}u_i + \sin(\th) g_i)\right)\\
=& \frac{1}{n^2} \cov\left(\vpp^2\left(Z\right),\vpp^2\left(\cos(\th) Z + \sin(\th) g_i\right) \right)\opon \\
=& \frac{ \frac{1}{2}\bar{J}_2(\theta) - \frac{1}{4}}{n^2} \opon,
\end{align*}
where we have used the definition of $\bar{J}_2(\theta)$ from \eqref{eq:J2_def}.

\underline{Off diagonal terms} $\cov\left(\vpp^2(u_i),\vpp^2(\cos(\th)u_j +\sin(\th)n^{-\half}g_j)\right)$ for $1\leq i \leq n$,$1\leq j\leq n$,$i\neq j$

We compute the expectation
$$\e\left[ \vpp^2(u_i)^2 \vpp^2(\cos(\th) u_j + \sin(\th)n^{-\half} g_j ) \right]$$
by first conditioning on $u_j$. Conditioned on $u_j$, the distribution of $u_i$ is $u_i \dequal \sqrt{1-u_j^2} \tilde{u_i}$, where $\tilde{u}$ is independent of $u$ and is drawn uniformly from the unit sphere whose dimension is one smaller than that of $u$, namely $\tilde{u} \in \bS^{n-2}$. Since the $\vpp$ is positive homogeneous, we can factor $\sqrt{1-u_j^2}$ out to get:
\begin{align*}
    \e\left[ \vpp^2(u_i) \vpp^2(\cos(\th) u_j + \sin(\th) n^{-\half} g_j ) \right] &=\e\left[ \vpp^2(\tilde{u}_i)^2 (1-u_j^2) \vpp^2(\cos(\th) u_j + \sin(\th) n^{-\half} g_j ) \right] \\
    &=\e\left[ \vpp^2(\tilde{u}_i)^2\right] \e\left[(1-u_j^2) \vpp^2(\cos(\th) u_j + \sin(\th) n^{-\half} g_j ) \right] \\
    &= \frac{1}{2(n-1)} \e\left[ (1-u_j^2) \vpp^2(\cos(\th) u_j + \sin(\th) n^{-\half} g_j ) \right]
\end{align*}
As in the calculation for the diagonal term, we now again use the approximation that $\sqrt{n}u_j$ is approximately marginally distributed like $Z\sim\cN(0,1)$ and the positive homogoneity of $\vpp$ to obtain
\begin{align*}
    &\e\left[ \vpp^2(u_i) \vpp^2(\cos(\th) u_j + \sin(\th) n^{-\half} g_j ) \right]\\ =& \frac{1}{2(n-1)}\left( \frac{1}{2n} -  \e\left[ Z^2 \vpp^2(\alpha Z + \lambda g_j ) \right] \right) \opon. 
\end{align*}
Renaming $g_j$ to $W$ to match the notation of \eqref{eq:J2_def}, we finally obtain
\begin{align*}
&\cov\left( \vpp^2(u_i)^2, \vpp^2(\cos(\th) u_j + \sin(\th) n^{-\half} g_j ) \right) \\
=& \e\left[ \vpp^2(u_i) \vpp^2(\cos(\th) u_j + \sin(\th) n^{-\half} g_j ) \right] - \frac{1}{4n^2}\opon \\
=& \left(\frac{1}{4n^2(n-1)}-\frac{1}{2n^2(n-1)} \e\left[ Z^2 \vpp^2(\cos(\th) Z + \sin(\th) W ) \right]\right) \opon
\end{align*}
Summing the diagonal and off-diagonal terms, we find the total covariance is:
\begin{align*}
    &\cov\left(2\norm{\vpp\left(u\right)}^{2},2\norm{\vpp\left(\cos(\th) u+\sin(\th) n^{-\half} g\right)}^{2}\right)\\
    =& 4 n \times \left( \text{Diagonal term contribution} \right) +  4n(n-1) \times \left( \text{Off Diagonal term contribution} \right) \\
    =& \left(\frac{2 \bar{J_2}(\theta) - 1}{n} + \frac{1-2\e\left[ Z^2 \vpp^2(\alpha Z + \lambda W ) \right]}{n}\right)\opon \\
    =& \left(\frac{2 \bar{J_2}(\theta) -2\e\left[ Z^2 \vpp^2(\alpha Z + \lambda W ) \right]}{n} \right)\opon
\end{align*}

Finally we notice the identity
\begin{align*}
    &2\bar{J_2}(\theta) -2\e\left[ Z^2 \vpp^2(\cos(\th) Z + \sin(\th) W ) \right] \\
    =& 4\e\left[ \vpp^2(Z) \vpp^2(\cos(\th) Z + \sin(\th) W ) \right] -2\e\left[ Z^2 \vpp^2(\cos(\th) Z + \sin(\th) W ) \right] \\
    =& \e\left[\right( 2\vpp^2(Z) - Z^2\left) 2\vpp^2(\cos(\th) Z + \sin(\th) W ) \right]\\
    =& \e\left[\left( \vpp^2(Z) - \vpp^2(-Z)\right) 2\vpp^2(\cos(\th) Z + \sin(\th) W ) \right] \\
    =&  2\e\left[\vpp^2(Z) \vpp^2(\cos(\th) Z + \sin(\th) W ) \right] - 2\e\left[\vpp^2(-Z) \vpp^2(\cos(\th) Z + \sin(\th) W ) \right] \\
    =& 2\e\left[ \vpp^2(Z) \vpp^2(\cos(\th) Z + \sin(\th) W ) \right] - 2\e\left[ \vpp^2(Z) \vpp^2(-\cos(\th) Z + \sin(\th) W ) \right] \\
    =& \bar{J}_2(\theta) - \bar{J}_2(\pi - \theta),
\end{align*}
which gives the claimed formula for the covariance.
\end{proof}
\begin{cor}
Assume Conjecture \ref{conj:hypoactivation} is true. Then
\begin{equation}
\cov\left(2 \norm{ \vpp^2\left(\hat{z}^\ell\right)},2 \norm{\vpp^2\left(\hat{z}^{\ell^\prime}\right)}^{2}\right)  = \frac{\bar{J}_{2}\left(\th_{\abs{\ell - \ell^\prime}}\right)-\bar{J}_{2}\left(\pi - \th_{\abs{\ell - \ell^\prime}}\right)}{n}\opon,
\end{equation}
where $\th_{k}$ is the angle such that $\th_k=\cos^{-1}(\al^k)$, 
\mufan{and the constant in the big $O(\cdot)$ notation is uniform in $\ell,\ell^\prime$. }
\end{cor}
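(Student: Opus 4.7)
The plan is to chain two results already established in the excerpt and do almost no new computation. First, I would invoke the joint-distribution clause of Conjecture~\ref{conj:hypoactivation}, specifically equation~\eqref{eq:cov_conj}, to replace the pair $(\hat{z}^\ell,\hat{z}^{\ell^\prime})$ by the sphere/Gaussian pair $(u,\cos(\th_k)u+\tfrac{\sin(\th_k)}{\sqrt{n}}g)$ with $k=\abs{\ell^\prime-\ell}$, at the cost of a multiplicative $(1+O(1/n))$ relative error on the covariance. Second, I would apply Lemma~\ref{lm:cov} directly to that sphere/Gaussian pair, which evaluates the covariance to $\frac{\bar{J}_2(\th_k)-\bar{J}_2(\pi-\th_k)}{n}$, again up to a multiplicative $(1+O(1/n))$ error. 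Multiplying the two relative error factors gives a combined $(1+O(1/n))$, which is exactly the form of the statement.

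The only bookkeeping is to match the angle and the prefactor. Conjecture~\ref{conj:hypoactivation} defines $\th_k$ by $\cos(\th_k)=\al^k/(\al^2+\la^2)^{k/2}$; under the normalization $\al^2+\la^2=1$ adopted WLOG at the top of \cref{sec:supp_proofs}, this collapses to $\cos(\th_k)=\al^k$, matching $\th_k=\cos^{-1}(\al^k)$ as stated. Similarly, the factor of $2$ inside each slot of the covariance (present on both sides, since the corollary writes $2\norm{\vpp(\hat{z}^\ell)}^2$ and Lemma~\ref{lm:cov} writes $2\norm{\vpp(u)}^2$) contributes an overall factor of $4$ on each side of \eqref{eq:cov_conj} and cancels in the ratio, so no extra scaling is required.

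The only substantive point worth highlighting is that both approximations are \emph{multiplicative} rather than additive. If Conjecture~\ref{conj:hypoactivation} had only controlled the difference of covariances additively, then composing with Lemma~\ref{lm:cov} would require separately verifying that the RHS is of order $1/n$ (which it is, by Lemma~\ref{lm:cov}) before being allowed to conclude the error is $O(1/n^2)$. Because the conjecture is stated multiplicatively, composition is immediate: $(1+O(1/n))\cdot(1+O(1/n))=1+O(1/n)$, and the corollary follows with no further work. In short, there is no real obstacle here; the statement is essentially a transcription step packaging Conjecture~\ref{conj:hypoactivation} and Lemma~\ref{lm:cov} into the form needed to plug into the variance calculation in \cref{prop:hypo_and_cov}.
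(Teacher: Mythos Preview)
Your proposal is correct and matches the paper's own argument exactly: the paper's proof is a single sentence stating that the result follows immediately from the covariance approximation in Conjecture~\ref{conj:hypoactivation} together with Lemma~\ref{lm:cov}, and your bookkeeping about the angle normalization $\al^2+\la^2=1$ and the multiplicative error composition fills in precisely the details the paper leaves implicit.
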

\begin{proof}
This follows immediately from the approximation for the covariance in \cref{conj:hypoactivation} and \cref{lm:cov}.
\end{proof}
\begin{rem}
By the series expansion for $\arccos(x)=\frac{\pi}{2} - x +O(x^3)$ as $x\to 0$, it follows that $\bar{J}_2(\th_k) - \bar{J
}_2(\pi-\th_k) = 8\al^{k}/\pi + O\left(\al^{2k}\right)$ as $k\to\infty$. This exponential decay in $k$ explains why the total covariance correction $I_\text{total}$ remains $O(d/n)$ even as as $d\to\infty$.
\end{rem}
\begin{lem}
Assume Conjecture \ref{conj:hypoactivation} is true. Then
\begin{equation}
\cov\left(X_{\ell},X_{\ell^\prime}\right) = \la^{4} \frac{\bar{J}_{2}\left(\th_{\abs{\ell - \ell^\prime}}\right)-\bar{J}_{2}\left(\pi - \th_{\abs{\ell - \ell^\prime}}\right)}{n}\opon,
\end{equation}
where $\th_{k}$ is the angle such that $\th_k=\cos^{-1}(\al^k)$, 
\mufan{and the constant in the big $O(\cdot)$ notation is uniform in $\ell, \ell^\prime$. }
\end{lem}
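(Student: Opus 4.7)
I would prove this by combining the structural decomposition of $X_\ell$ used in \cref{lm:var_generic} with a conditional-expectation trick that reduces the computation to the pairwise covariance already computed in the preceding corollary.

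Without loss of generality assume $\ell < \ell'$. Writing out $X_{\ell'}$ using \eqref{eq:Xell_defn} and integrating out the Gaussian vector $g^{\ell'}$, which is independent of all randomness at layers $\le \ell'-1$ (including $\hat{z}^{\ell'}$, $\hat{z}^\ell$, and $g^\ell$) via the construction in \cref{sec:proof_idea}, yields
\[
\e\!\left[X_{\ell'} \,\big|\, \mathcal{F}_{\ell'-1}\right] \;=\; \alpha^{2} + 2\lambda^{2}\,\|\vpp(\hat{z}^{\ell'})\|^{2},
\]
since $\e[\|g^{\ell'}\|^{2}] = n$ and $\e[g^{\ell'}_{1}] = 0$. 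By the tower property, the constant drops out of the covariance and one obtains
\[
\cov(X_\ell, X_{\ell'}) \;=\; 2\lambda^{2}\,\cov\!\left(X_\ell,\, \|\vpp(\hat{z}^{\ell'})\|^{2}\right).
\]

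Next I would expand the right-hand $X_\ell$ as in \eqref{eq:Xell_defn} and analyze each of its three parts. The constant contributes zero. The linear-in-$g^\ell$ term $2\alpha\lambda\sqrt{2/n}\,\|\vpp(\hat{z}^{\ell})\|\,g^\ell_1$ has mean zero and, under the approximate joint distribution supplied by \cref{conj:hypoactivation} (which represents $\hat{z}^{\ell'}$ as $\cos(\th_k)\hat{z}^{\ell}+\sin(\th_k)g/\sqrt{n}$ with $g$ independent of $\hat{z}^\ell$), $g^\ell_1$ is effectively independent of $(\|\vpp(\hat{z}^\ell)\|,\|\vpp(\hat{z}^{\ell'})\|)$, so its cross-covariance with $\|\vpp(\hat{z}^{\ell'})\|^2$ is negligible at the scale $1/n$. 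The quadratic-in-$g^\ell$ term $\lambda^{2}(2/n)\|\vpp(\hat{z}^{\ell})\|^{2}\|g^{\ell}\|^{2}$ dominates: since $\|g^\ell\|^{2}$ concentrates at $n$ with only $O(\sqrt{n})$ fluctuations and (in the same approximate joint distribution) decouples from both norms, it factors out of the covariance up to a relative error $O(1/n)$, giving
\[
\cov\!\left(X_\ell,\,\|\vpp(\hat{z}^{\ell'})\|^2\right) \;=\; 2\lambda^{2}\,\cov\!\left(\|\vpp(\hat{z}^\ell)\|^2,\,\|\vpp(\hat{z}^{\ell'})\|^2\right)\opon.
\]
Combining the two reductions yields
\[
\cov(X_\ell, X_{\ell'}) \;=\; \lambda^{4}\,\cov\!\left(2\|\vpp(\hat{z}^\ell)\|^{2},\,2\|\vpp(\hat{z}^{\ell'})\|^{2}\right)\opon,
\]
and the preceding corollary of \cref{lm:cov} directly gives the claimed expression in terms of $\bar{J}_2$.

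\textbf{Main obstacle.} The delicate point is the decoupling of $\|g^\ell\|^{2}$ and $g^\ell_1$ from $\|\vpp(\hat{z}^{\ell'})\|^{2}$: in the true network $\hat{z}^{\ell'}$ genuinely depends on $g^\ell$ through the intermediate recursion steps, so this approximate independence has to be justified at the level of marginal joint distributions via \cref{conj:hypoactivation}. Careful bookkeeping is needed to verify that the residual dependence of $\|g^\ell\|^{2}$ on $\hat{z}^{\ell'}$ and the cross contribution from the linear term are both absorbed into the $(1+O(1/n))$ factor, rather than generating a spurious leading-order correction at scale $1/n$.
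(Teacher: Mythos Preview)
Your proposal is correct and follows essentially the same route as the paper: reduce $\cov(X_\ell,X_{\ell'})$ to $\lambda^4\,\cov(2\|\vpp(\hat z^\ell)\|^2,2\|\vpp(\hat z^{\ell'})\|^2)$ by killing the linear-in-$g$ terms and factoring out $\|g\|^2/n\approx 1$, then invoke the preceding corollary of \cref{lm:cov}. Your tower-property step to integrate out $g^{\ell'}$ first is a slightly cleaner variant of the paper's direct bilinear expansion of $X_\ell,X_{\ell'}$ into nine cross terms, but the substance and the identified obstacle (residual dependence of $\hat z^{\ell'}$ on $g^\ell$, handled via the conjecture) are the same.
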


\begin{proof}
From the definition of $X_\ell$ from \eqref{eq:Xell_defn} as a sum of three terms, the covariance can be written as a sum over pairwise combinations of the terms. The terms involving $g_1^\ell,g_1^{\ell^\prime}$ are mean $0$ and independent from layer to layer, so these terms has no contribution to the covariance. We remain with
\begin{equation*}
    \cov\left( X_\ell, X_{\ell^\prime} \right) = \cov\left(2\la^2 \norm{ \vpp^2\left(\hat{z}^\ell\right)}\frac{\norm{g^\ell}^{2}}{n},2\la^2 \norm{\vpp^2\left(\hat{z}^{\ell^\prime}\right)}^{2}\frac{\norm{g^{\ell^\prime}}^2}{n}\right).
\end{equation*}
The result then follows since $\norm{g^\ell}^2 = n$, $g^\ell$ is independent of $g^{\ell^\prime}$, applying the approximation given in Conjecture \ref{conj:hypoactivation} and the result of Lemma \ref{lm:cov}.
\end{proof}

\subsection{Central Limit Theorem}
\begin{lem}\label{lm:bounds}
We have the following asymptotics for the centered moments of $X_\ell - \e[X_\ell]$:
$$\e\left[ \left( X_\ell-\e[X_\ell] \right)^3 \right] = O\left(\frac{1}{n^2}\right), \e\left[ \left( X_\ell-\e[X_\ell] \right)^{2m} \right] = O\left(\frac{1}{n^{m}}\right) \text{ for } m\geq 2 \,, $$
\mufan{where the constant in the big $O(\cdot)$ notation is uniform in $\ell$. }
\end{lem}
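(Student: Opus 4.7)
My strategy is to isolate the two sources of randomness in $X_\ell$: the direction $\hat z^\ell$ (which enters only through the scalar $U \defequal \norm{\vpp(\hat z^\ell)}^2 \in [0,1]$) and the fresh Gaussian $g^\ell$ used at layer $\ell$. Using $\e[X_\ell] = \al^2 + 2\la^2 \e[U]$ from \cref{lm:mean}, I decompose
\begin{equation*}
X_\ell - \e[X_\ell] = T_1 + T_2 + T_3, \qquad \begin{cases} T_1 \defequal 2\la^2 U(V-1), \\ T_2 \defequal 2\al\la \sqrt{2U/n}\, g_1^\ell, \\ T_3 \defequal 2\la^2(U - \e[U]), \end{cases}
\end{equation*}
where $V \defequal \norm{g^\ell}^2/n$. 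Conditional on $\hat z^\ell$, both $T_1$ and $T_2$ are functions of the independent Gaussian $g^\ell$ with deterministic coefficients uniformly bounded (since $U \leq 1$). The standard chi-squared moment bound $\e[(V-1)^{2m}] = O(n^{-m})$ together with finiteness of Gaussian moments then yields $\e[T_1^{2m}\mid \hat z^\ell] = O(n^{-m})$ and $\e[T_2^{2m}\mid \hat z^\ell] = O(n^{-m})$ uniformly in $\hat z^\ell$.

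The main obstacle is bounding $T_3$. \cref{conj:hypoactivation} only supplies $\var(U) = O(1/n)$, which gives $\e[T_3^2] = O(1/n)$ but not the required $\e[T_3^{2m}] = O(n^{-m})$. To upgrade, I would invoke concentration of $U$ around its mean: the map $u \mapsto \norm{\vpp(u)}^2$ is $2$-Lipschitz on $\bS^{n-1}$, so Lévy's spherical concentration gives sub-Gaussian tails $\pp{|U - \e U| > t} \leq 2 e^{-cnt^2}$ for uniform $u$, implying $\e[(U-\e U)^{2m}] = O(n^{-m})$. Transferring this to our non-uniform $\hat z^\ell$ is the chief technical hurdle; the cleanest route is either to strengthen \cref{conj:hypoactivation} to control higher centred moments, or to work directly with the recursion $z^{\ell+1} = \al z^\ell + \la \sqrt{2/n}\, W^{\ell+1}\vpp(z^\ell)$ via Gaussian Lipschitz concentration for $z^\ell$ conditional on earlier layers. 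Granting this, the $2m$-th moment bound follows from the power-mean inequality $(T_1+T_2+T_3)^{2m} \leq 3^{2m-1}(T_1^{2m}+T_2^{2m}+T_3^{2m})$.

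For the third centred moment I expand $(T_1+T_2+T_3)^3$ and apply the tower property. Most mixed terms vanish for free: since $V$ is even in $g_1^\ell$ while $T_2 \propto g_1^\ell$, any term with odd total power of $g_1^\ell$ has zero conditional mean, killing $\e[T_1^2 T_2]$, $\e[T_2^3]$, $\e[T_1 T_2 T_3]$, and $\e[T_3^2 T_2]$; additionally $\e[T_3^2 T_1] = 0$ since $\e[T_1\mid \hat z^\ell] = 0$. The five surviving contributions are: $\e[T_1^3]$, which equals $(2\la^2)^3\e[U^3(V-1)^3]$ and is $O(n^{-2})$ via $\e[(V-1)^3]=8/n^2$; $\e[T_1 T_2^2]$, which is $O(n^{-2})$ via $\e[(V-1)g_1^2]=2/n$; $\e[T_1^2 T_3]$ and $\e[T_2^2 T_3]$, which after conditioning reduce to $(1/n)\,\cov(U, c_1 U^2 + c_2 U)$, both $O(n^{-2})$ using $U \leq 1$ and $\var(U) = O(1/n)$; and finally $\e[T_3^3] = 8\la^6\,\e[(U-\e U)^3]$, which is also $O(n^{-2})$ under the strengthened sub-Gaussian concentration of $U$ (since that forces the third centred moment to be $O(n^{-2})$ from near-symmetry of $U$ on the fluctuation scale $n^{-1/2}$).
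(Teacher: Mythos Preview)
Your decomposition $X_\ell - \e[X_\ell] = T_1 + T_2 + T_3$ and the term-by-term bookkeeping are carried out correctly. The paper takes a more compact route: it observes that, conditionally on $c \defequal \sqrt{2}\,\norm{\vpp(\hat{z}^\ell)}$, the random variable $X_\ell$ equals $c^2\la^2/n$ times a non-central $\chi^2_n\!\left(\al^2 n / (c^2\la^2)\right)$, and then reads off the centred moments from the closed-form cumulants $K_m = 2^{m-1}(m-1)!(m\mu+k)$ for $\chi^2_k(\mu)$. Because $0 \le c \le \sqrt{2}$ almost surely, this yields the conditional bounds $\e\!\left[(X_\ell-\e[X_\ell\mid c])^3\mid c\right] = O(n^{-2})$ and $\e\!\left[(X_\ell-\e[X_\ell\mid c])^{2m}\mid c\right] = O(n^{-m})$ uniformly in $c$, with no case analysis and without invoking \cref{conj:hypoactivation}. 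This single observation replaces your entire $T_1$, $T_2$ analysis.

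That said, you have correctly located an issue the paper elides: its computation is really centring at the \emph{conditional} mean $\e[X_\ell\mid c]$, and passing to the unconditional $\e[X_\ell]$ requires control of exactly your $T_3 = 2\la^2(U-\e U)$. Your plan to bound $\e[T_3^{2m}]$ via Lipschitz concentration on the sphere (which necessarily goes beyond \cref{conj:hypoactivation}) is the natural fix. Where your argument genuinely breaks, however, is the claim $\e[T_3^3]=O(n^{-2})$: sub-Gaussian concentration of $U$ with variance proxy $O(1/n)$ only delivers $\e\!\left[|U-\e U|^3\right]=O(n^{-3/2})$, and the appeal to ``near-symmetry of $U$ on the fluctuation scale $n^{-1/2}$'' does not by itself supply the missing factor of $n^{-1/2}$. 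Closing this would require either a direct third-moment computation for $U$ or a genuine cancellation argument, not just concentration.
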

\begin{proof}

For convenience of notation, let $c=\sqrt{2}\norm{\vpp(\hat{z}^\ell)}$.
Conditionally on the value of the random value $c$,
$X_{\ell}=\norm{\al\vec{e}_{1}+c\la n^{\half}g}^{2} = \frac{c^{2}\la^{2}}{n}\norm{\frac{\al\sqrt{n}}{c\la}\vec{e}_{1}+g}^{2}$
is a multiple of a non-central $\ch_{k}^{2}\left(\frac{\al^{2}n}{c^{2}\la^{2}}\right)$
random variable.
It is a basic fact about non-central chi squared random variables that if
$Y\sim\chi_{k}^{2}(\mu)$,
then $$\e\left[\left(Y-\e[Y]\right)^{3}\right]=8(3\mu+k)$$
Hence $$\e\left[\left(X_{\ell}-\e[X_{\ell}]\right)^{3}\given c\right]=\left(\frac{c^{2}\la^{2}}{n}\right)^{3}8\left(3\frac{\al^{2}n}{c^{2}\la^{2}}+n\right)=\frac{24c^{4}\la^{4}\al^{2}+8c^{6}\la^{6}}{n^{2}}$$
Using the fact that $0\leq c\leq\sqrt{2}$ almost surely gives $\e\left[\left(X_{\ell}-\e[X_{\ell}]\right)^{3}\right]=O\left(\frac{1}{n^{2}}\right)$ as desired. A similar computation can be carried out to see the bound for the $2m$-th central moment by using the $2m$-th moment of a non-central $\chi^2_n(\mu)$ distribution. This can be seen using the formula for the $m$-th cumulant:
if $Y\sim\chi_{k}^{2}(\mu)$, then $K_m = 2^{m-1}(m-1)!(m\mu+k)$, from which it follows by the formula to convert from cumulants to central moments.
\end{proof}

\begin{lem}\label{lm:CLT_sum}
Without loss of generality, assume that $\al^2 + \la^2=1$. Let $S = \sum_{\ell=1}^d \ln{X_\ell}$ then
\begin{align}\label{eq:ES}
\e\left[S\right] & = -\frac{d}{2n}\cdot\left(5\la^4 + 4\al^2\la^2\right) + 2\la^{2}\sum_{\ell = 1}^d h_\ell + O\left( \frac{d}{n^{2}}\right) \,, \\
\label{eq:varS}
\var\left[S\right] &= \frac{d}{n}\cdot\left(5\la^4 + 4\al^2\la^2\right)
	+ \la^4 \sum_{1\leq \ell \neq \ell^\prime \leq d}
	\frac{\bar{J}_{2}(\th_{|\ell^\prime - \ell|})-\bar{J}_{2}(\pi-\th_{|\ell^\prime - \ell|})}{n}+ O\left( \frac{d}{n^{2}}\right),
\end{align}
\mufan{where the constant in the big $O(\cdot)$ notation is uniform in $d$.} 
Moreover, $S$ is asymptotically Gaussian in the infinite width and depth limit.
\end{lem}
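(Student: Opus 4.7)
My plan is to combine the moment computations of the previous lemmas with a second-order delta-method-style Taylor expansion of $\ln X_\ell$ around $\e[X_\ell]$, and then to invoke a central limit theorem for weakly dependent arrays to establish Gaussianity. Throughout I work in the normalization $\al^2 + \la^2 = 1$ fixed at the start of \cref{sec:supp_proofs}, so that $\e[X_\ell] = 1 + 2\la^2 h_\ell = 1 + O(1/n)$ is bounded away from zero, and the normalized variable $Y_\ell \defequal X_\ell/\e[X_\ell] - 1$ satisfies $\e[Y_\ell^{2m}] = O(n^{-m})$ and $\e[Y_\ell^3] = O(n^{-2})$ by \cref{lm:bounds}.

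For the mean \eqref{eq:ES}, the idea is to write $\ln X_\ell = \ln \e[X_\ell] + \ln(1 + Y_\ell)$ and to establish the asymptotic identity $\e[\ln(1+Y_\ell)] = -\tfrac{1}{2}\e[Y_\ell^2] + O(n^{-2})$. Since the naive bound $\ln(1+y) = y - y^2/2 + O(|y|^3)$ only holds for $|y|$ small, I would truncate at the event $\{|Y_\ell| \leq 1/2\}$, whose complement has probability $O(n^{-m})$ for any fixed $m$ by Markov applied to $\e[Y_\ell^{2m}]$, and handle the atypical tail using an a.s. lower bound $X_\ell \geq \al^2$ together with a Cauchy--Schwarz-based moment inequality. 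Substituting $\ln \e[X_\ell] = 2\la^2 h_\ell + O(1/n^2)$ via \cref{lm:mean} and $\e[Y_\ell^2] = \var(X_\ell)/\e[X_\ell]^2 = (5\la^4 + 4\al^2\la^2)/n + O(1/n^2)$ via \cref{lm:var}, and then summing over $\ell$, yields \eqref{eq:ES}.

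For the variance \eqref{eq:varS}, I decompose $\ln X_\ell - \e[\ln X_\ell] = Y_\ell + R_\ell$, where the centered remainder satisfies $R_\ell = O(Y_\ell^2)$, so that
\[\var(S) = \var\Bigl(\sum_\ell Y_\ell\Bigr) + 2\,\cov\Bigl(\sum_\ell Y_\ell, \sum_\ell R_\ell\Bigr) + \var\Bigl(\sum_\ell R_\ell\Bigr).\]
The leading term $\var(\sum_\ell Y_\ell) = \sum_\ell \var(X_\ell)/\e[X_\ell]^2 + \sum_{\ell \neq \ell'} \cov(X_\ell, X_{\ell'})/(\e[X_\ell]\e[X_{\ell'}])$ gives the claimed expression via \cref{lm:var} and the pairwise covariance lemma. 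The delicate step is bounding the remainder contributions by $O(d/n^2)$; a naive Cauchy--Schwarz gives each pairwise term as $O(n^{-3/2})$ or $O(n^{-2})$, which is insufficient after summing over $d^2 \sim n^2$ pairs. The route forward is to exploit the exponential decay $\cov(Y_\ell, Y_{\ell'}) = O(\al^{|\ell-\ell'|}/n)$ coming from \cref{conj:hypoactivation} (the directions $\hat z^\ell$ form a noisy random walk on the sphere as in \eqref{eq:zl}), which through a Gaussian-like Wick-expansion argument propagates to $\cov(Y_\ell^2, Y_{\ell'}^2) = O(\al^{2|\ell-\ell'|}/n^2)$ and to analogous bounds on the cross terms $\cov(Y_\ell, Y_{\ell'}^2)$. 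A geometric sum over $|\ell - \ell'|$ then collapses the double sum to the required $O(d/n^2)$.

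The main obstacle is the asymptotic Gaussianity. The summands $\ln X_\ell$ form a triangular array of weakly dependent random variables whose pairwise covariances decay exponentially in $|\ell - \ell'|$. I would invoke the CLT of \cite{neumann} for weakly dependent triangular arrays, whose hypotheses demand (i) uniform higher-moment bounds, which are supplied by \cref{lm:bounds} together with the truncation argument above, and (ii) a summability/mixing condition on the cumulants, which follows from the exponential decay of covariances derived from \cref{conj:hypoactivation}. The subtle point is verifying the mixing condition quantitatively: $\hat z^{\ell+1}$ depends on all earlier layers through the iteration \eqref{eq:zl}, so one must argue that, conditionally on $\hat z^\ell$, the fresh Gaussian injection at step $\ell+1$ rapidly randomizes the direction, producing an $\al$-contraction in covariance that feeds directly into Neumann's framework; this is the place where the non-independence of the summands makes the analysis genuinely harder than a classical Lindeberg--Feller argument.
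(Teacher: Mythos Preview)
Your approach matches the paper's: Taylor-expand $\ln X_\ell$ around $\e[X_\ell]$, feed in the moment lemmas (\cref{lm:mean}, \cref{lm:var}, the pairwise covariance lemma, \cref{lm:bounds}), and invoke Neumann's CLT for weakly dependent triangular arrays using the exponential decay of the $\bar J_2$ covariance sequence together with the Markov property of $(\hat z^\ell)_\ell$.

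One concrete fix is needed. The a.s.\ lower bound $X_\ell \geq \al^2$ you rely on for the tail is false: $X_\ell = \bigl\|\al\,\vec e_1 + \la\sqrt{2/n}\,\|\vpp(\hat z^\ell)\|\,g^\ell\bigr\|^2$ can be arbitrarily close to zero when $g^\ell_1$ is large and negative, so $|\ln X_\ell|$ is not a.s.\ bounded and your Cauchy--Schwarz step on $\{|Y_\ell|>1/2\}$ has no anchor. The paper's replacement is to note that, conditionally on $\|\vpp(\hat z^\ell)\|$, the variable $X_\ell$ is a scaled noncentral $\chi^2_n$, so $\ln X_\ell$ has finite moments of every order uniformly in $\ell$; combined with the $O(n^{-m})$ probability of $\{|Y_\ell|>1/2\}$ (which you already have from \cref{lm:bounds} and Markov), H\"older then makes the tail contribution $O(n^{-m})$ for any $m$. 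With this substitution your truncation argument goes through and the rest of the proposal is aligned with the paper.
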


\begin{proof}
By Lemmas \ref{lm:mean} and \ref{lm:var_uniform},
we see that $\e[X_\ell] = 1+O(\frac{1}{n})$ and $\var[X_\ell] = O(\frac{1}{n})$.
Moreover, conditionally on $\norm{\vpp(\hat{z}^\ell)}$, $X_\ell$ has a non-central $\chi^2_n$ distribution 
Hence, by Chebyshev's inequality, we know that for any $\ep >0$, $\frac{X_{\ell} -  \e[X_\ell]}{\e\left[X_{\ell}\right]} = O(\frac{1}{n^{\half-\epsilon}})$ with probability at least $1-O(n^\epsilon)$. On this event, we can hence take the Taylor series expansion of $\ln(x)$ around $x=1$ to obtain the following 
\begin{align*}
\ln(X_{\ell})  =&\ln\left(\e\left[X_{\ell}\right]\right)+\ln\left(1+\left(\frac{X_{\ell} - \e[X_\ell]}{\e\left[X^{\ell}\right]}\right)\right)\\
  =& \ln\left(\e\left[X^{\ell}\right]\right)+\left(\frac{X^{\ell} - \e[X^\ell]}{\e\left[X^{\ell}\right]}\right)-\half\left(\frac{X^{\ell} - \e[X^\ell]}{\e\left[X^{\ell}\right]}\right)^{2}\\
 &+\frac{1}{3}\left(\frac{X^{\ell} - \e[X^\ell]}{\e\left[X^{\ell}\right]}\right)^{3}-\frac{1}{4}\left(\frac{X^{\ell} - \e[X^\ell]}{\e\left[X^{\ell}\right]}\right)^{4}
  + O\left(\frac{1}{n^{\frac{5}{2}-5\epsilon}}\right)
\end{align*}
Using Lemmas \ref{lm:mean}, \ref{lm:var}, \ref{lm:cov}, the bounds from Lemma \ref{lm:bounds}, and the fact that $\ln(X_\ell)$ has finite moments, we can take the expectation and variance of this to obtain
\begin{align*}
    \e\left[ \ln(X_\ell) \right] &= 1+2\la^2h_\ell -\half\frac{5\la^4+4\la^2\al^2}{n} + O\left(\frac{1}{n^{2}}\right),\\
    \var\left[ \ln(X_\ell) \right] &=  \frac{5\la^4+4\la^2\al^2}{n} + O\left(\frac{1}{n^{2}}\right),\\
    \cov\left( \ln(X_\ell),\ln(X_{\ell^\prime}) \right) &= \la^{4} \frac{\bar{J}_{2}\left(\th_{\abs{\ell - \ell^\prime}}\right)-\bar{J}_{2}\left(\pi - \th_{\abs{\ell - \ell^\prime}}\right)}{n} + O\left(\frac{1}{n^{2}}\right),
\end{align*}
from which the desired mean and variance formula for $S$ follows.

The fact that $S$ is asymptotically Gaussian follows by application of the Central Limit Theorem for weakly dependent triangular arrays \citep[][Thm.~2.1]{neumann}
using the bounding sequence $\la^4 \left( \bar{J_2}(\th_r) - \bar{J_2}(\pi-\th_r) \right) $. Note that the sequence $\hat{z}^0, \hat{z}^1,\hat{z}^2,\ldots $ is a Markov chain, which simplifies the  verification of the covariance condition in this central limit theorem. The fact that the sequence satisfies the Feller condition is also clear because of the independent Gaussian random vectors that appear in the definition of $X_\ell$.
\end{proof}

\begin{proof}[Proof of \cref{prop:main}]
Recall that $G = \ln{\norm{z^0}}^2/n + \sum_{\ell=1}^d X_\ell$. By the telescoping product \eqref{eq:zd}, we have that
\begin{equation}
    \ln\left(\frac{\norm{z^d}^2}{n}\right) = \ln\left(\frac{\norm{z^0}^2}{n}\right) + \sum_{\ell=1}^d \ln{X_\ell}
\end{equation}
Also by Section \ref{sec:proof_idea}, $\ln \frac{\norm{z^0}^2}{n} \sim \ln\frac{\norm{x}^2}{\nin} + \ln\frac{\chi^2_n}{n}$ can be written in terms of a chi-squared distribution with $n$ degrees of freedom. By standard facts about $\chi^2(n)$ random variables, we have the a central limit theorem as $n\to \infty$ namely
$\ln\frac{\norm{z^0}^2}{n} \to \cN\left(\ln\frac{\norm{x}^2}{\nin} - \frac{1}{n}. \frac{2}{n}\right) $. We also note that $\norm{z^0}$ is independent of $X_\ell$ for all $\ell$. By Lemma \ref{lm:CLT_sum} the sum
$\sum_{\ell=1}^d\ln X_\ell$ converges to an independent Gaussian random variable. The result follows from the fact that a sum of two independent Gaussians is again Gaussian.
\end{proof}

\begin{proof}[Proof of \cref{prop:balanced_resnets_result} ]
The proof is analogous to the proof of \cref{prop:main}; the calculation for the activation of the layers is \mufan{simplified} by the fact that which neurons activated in each layer are uncorrelated by  \eqref{eq:balanced_resnets_nice}.
\end{proof}

\subsection{Input-Output Gradient for Balanced ResNets}

\begin{prop} For any input $x$, the output of a \emph{Balanced ResNet} can be written as
$$\zout = M(x) x$$
where $M(x)$ is an $\nin \times \nout$ matrix that depends on $x$ and the random network weights. Moreover, the marginal distribution of $M(x)$ is statistically \emph{independent} of $x$. Finally, the matrix $M(x)$ has the property that for almost every $x$, there is an open neighbourhood $\cA(x) \subset \bR^n$ containing $x$ so that $M(x)$ is constant on $\cA(x)$.
\end{prop}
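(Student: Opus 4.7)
The plan is to rewrite the Balanced ResNet's forward pass as a product of random linear maps whose joint distribution is independent of $x$. The key algebraic identity is that for any scalar $z \in \bR$ and sign $s \in \{\pm 1\}$, one has $\vp_s(z) = s z\, \one\{sz > 0\}$. Applying this componentwise and setting
\begin{equation*}
\hat{b}_i^\ell \defequal s_i^\ell\, \one\{s_i^\ell z_i^{\ell-1} > 0\}, \qquad \tilde{W}^\ell \defequal W^\ell \diag(\hat{b}^\ell),
\end{equation*}
the hidden-layer update becomes $z^\ell = \bigl(\al I + \la\sqrt{2/n}\, \tilde{W}^\ell\bigr)\, z^{\ell-1}$. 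Composing with the first and last layers yields
\begin{equation*}
\zout = M(x)\, x, \qquad M(x) \defequal \frac{1}{\sqrt{n\nin}}\, \Wout \prod_{\ell = d}^{1} \bigl( \al I + \la\sqrt{2/n}\, \tilde{W}^\ell \bigr)\, W^0,
\end{equation*}
proving the first claim (up to the conventional transpose in the stated dimensions).

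For the distributional claim, I would prove by induction on $\ell$ that $\tilde{W}^\ell$ is independent of the filtration $\cF_{\ell-1} \defequal \sigma(W^0, W^1, \ldots, W^{\ell-1}, s^1, \ldots, s^{\ell-1})$ and has a fixed law depending only on $n$. Conditional on $\cF_{\ell-1}$, the vector $\sgn(z^{\ell-1})$ is deterministic while $(W^\ell, s^\ell)$ retains its original distribution independent of $\cF_{\ell-1}$. Column $j$ of $\tilde{W}^\ell$ equals $s_j^\ell W^\ell_{\cdot j}$ when $s_j^\ell = \sgn(z_j^{\ell-1})$ (probability $1/2$ conditional on $\cF_{\ell-1}$) and is zero otherwise; in the nonzero case, $s_j^\ell W^\ell_{\cdot j} = \sgn(z_j^{\ell-1}) W^\ell_{\cdot j} \sim \cN(0, I_n)$ by the sign-flip symmetry of the Gaussian column. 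Hence, conditionally on $\cF_{\ell-1}$, the columns of $\tilde{W}^\ell$ are iid across $j$, each being $\cN(0, I_n)$ with probability $1/2$ and zero with probability $1/2$; this conditional law does not depend on $\cF_{\ell-1}$, so $\tilde{W}^\ell$ is independent of $\cF_{\ell-1}$ with the stated marginal. By induction, $(W^0, \Wout, \tilde{W}^1, \ldots, \tilde{W}^d)$ is jointly independent with a law that does not depend on $x$, and $M(x)$, being a deterministic function of these variables, inherits an $x$-independent distribution.

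For the local-constancy claim, $M(x)$ depends on $x$ only through the signs $\sgn(z_i^{\ell-1}(x))$. The forward pass $x \mapsto (z^0(x), \ldots, z^{d-1}(x))$ is a continuous piecewise-linear function of $x$, so the exceptional set $\bigcup_{\ell, i}\{x : z_i^{\ell-1}(x) = 0\}$ is a finite union of piecewise-linear varieties of positive codimension and therefore of Lebesgue measure zero. For every $x$ outside this set, all of the signs are strictly nonzero and hence locally constant in $x$, so there is an open neighborhood $\cA(x)$ of $x$ on which the entire sign pattern is preserved; therefore $M$ is constant on $\cA(x)$.

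The main obstacle is the inductive step of the distributional claim, namely showing that the conditional law of $\tilde{W}^\ell$ given $\cF_{\ell-1}$ depends only on $n$. This requires disentangling the $\cF_{\ell-1}$-measurable quantity $\sgn(z^{\ell-1})$ from the fresh independent randomness $(W^\ell, s^\ell)$, and then invoking both (i) the $\mathrm{Bernoulli}(1/2)$ law of the event $\{s_j^\ell = \sgn(z_j^{\ell-1})\}$ (which eliminates $s^\ell$'s dependence on the frozen sign pattern) and (ii) the sign-symmetry of Gaussian columns (which absorbs any remaining $\sgn(z_j^{\ell-1})$ factor).
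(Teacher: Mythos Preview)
Your proof is correct and follows essentially the same strategy as the paper: linearize each hidden layer as $\bigl(\al I + \la\sqrt{2/n}\, W^\ell\, \diag(\cdot)\bigr)\, z^{\ell-1}$, argue that the diagonal activation pattern has a law independent of the input because of the random signs $s^\ell$, and deduce local constancy from the piecewise-linear structure of the forward map. Your filtration argument and explicit invocation of the sign-symmetry of Gaussian columns to absorb the factor $\sgn(z_j^{\ell-1})$ are in fact more careful than the paper's own proof, which asserts that $\vp_{s^\ell}^{\prime}$ takes values in $\{0,1\}$ (for $s=-$ the derivative is actually $-\one\{z<0\}\in\{-1,0\}$) and passes directly to independent $\{0,1\}$ Bernoulli variables; your treatment makes explicit why this passage to the fixed law ``zero with probability $\tfrac12$, $\cN(0,I_n)$ with probability $\tfrac12$'' is legitimate.
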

\begin{proof}
First note that because the Gaussian weights are continuous random variables, the event that any of the entries of $z^\ell_i$ are exactly 0 is a probability 0 event. Hence, for almost every input $x$, it makes sense to consider the derivative of the ReLU function $\vpp^\prime$ evaluated at the neuron values. By writing the action of a ReLU function on a vector $\vp(z)$ as a matrix multiplication by a diagonal vector of $1$'s and $0$'s as $\vp(z)=\text{diag }\left(\vp^{\prime}(z)\right)z$, the update rule for ResNets can be written as matrix multiplication, namely
\begin{align} \label{eq:balanced_resnet_zell}
    z^{\ell}
		&=\al z^{\ell-1}+\la \sqrt{\frac{2}{n}}{W^{\ell}}{\vp_{s^\ell}\left(z^{\ell-1}\right)}\\
		&=\al z^{\ell-1}+\la \sqrt{\frac{2}{n}} W^{\ell}\diag\left(\vp_{s^\ell}^{\prime}(z^{\ell-1})\right)z^{\ell-1}\\
		&=\left(\al I+\la \sqrt{\frac{2}{n}}W^{\ell}\diag\left(\vp_{s^\ell}^{\prime}(z^{\ell-1})\right)\right)z^{\ell-1}
\end{align}
For the Balanced ResNet, the sign $s^\ell$ used is independent from layer to layer and neuron to neuron. Therefore the derivative, $\vp_{s^\ell}^\prime$ at any input is equally likely to be $0$ or $1$ independent of everything else (i.e. no matter if the input $z_i$ is positive or negative, $\vp_{s^\ell}^\prime(z_i) = 1$ exactly half the time, and $\vp_{s^\ell}^\prime(z_i) = 0$ exactly half the time.) Hence, from \eqref{eq:balanced_resnet_zell}, we see that the output $\zout$ is equal in distribution to
\begin{equation} \label{eq:bernoullis}
    \zout \dequal \sqrt{\frac{1}{n}}W^\text{out} \prod_{\ell=1}^d \left(\al I + \la \sqrt{\frac{2}{n}} W^\ell \diag\left(B^\ell_i\right) \right) \sqrt{\frac{1}{\nin}}W^0 x,
\end{equation}
where $B_i^\ell$ are fair $\{0,1\}$ Bernoulli random variables independent of the weights $W^\ell$. Equation \eqref{eq:bernoullis} shows the marginal distribution of $M(x)$ does not depend on $x$.  Finally note that Bernoulli random variables depend only on the sign of the intermediate neurons $z^\ell$ at the input $x$ and the signs $s^\ell$. Therefore, if we find a neighbourhood $\cA(x)$ of $x$ such that none of the neurons change sign from positive to negative within the region $x$, the matrix will remain constant. This is always possible as long as none of $z^\ell_i$ are exactly $0$ since they are continuous functions of $x$. But $z^\ell_i =0$ is a probability zero event, so we can find such a neighbourhood $\cA(x)$ for almost every $x$ as desired.
\end{proof}
\begin{cor}
The derivative of $\zout$ with respect to any input $x_i$ has the distribution
$$ \frac{\di}{\di x_i} \zout \dequal M(x) e_i $$
\end{cor}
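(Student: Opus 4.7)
The corollary follows almost immediately from the preceding proposition by exploiting the local constancy of the matrix $M(x)$. The plan is to show that on the neighbourhood $\cA(x)$ produced by the proposition, the map $x' \mapsto \zout(x')$ is \emph{exactly linear} with matrix $M(x)$, so that the partial derivatives are simply the columns of $M(x)$ and no approximation is needed.

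Concretely, fix $x$ outside the null set on which either the proposition fails or some intermediate neuron $z^\ell_j$ vanishes. Let $\cA(x) \subset \bR^\nin$ be the open neighbourhood from the proposition, on which $M(x') = M(x)$ for every $x' \in \cA(x)$. Applying the identity $\zout(x') = M(x')\, x'$ for such $x'$ gives $\zout(x') = M(x)\, x'$ throughout $\cA(x)$. Since $x \in \cA(x)$ and $\cA(x)$ is open, the $i$-th partial derivative at $x$ exists and is given by
\begin{equation*}
\frac{\di}{\di x_i} \zout(x) \;=\; M(x)\, \frac{\di x'}{\di x'_i}\bigg|_{x' = x} \;=\; M(x)\, e_i,
\end{equation*}
which establishes the claim as a pointwise identity of random vectors (and hence in distribution).

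The step that requires the most care, and is really the entire content of the argument, is verifying that the exceptional set where $M$ is not locally constant is negligible. This was handled inside the proof of the proposition: since the weights $W^\ell$ are absolutely continuous, the event $\{z^\ell_j = 0\}$ has probability zero for each $(\ell, j)$, and a union bound over the finitely many neurons keeps the exceptional set a null set. Hence for almost every $x$ and almost every realization of the weights and signs $s^\ell$, the identity $\di \zout / \di x_i = M(x) e_i$ holds, and the distributional equality $\dequal$ follows without further work. Finally, combining with the second assertion of the proposition — that the marginal law of $M(x)$ does not depend on $x$ — yields as a bonus that the law of the gradient $\di \zout / \di x_i$ is also independent of the input $x$, matching the intuition behind the vanishing-and-exploding gradient discussion of \cref{sec:consequences}.
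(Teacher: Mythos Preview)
Your proof is correct and follows essentially the same approach as the paper: the paper's proof is the one-line observation that $M(x)$ is constant on the neighbourhood $\cA(x)$, and you have simply spelled this out in more detail. Your additional remark about the law of the gradient being independent of $x$ anticipates the next corollary in the paper.
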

\begin{proof}
This follows immediately since $M(x)$ is constant on the neighbourhood $\cA(x)$.
\end{proof}
\begin{cor}
$\frac{\di}{\di x_i} \zout$ has the distribution as the output $\zout$ at any input $x$ with $\norm{x} = 1$.
\end{cor}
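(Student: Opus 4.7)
The plan is to chain together the previous proposition and corollary and exploit the fact that the marginal distribution of the matrix $M(x)$ does not depend on $x$. First, I would invoke the preceding corollary, which gives the equality in distribution
\begin{equation*}
\frac{\di}{\di x_i} \zout \dequal M(x) e_i.
\end{equation*}
Second, I would apply the key structural statement from the proposition: the marginal law of $M(x)$ is independent of the input $x$. Concretely, from the explicit representation
\begin{equation*}
M(x) \dequal \tfrac{1}{\sqrt{n}}\Wout \prod_{\ell=1}^d \left(\al I + \la \sqrt{\tfrac{2}{n}} W^\ell \diag(B^\ell) \right) \tfrac{1}{\sqrt{\nin}} W^0,
\end{equation*}
the random Bernoulli diagonals $B^\ell$ are independent of the Gaussian weights and of $x$, so $M(x) \dequal M(y)$ for every pair $x,y$. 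In particular, $M(x) e_i \dequal M(e_i) e_i$.

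To conclude, I would observe that $M(e_i) e_i$ is precisely the output $\zout$ of the Balanced ResNet evaluated at the unit input $x' = e_i$, since the proposition gives $\zout = M(x')x'$ for every input $x'$. Combining the two equalities in distribution yields
\begin{equation*}
\frac{\di}{\di x_i} \zout \dequal M(e_i) e_i = \zout\big|_{x = e_i},
\end{equation*}
and $\norm{e_i} = 1$ as required. I do not anticipate a real obstacle here: the nontrivial work was already carried out in the proposition (the independence of the Bernoullis from the weights, which is what decouples $M$ from $x$) and in the preceding corollary (which identifies the partial derivative with $M(x) e_i$ on the open set $\cA(x)$ where $M$ is locally constant). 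The only mild care needed is to note that since $x=e_i$ is a specific input and $z^\ell_i = 0$ is a probability zero event, the identity $\zout\big|_{x=e_i} = M(e_i) e_i$ holds almost surely, which is enough for equality in distribution.
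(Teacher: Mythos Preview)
Your proof is correct and follows essentially the same route as the paper's: both invoke the previous corollary to obtain $\partial_{x_i}\zout \dequal M(x)\,e_i$, use the proposition to replace $M(x)$ by a matrix $M$ whose law is input-independent, and identify $Me_i$ with the network output at a unit-norm input. The only cosmetic difference is that the paper phrases the conclusion as ``both are equal in distribution to $Mu$ for any unit vector $u$'' (implicitly using the rotational invariance of $W^0$), whereas you specialize to $u=e_i$; since the output law depends on $x$ only through $\norm{x}$, this is the same statement.
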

\begin{proof}
By the previous results, both are equal in distribution to $Mu$ for any unit vector $u$, where $M$ is the distribution of the random matrix in \eqref{eq:bernoullis}.
\end{proof}

\section{Experiments}
\label{sec:experiment}

Throughout the paper, the Monte Carlo simulations were computed on a single NVIDIA Titan-XP GPU. The main tools used in the neural network simulations are the JAX library \citep{jax2018github} (Apache 2.0 License)
and the PyTorch library \citep{pytorch2019} (BSD 3-Clause License).
Furthermore, the Python libraries
numpy \citep{harris2020array} (BSD 3-Clause License),
plotnine \citep{plotnine} (based on ggplot2 \citep{ggplot2}, GNU GPLv2 License),
and pandas \citep{pandas} (BSD 3-Clause License)
tremendously helpful.
We used Python version 3.6.8 from Anaconda 3 \citep{anaconda} (3-clause BSD License) and Jupyter notebook \citep{Kluyver:2016aa} (3-Clause BSD License).

\subsection{Vanilla ResNet and Balanced ResNets: MNIST and CIFAR-10 Experiments}

\begin{figure}[h] %
\centering
\begin{subfigure}[b]{0.495\textwidth}
\centering
\includegraphics[width=\textwidth]{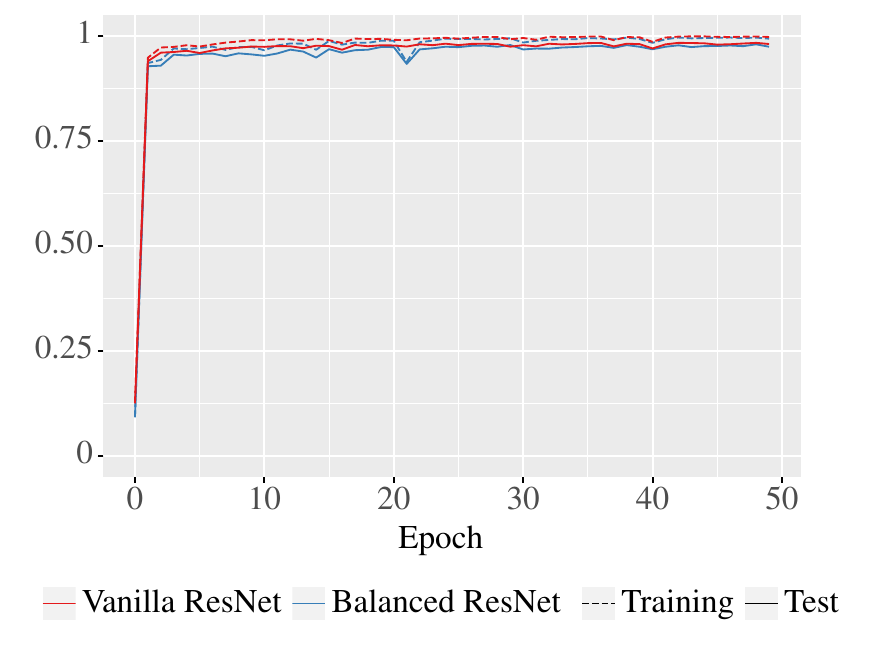}
\caption{MNIST}
\label{subfig:MNIST}
\end{subfigure}
\begin{subfigure}[b]{0.495\textwidth}
\centering
\includegraphics[width=\textwidth]{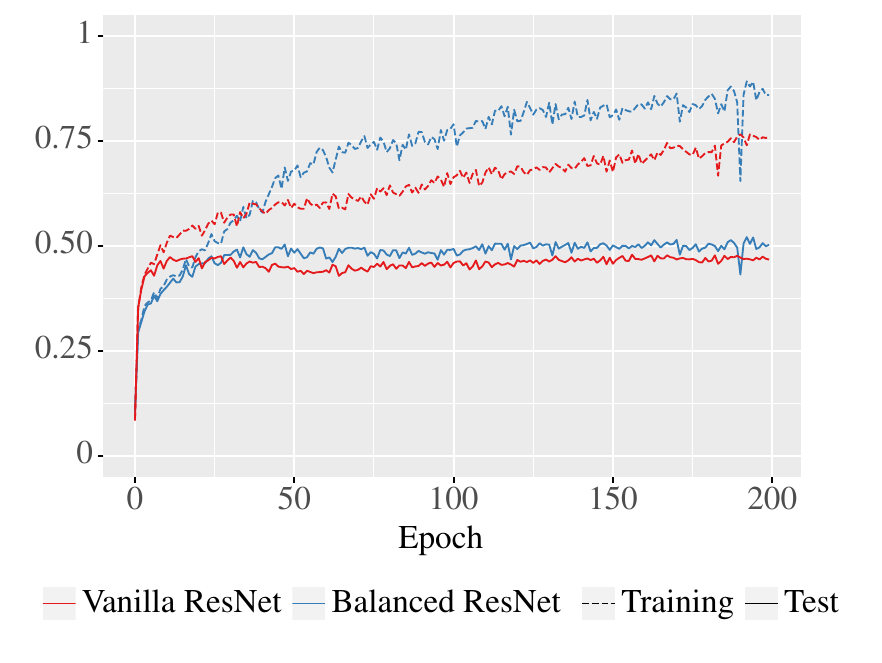}
\caption{CIFAR-10}
\label{subfig:CIFAR10}
\end{subfigure}
\caption{Fully connected Vanilla ResNet and Balanced ResNet
accuracies on MNIST \& CIFAR-10 using ADAM optimization \citep{kingma2015adam} 
with learning rate $0.01$, 
$b_1=0.9, b_2=0.999, \epsilon=1\mathrm{e}{-8}$, 
and batch size $128$. Both networks use hidden layer sizes $n=1000, d=30$ and $\al=1/\sqrt{2},\la=1\sqrt{2}$}
\label{fig:resnet-fc}
\end{figure}

These experiments, displayed in \cref{fig:resnet-fc}, investigate the difference between \emph{training performance} of the Vanilla ResNet from \eqref{eq:defn_resnet} and the Balanced ResNet defined in \cref{subsec:balanced_resnets}. This is beyond the theory proven in our work which concerns statistical properties of the network on initialization. Both of these architectures are fully-connected networks with skip connections between layers. We observe that both architectures perform similarly in standard training regimes where the networks are much wider than they are deep. 

\subsection{Convolutional ResNet CIFAR-10 Experiments}
\label{subsec:cifar}

These experiments investigate how the Balanced ResNet architecture modification, namely randomly flipping between $\vpp$ and $\vp_{-}$, effects training for deep convolutional ResNets (We call these C-ResNets here to distinguish from the fully connected ones studied in detail in the paper). Although the theory in this paper only are proven for Vanilla ResNets (which are fully connected with skip connections), we expect that the Balanced ResNet tweak does not negatively effect performance in standard training regimes and may allow better initialization for very deep models. 

For this experiment,
we used the ResNet implementations from  the GitHub repository by
\citet{kuangliu2017cifar} (MIT License).
To create a Balanced Convolutional ResNet (Balanced C-ResNet), we modified the class
\texttt{PreActBlock} to add flipped ReLUs into the network channel-wise, 
thinking of channels as the natural generalization of neurons for 
convolution neural networks. 
More specifically, we are interested in an input tensor $Y$ of 
dimension $(b, c, h, w)$, where $b$ is for batch size, 
$c$ is for channel, $h$ is for height, and $w$ is for width. 
Before feeding into a ReLU non-linearity, 
we will multiply $Y$ by a vector of 
iid random signs $\{s_j\}_{j \in [c]}$, 
so that the ReLU output is 
\begin{equation}
	[ \varphi_{s_j} ( Y_{i,j,k,l} ) ]_{i,j,k,l} 
	= [ \varphi_{+} ( s_j \, Y_{i,j,k,l} ) ]_{i,j,k,l} \,. 
\end{equation}

\begin{figure}[t]
\centering
\begin{subfigure}[b]{0.495\textwidth}
\centering
\includegraphics[width=\textwidth]{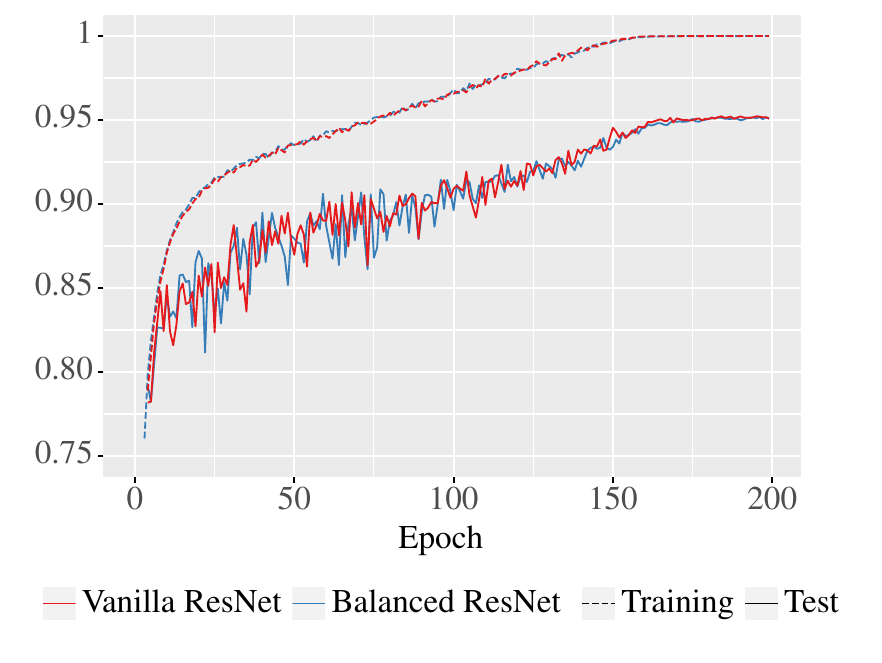}
\caption{
Convolutional ResNet18
}
\label{subfig:C-ResNet18}
\end{subfigure}
\begin{subfigure}[b]{0.495\textwidth}
\centering
\includegraphics[width=\textwidth]{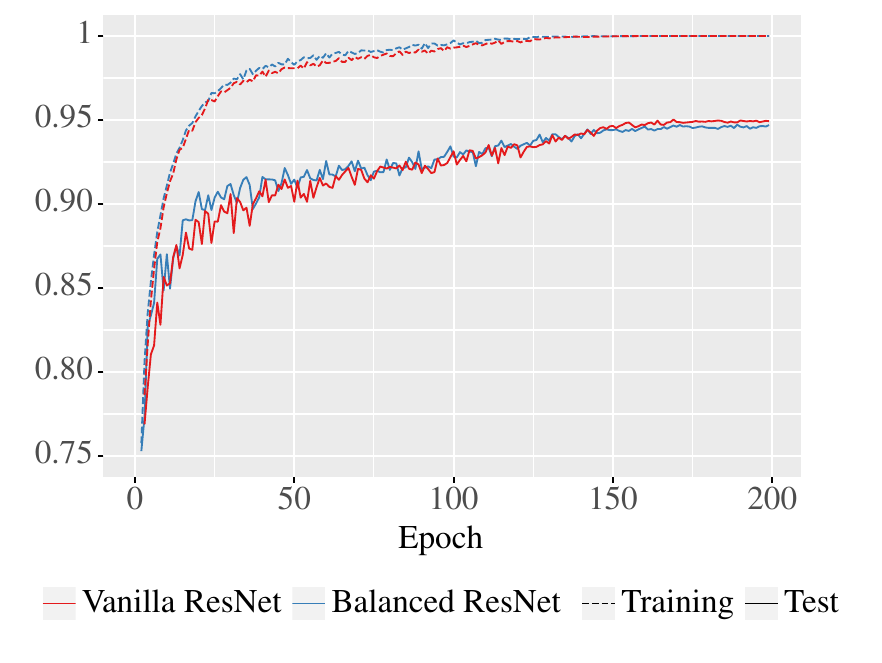}
\caption{
Convolutional ResNet101
}
\label{subfig:C-ResNet101}
\end{subfigure}
\caption{
Pre-activation Convlutional ResNet18 and ResNet101 \citep{he2016identity} 
and corresponding balanced versions. Accuracies on CIFAR-10 are reportedwith learning rate $0.1$, momentum $0.9$, weight decay $5\mathrm{e}{-4}$, 
cosine annealing with $T=200$, 
and batch size $128$. 
}
\label{fig:cifar-resnet101}
\end{figure}

The experiment results are reported in \cref{tb:cifar}
and \cref{fig:cifar-resnet101}. 
Here, we demonstrate that using the Balanced ResNet architecture 
does not reduce the performance of existing training regimes. For the deeper ResNet101, the Balanced ResNet seems to perform slightly better early in training but finishes $0.2\%$ worse at the end of training.

Note that even the deeper ResNet101 tested here is still relatively shallow compared to its ``width''; most of the layers are either $256$ or $1024$ channels by $8 \times 8$ neurons per channel which represent many more neurons in each hidden layer than the depth of the network, $101$. The possible advantage of the Balanced ResNet idea is that it will enable the training  architectures which are even \emph{deeper compared to their width}, which are currently not trainable or difficult to train due to initialization issues. A detailed empirical study exploring this idea is needed to study how Balanced ResNets perform beyond the theory proven in this paper.  

\begin{table}[t]
\centering
\renewcommand{\arraystretch}{1.6}
\begin{tabular}{|l|l|l|}
\hline
Architecture & Test Accuracy \\ \hline
Convolutional ResNet18 & $95.09\%$ \\ \hline
Balanced C-ResNet18 & $95.08\%$ \\ \hline
Convolutional ResNet101 & $94.93\%$ \\ \hline
Balanced C-ResNet101 & $94.70\%$ \\ \hline
\end{tabular}
\caption{
CIFAR-10 \citep{krizhevsky2009learning} 
experiment test accuracies 
with both pre-activation ResNets \citep{he2016identity} 
and the corresponding balanced versions. 
The ResNet18 architectures used a learning rate $0.05$, 
and the ResNet101 used a learning rate of $0.1$. 
Both architectures used momentum $0.9$, 
weight decay $5\mathrm{e}{-4}$, 
cosine annealing with $T=200$, 
and batch size $128$. 
}
\label{tb:cifar}
\end{table}

\subsection{Density Plot Calculations}
\label{subsec:density_calc}

In this section, we describe the calculations required
for plotting \cref{fig:density_merge}.
Firstly, we need to estimate the hypoactivation constant
$C_{\alpha, \lambda}$ from \cref{prop:hypo_and_cov}
using Monte Carlo simulations.
For the choice of $\alpha = \lambda = 1/ \sqrt{2}$,
we find the constant $C_{\alpha, \lambda} \approx -0.876$,
which we use for estimating the mean.
See \cref{fig:hypo_coeff_aff} for further simulations
with varying $\alpha, \lambda$ values, 
and \cref{fig:mean_var_aff} for simulations demonstrating these constants provide accurate prediction for mean and variance. 

\begin{figure}[t]
\centering
\includegraphics[width=0.6\linewidth]{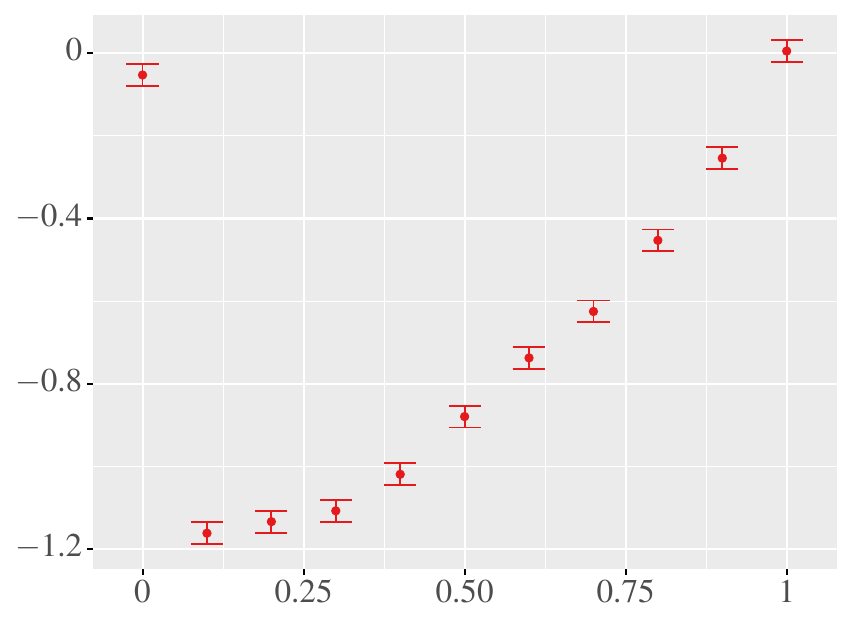}
\caption{
Hypoactivation constant $C_{\al,\la}$
from \cref{prop:hypo_and_cov}
vs. $\lambda^2$.
Here we fix $\alpha^2 + \lambda^2 = 1$,
and $n = d = 150$.
}
\label{fig:hypo_coeff_aff}
\end{figure}

Next, using the choice of $\norm{x_{in}} = 1$
and $\alpha^2 + \lambda^2 = 1$,
we can write
\begin{equation*}
    \ln \norm{\zout}^2
    \dequal
    - \ln n_{in}
    + G
    + \ln \norm{ \vec{Z} }^2 \,.
\end{equation*}

Here we observe that
$\norm{ \vec{Z} }^2 \sim \chi^2(n_{out})$,
and therefore we can compute the density of
$\ln \norm{ \vec{Z} }^2$
with a coordinate change
\begin{equation*}
    \frac{1}{2^{n_{out}/2} \Gamma( n_{out} / 2 ) }
    \exp\left( \frac{n_{out} x}{2} \right)
    \exp\left( \frac{-e^x}{2} \right) \,.
\end{equation*}

Finally, since $G$ and $\ln \norm{\vec{Z}}^2$
are independent,
we can recover the density of $\ln \norm{\zout}^2$
via a numerical convolution
with a Gaussian density of mean $\mathbb{E} G - \ln n_{in}$
and variance $\var(G)$.

To compute the density of the infinite width prediction,
we first observe that in this limit
$\zout \sim \cN(0, \sigma^2 I_{n_{out}})$.
Therefore it's sufficient to simply compute the variance.

To this goal, we follow the calculations of \citet{hayou2021stable} with
the variance recursion formula
(slightly modified to include $\alpha$)
\begin{equation*}
    Q_\ell
    =
        \alpha^2 Q_{\ell-1}
    +
        \lambda^2 \left(
            \sigma_b^2 + \frac{\sigma_w^2}{2}
            \left( 1 + \frac{f(C_\ell)}{ C_\ell } \right)
            Q_{\ell - 1}
        \right)
    = Q_{\ell-1} \,,
\end{equation*}
where we plugged in values of $\sigma_b = 0$,
$\sigma_w = 2$,
and $C_\ell = 1$ and $f(C_\ell) = 0$.
To complete the recursion,
the initial $Q_0 = \frac{1}{n_{in}} \norm{x}^2$.
This implies that
$\zout \sim \cN(0, \frac{1}{n_{in}} I_{n_{out}}) $,
and therefore
$\norm{\zout}^2 \sim \frac{1}{n_{in}} \chi^2(n_{out})$,
which implies $\ln \norm{\zout}^2$ has density
\begin{equation*}
    \frac{1}{2^{n_{out}/2} \Gamma( n_{out} / 2 ) }
    n_{in}^{n_{out} / 2}
    \exp\left( \frac{n_{out} x}{2} \right)
    \exp\left( \frac{- n_{in} e^x }{2} \right) \,.
\end{equation*}

\subsection{Additional Monte Carlo Simulations}
\label{subsec:extra_simulations}

These additional Monte Carlo simulations provide further comparisons between the infinite depth-and-width limit predictions and finite networks. See %
\cref{fig:mean_var_aff}.

\begin{figure}[h]
\centering
\includegraphics[width = 0.8\linewidth]{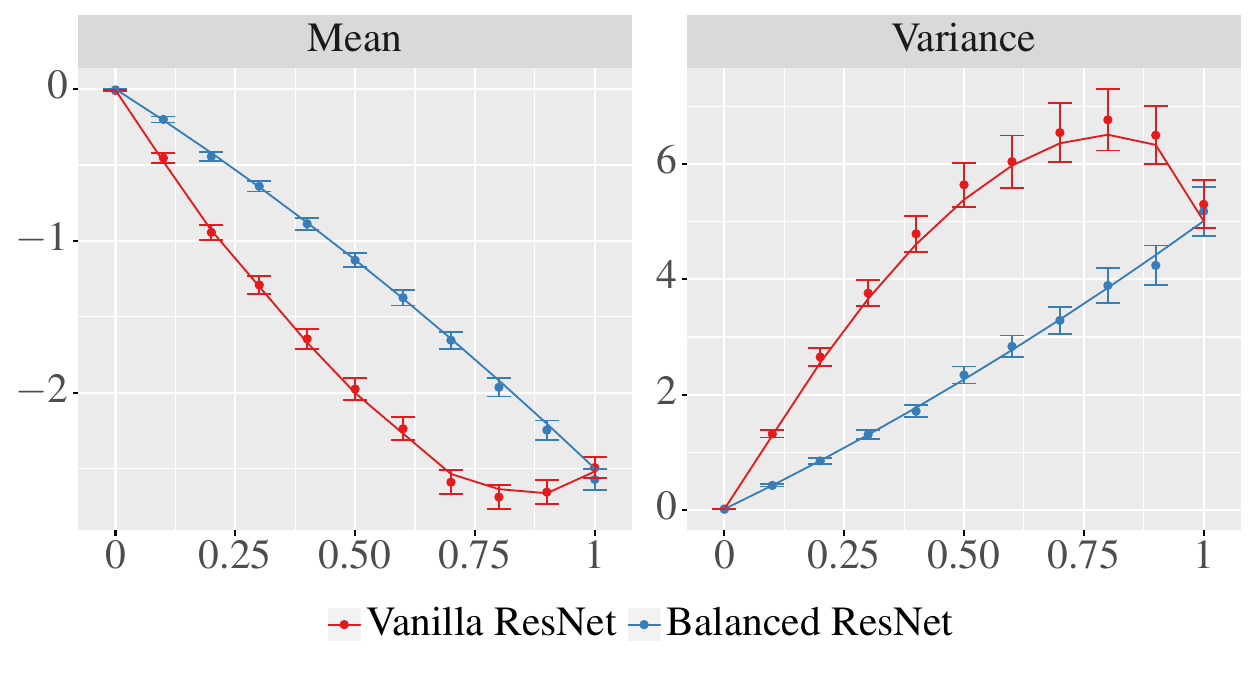}
\caption{Mean and variance of $G(n,d,\al,\lambda)$ 
from \cref{prop:main} 
vs. $\lambda^2$, where $\alpha^2 + \lambda^2 = 1$ and
$n = d = 150$.
The solid lines indicate our prediction using
the infinite-depth-and-width limit. This shows the excess variance appearing in Vanilla vs Balanced ResNets. Also note that fully connected networks with no skip connections, corresponding to $\la=1$, have the highest variance for all balanced ResNets. 
}
\label{fig:mean_var_aff}
\end{figure}

\end{document}